\newtheorem{theorem}{Theorem}
\newtheorem{corollary}{Corollary}
\newtheorem{lemma}{Lemma}[section]
\newtheorem{remark}{Remark}
\newcommand{\bm}{\boldsymbol}
\newcommand{\bsb}{\boldsymbol}
\newcommand{\bsbY}{\boldsymbol{Y}}
\newcommand{\bsbB}{\boldsymbol{B}}
\newcommand{\bsbA}{\boldsymbol{A}}
\newcommand{\bsbC}{\boldsymbol{C}}
\newcommand{\bsbV}{\boldsymbol{V}}
\newcommand{\bsbU}{\boldsymbol{U}}
\newcommand{\bsbD}{\boldsymbol{D}}
\newcommand{\bsbE}{\boldsymbol{E}}
\newcommand{\bsbK}{\boldsymbol{K}}
\newcommand{\bsbL}{\boldsymbol{L}}
\newcommand{\bsbX}{\boldsymbol{X}}
\newcommand{\bsbZ}{\boldsymbol{Z}}
\newcommand{\bsbI}{\boldsymbol{I}}
\newcommand{\bsbS}{\boldsymbol{S}}
\newcommand{\bsbF}{\boldsymbol{F}}
\newcommand{\bsbP}{\boldsymbol{P}}
\newcommand{\bsbW}{\boldsymbol{W}}
\newcommand{\bsbSig}{\boldsymbol{\Sigma}}
\newcommand{\bsbDelta}{\boldsymbol{\Delta}}
\newcommand{\bsbGamma}{\boldsymbol{\Gamma}}
\newcommand{\bsbc}{\boldsymbol{c}}
\newcommand{\bsbx}{\boldsymbol{x}}
\newcommand{\bsby}{\boldsymbol{y}}
\newcommand{\bsbs}{\boldsymbol{s}}
\newcommand{\bsbv}{\boldsymbol{v}}
\newcommand{\bsbb}{\boldsymbol{b}}
\newcommand{\bsba}{\boldsymbol{a}}
\newcommand{\bsbf}{\boldsymbol{f}}
\newcommand{\bsbalpha}{\boldsymbol{\alpha}}
\newcommand{\bsbe}{{\boldsymbol{\eta}}}
\newcommand{\bsbmu}{{\boldsymbol{\mu}}}
\newcommand{\bsbdelta}{\boldsymbol{\delta}}
\newcommand{\bsbtau}{\boldsymbol{\tau}}
\DeclareMathOperator*{\argmin}{argmin}
\newcommand{\Proj}{{{\mathcal P}}}
\newcommand{\EE}{\,\mathbb{E}}
\newcommand{\EP}{\,\mathbb{P}}
\DeclareMathOperator{\vect}{\mbox{vec}\,}
\newcommand{\rd}{\,\mathrm{d}}
\newcommand{\rank}{\mathrm{rank}}
\newcommand{\breg}{{\mathbf{\Delta}}}
\newcommand{\Breg}{{\mathbf{D}}}
\title[]{Supervised Multivariate Learning with Simultaneous \\ Feature Auto-grouping and Dimension Reduction}
\author[Yiyuan She {\it et al.}]{Yiyuan She}
\address{Department of Statistics, Florida State University,
        Tallahassee,
        USA.}
\email{yshe@stat.fsu.edu}
\author[]{Jiahui Shen}
\address{Department of Statistics, Florida State University,
        Tallahassee,
        USA.}
\author[]{Chao Zhang}
\address{Center for Information Science, Peking University,
        Beijing,
        China.}
\begin{document}

\begin{abstract}
        Modern  high-dimensional methods often  adopt the ``bet on sparsity'' principle, while in supervised multivariate learning statisticians  may face ``dense'' problems with a large number of nonzero coefficients. This paper proposes a novel clustered reduced-rank learning (CRL) framework  that imposes two joint matrix regularizations to automatically group the features in constructing   predictive factors. CRL    is more interpretable than  low-rank modeling and relaxes  the stringent sparsity  assumption in variable selection.  In this paper, new  information-theoretical limits are presented to     reveal the intrinsic cost of seeking for clusters, as well as   the blessing from dimensionality in multivariate learning. Moreover, an efficient optimization  algorithm is developed, which  performs subspace learning and clustering  with guaranteed convergence. The obtained  fixed-point estimators,  though not necessarily globally optimal, enjoy the desired statistical accuracy beyond the standard likelihood setup under some regularity conditions.  Moreover, a new kind of information criterion, as well as its scale-free form, is proposed   for  cluster and rank   selection,  and has a rigorous theoretical support  without assuming an infinite sample size.         Extensive simulations and real-data experiments demonstrate the statistical accuracy and interpretability  of the proposed method.
\end{abstract}

\keywords{clustering, low-rank matrix estimation, nonasymptotic statistical analysis, nonconvex optimization, minimax lower bounds,   information criterion}

\section{Introduction}
\label{introduction}
Modern statistical  applications create an urgent need for analyzing and interpreting  high-dimensional data with low-dimensional structures.
This paper works in a supervised multivariate setting with $n$ samples for $m$ responses and $p$ features (or predictors): $\bsbY = [\bsby_1, \ldots, \bsby_m] \in \mathbb R^{n\times m}$ and $\bsbX=[\bsbx_1, \ldots, \bsbx_p]\in \mathbb R^{n\times p}$. Given  a   loss    $l_0$, not necessarily a negative log-likelihood function, one can solve the following optimization problem to model the set of responses of interest\begin{align}
\min_{\bsbB\in \mathbb R^{p\times m}} l_0 (\bsbX \bsbB; \bsbY).
\end{align}
Here,      the unknown coefficient matrix   $\bsbB  = [\bsbb_1, \ldots, \bsbb_p]^T $ has $pm$  unknowns, with     $\bsbb_j$ summarizing  the contributions of the $j$-th predictor to all the responses.

The modern-day challenge comes from large $p$ and/or $m$. Statisticians often prefer selecting  a small subset of features---for example, a group-$\ell_1$ penalty $\lambda \sum \| \bsbb_j\|_2$ \citep{yuan2006model} can be added in the criterion to promote row-wise sparsity in  $\bsbB$, which results in  a more interpretable model  than using an  $\ell_2$-type penalty $\lambda \| \bsbB\|_F^2$. However, with a large $m$, there may exist  few features that are completely irrelevant to the whole set of responses.   One may perform   variable selection in a transformed space rather than the original   space  \citep{johnstone2009consistency},  but  how to find a proper transformation to reveal sparsity is problem-specific.

Perhaps a natural alternative  is to make the coefficients   form a relatively small number of groups,  within each of which all coefficients
are forced to be equal. 
 This is  referred to as ``{equisparsity}'' in \cite{she2010sparse}. In the general multivariate  setup,  instead of requiring a large number of zero rows in the true signal $\bsbB^*$, we  assume that it has  relatively few distinct row patterns   $\bsbb_{(1)}^{* T}, \bsbb_{(2)}^{* T}, \ldots,   \bsbb_{(q)}^{* T}$. Then, from
 \begin{align}\bsbX \bsbB^* = \bsbx_1 \bsbb_1 ^{*T}+\cdots + \bsbx_p \bsbb_p^{*T}= \big(\sum_{j\in \mathcal G_1} \bsbx_j \big)  \bsbb_{(1)} ^{*T}+   \cdots +  \big(\sum_{j\in\mathcal  G_q} \bsbx_j \big)  \bsbb_{(q)}^{*T},\end{align}    the features sharing the same  $\bsbb_{(k)}^{* }$ ($1\le k \le q$) are automatically    grouped,  based on their contributions to $\bsbY$.  The feature grouping is as interpretable as feature selection and  can offer  further parsimony, since the latter only targets  the set of irrelevant features with $\bsbb_j^{*}=\bsb0$.

The problem of how to cluster the unknown coefficient matrix to achieve the best predictability  falls into  supervised learning, where  both $\bsbY$ and $\bsbX$  are available. This is  in contrast to conventional clustering tasks for  unsupervised learning that   operate  on a single data matrix.  But it shares the same computational challenge in large dimensions.  A nice but sometimes unnoticed   fact is that  if $p$   points form $q$ clusters in a large  $m$-dimensional vector space, then the clusters can  be revealed in just a   $q$-dimensional subspace, such as the one spanned by the cluster centroids. In real  data analysis, it is not rare that the dimension of the cluster centroid space is much less than $q$ (even as low as 2 or 3). This motivates us to  perform   simultaneous dimension reduction to  ease the job of clustering.

Specifically, we  propose to  including an additional  low-rank constraint, and the resulting  jointly regularized form  provides an extension of the   celebrated reduced rank regression (RRR, \cite{izenman1975reduced}). RRR assumes that the rank of the true $\bsbB^*$ is no more than a small number $r$, or equivalently, $\bsbB^* =\bsbB_1  \bsbB_2^{ T}$ with each $\bsbB_i $ having   $r$ columns. Once locating a proper loading matrix $\bsbB_1 $,   the final model amounts to fitting $\bsbY$ on $r$ factors formed by $\bsbX \bsbB_1$. Unfortunately, it is well known that  the factor construction from a large number of features lacks  interpretability.  Our  proposal of clustered rank reduction  enforces row-wise equisparsity in $\bsbB_1$ (or the overall coefficient matrix)  so that  in extracting  $r$  predictive factors,   the original features can be automatically  consolidated into  $q$ groups at the same time.

\begin{figure}[h!]
        \centering
        \includegraphics[width=.75\textwidth, height=3.45in]{./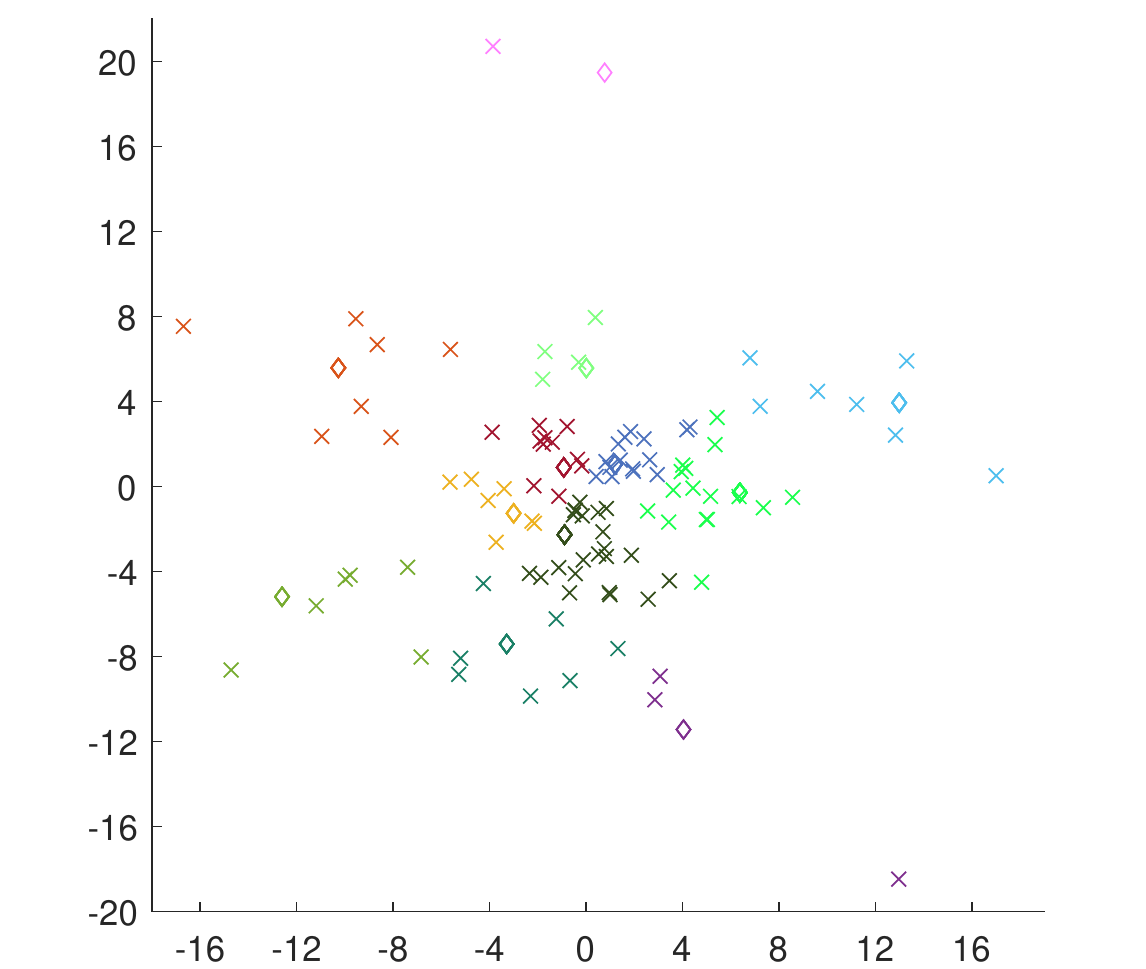}
        \caption{\small Yeast cell cycle data: the loading weights, denoted by `$\times$', by fitting  an RRR with $r=2$.  The  many noticeable nonzeros imply that a number of  TFs    have effects on  the gene expressions during the cell cycle process. Here, the 12 diamonds       represent the consolidated  loadings  obtained by the proposed    CRL method.\label{TFex}}
\end{figure}

It is perhaps best to illustrate the idea on    a real-world example.  The yeast cell cycle data used in   \cite{chun2010sparse}
studies transcription factors (TFs) related to gene expression over time. In addition to the predictor matrix $\bsbX\in \mathbb R^{542\times 106}$ with $106$ TFs    collected on $542$ genes,  a response  matrix $\bsbY\in \mathbb R^{542\times 18}$ containing RNA levels measured on the same  genes  is available at $18$ time points. A   naive multivariate regression    would have about  2k  unknowns,     and so we fit an RRR with  $r=2$ and plot      the  loadings of the 106  TFs in   Figure \ref{TFex}. The fact that  most of the loadings are apparently  nonzero makes variable selection less effective in reducing the  complexity of the model. Indeed, because of the {high-quality} experimental design by biologists, quite a few TFs seem to have effects on  the gene expressions during the cell cycle process.

 Intuitively,    clustering the  TFs' loadings  would offer a  significant   reduction of the number of free parameters. Note that to enhance interpretability and avoid ad-hoc tuning,  we force the
estimates within a group to be equal.
A  stagewise procedure  performing    estimation and  clustering in two distinct steps would be      suboptimal; we aim to solve the problem \textit{as a whole}. The diamonds in Figure \ref{TFex}      show the loadings obtained by the proposed method that  simultaneously groups the features in performing dimension reduction.

Compared with the low-rank modeling, the new parsimonious model not only boosted the prediction accuracy   by  23\% (over 200 repeated training-test splits with 50\% for training and 50\% for test), but offered some  meaningful TF groups. For example, that  {\tt ACE2}, {\tt SWI5} and {\tt SOK2} fall  into the same group and share the same set of large coefficients  provides  useful biological insights, as it is well known that {\tt ACE2} and {\tt SWI5} are paralogs, meaning that they are related to each other through a gene duplication event and  are highly conserved in yeast cell cycle gene progression, and   according to \cite{Pan2000}, with regards  to nitrogen limitation, {\tt SOK2},  along with {\tt ACE2} and {\tt SWI5}, is essential in the pseudohyphal  growth of yeast cells.  Our analysis also provides a cluster with three TFs, namely {\tt HIR1}, {\tt STP2} and {\tt SWI4},  all of which are chromatin-associated transcription factors   involved in regulating the expression of multiple genes at distinct phases of yeast cells \citep{Lambert2010}.

This paper studies simultaneous feature auto-grouping and dimension reduction, and attempts to tackle some related challenges   in methodology, theory, and computation.  Our main contributions are as follows.

\begin{compactitem}
\item
A novel clustered reduced-rank learning (CRL) framework is proposed, which imposes joint matrix regularizations through a convenient SV formulation. It    relaxes  the    assumption of sparsity and  offers improved   interpretability compared with vanilla low-rank modeling. The concurrent dimension reduction substantially eases  the task of clustering in high dimensions.

\item  Universal information-theoretical limits       reveal    the intrinsic cost of seeking for clusters, as well as the benefit of accumulating a large number of responses in multivariate learning, which seems to be largely  unknown   in the literature before. 

\item Tight error bounds   are shown for CRL  beyond the standard likelihood setup and      justify its  minimax   optimality in some common scenarios. These nonasymptotic results are strikingly different from those  for sparse learning  and are the first of their kind. Our theoretical studies  favor CRL   over variable selection   when the numbers of relevant features and irrelevant features are of the same order,   or when   the number of responses is greater than or equal to the number of  features up to a multiplicative constant.

\item
An efficient optimization-based algorithm is developed, which performs   simultaneous subspace pursuit and clustering with   guaranteed convergence.  The resulting  fixed-point estimators, though  not necessarily globally optimal, achieve the desired statistical accuracy under some regularity conditions.

\item  A  predictive information criterion is proposed  for  joint cluster and/or rank   selection. Its  brand new model complexity notion differs  from  existing information criteria, but   has a rigorous theoretical support  in finite samples. A  scale-free form is further proposed to  bypass the noise scale estimation.\\
\end{compactitem}

 The rest of the paper is organized as follows. Section \ref{sec_framework} describes in detail the clustered reduced-rank learning framework to automatically group  the predictors in building a predictive low-rank  model. Section \ref{sec_theory} shows some   universal minimax lower bounds and  tight upper bounds of CRL, from which one can conclude that CRL enjoys minimax  optimality if the number of clusters is at most polynomially large in the rank. The obtained rates
differ {substantially}   from the standard results assuming  sparsity, and interestingly, having a large number of responses seems to be a {blessing}.
   Section \ref{sec:comp}  develops an   iterative  and easy-to-implement algorithm by linearization and block coordinate descent, where Procrustes rotations and clusterings are performed repeatedly with guaranteed convergence. A new predictive information criterion, together with its scale-free form, is proposed for model selection in the context of clustered rank reduction.   Section \ref{sec_dataexp}   shows some real data analysis.     We conclude in Section \ref{sec:summ}.
The appendices provide all technical details  and more computer experiments.

\emph{Notation and symbols}. The following notation and symbols will be used. 
Given a differentiable $f$, we use $\nabla f$ to denote its gradient, and $f$ is called $\mu$-strongly convex  if
$
f(\bsbe')\ge f (\bsbe) + \langle \nabla f(\bsbe), \bsbe' - \bsbe\rangle + \mu \| \bsbe' - \bsbe\|_2^2/2,  \forall \bsbe, \bsbe',
$ 
and $L$-strongly smooth if
$
f(\bsbe')\le f (\bsbe) + \langle \nabla f(\bsbe), \bsbe' - \bsbe\rangle + L \| \bsbe' - \bsbe\|_2^2/2, \ \forall \bsbe, \bsbe'. $
   In particular,   $f(\bsbe) = \| \bsbe - \bsby\|_2^2/2$ is 1-strongly convex.
For any $\bsbA, \bsbB \in \mathbb{R}^{n\times m}$, we denote by $\langle \bsbA, \bsbB  \rangle$ the inner product of $\bsbA, \bsbB$. Given $\bsbA \in \mathbb{R}^{n \times m}$,   $\bsbA^+$ denotes its Moore-Penrose inverse,  $\rank(\bsbA)$ denotes its rank, and when $n=m$, $\sigma_{\max}(\bsbA)$ denotes its maximal eigenvalue. We use $\bsbA[i,j]$ to represent the $(i,j)$-th element in $\bsbA$ and $\bsbA[i,:]$ (or $\bsbA[:,i]$) to represent the $i$-th row (or column) of $\bsbA$.  Some conventional matrix norms of $\bsbA$ are  as follows: $\|\bsbA \|_F$ denotes the Frobenius norm, $\|\bsbA \|_2$ the spectral norm, $\|\bsbA\|_*$  the nuclear norm, and   $\| \bsbA\|_{2, \infty} = \max_{1\le j \le p} \| \bsba_j\|_2$ for $\bsbA = [\bsba_1, \ldots, \bsba_p]^T$.
The constants denoted by $C$, $c$  are not necessarily the same at each occurrence.
Finally,    $a\lesssim b$ means $a\le c b$  up to a multiplicative positive  constant $c$, and $a\asymp b$ means $a\lesssim b$ and $b\lesssim a$.


\section{Clustered Reduced Rank Regression}
\label{sec_framework}


This section   focuses on the   {quadratic loss} commonly used in multivariate regression,$$
l_0(\bsbX \bsbB; \bsbY) = \| \bsbY - \bsbX \bsbB\|_F^2/2.
$$
The discussions in this important case will lay   out a foundation for     computation and theoretical analysis  in later sections regarding a general  loss.

Motivated by Section \ref{introduction}, rather than  assuming   that most features are irrelevant to  the responses,     we propose to enforce   row-wise equisparsity  in $\bsbB$ so that we can  group the features in modeling  $\bsbY$.
Sparsity is just a special case of equisparsity, and clustering   the nonzero values can   gain further parsimony. 
Meanwhile, we would like to regularize the multivariate model with low rank,  making it possible to project  the data into a much smaller subspace to reveal the row patterns of $\bsbB$.

To mathematically formulate the problem, we use   $\| \bsbb \|_{\mathcal C}$ to denote the number of distinct elements in vector $\bsbb$, and $\| \bsbB \|_{2, \mathcal C}$   the number of distinct rows   of $\bsbB$. 
 Then,  our clustered reduced-rank learning  ({CRL}) involves the minimization of the loss criterion with two constraints
$ \rank(\bsbB)\le r,   \|\bsbB\|_{2,\mathcal C}\leq q$. The joint regularization formulation poses significant  challenges  in  both computation and theory.

A trick to  {decouple} the two intertwined constraints   is to write   $\bsbB = \bsbS \bsbV^T$ with $\bsbV$ an $m\times r$ column-orthogonal matrix. The ``SV'' formulation will be used in optimization as well. Since   $\bsbS = \bsbB \bsbV$, we get   $\| \bsbS\|_{2,\mathcal C} = \| \bsbB\|_{2,\mathcal C} $  and thus   an equivalent CRL  problem with separate  constraints on $\bsbS$ and $\bsbV$:
\begin{equation}
\min_{\bsbS \in \mathbb{R}^{p\times r},\bsbV \in \mathbb{R}^{m\times r}} \|\bsbY-\bsbX \bsbS \bsbV^T\|_F^2  \quad \mbox{s.t.}\, \ \ \bsbV^T\bsbV=\bsbI,\  \|\bsbS\|_{2,\mathcal C}\leq q. \label{supervised_CRL}
\end{equation}
In \eqref{supervised_CRL},  $q$ ($1\le q\le p$) controls the number of feature groups, and $r$ ($ r \leq m$) is  the dimension of the subspace to pursue  coefficient clustering. The joint tuning of the regularization parameters $q$ and $ r$ is an important task, too, and a data-adaptive solution with   theoretical support will be given in Section \ref{subsec:tuning}. Let $\bsbV=[\bsbv_1, \ldots, \bsbv_r]$ and $ \bsbS=[\bsbs_1,\ldots,\bsbs_r]$.  We call $\bsbs_i \ (1 \leq i \leq r)$ the \textit{clustering vectors}.
 An intercept term  $\boldsymbol{1}\bsbalpha^T$   can be added to the loss to help control  the scale of   clustering vectors. Equivalently, for regression, one can simply center $\bsbY, \bsbX$ columnwise in advance. We also suggest standardizing the predictors beforehand, as done in other regularization methods like LASSO and ridge regression, unless all  the predictors are on the same scale.

We discuss   a special case of \eqref{supervised_CRL} to provide a more intuitive understanding on the mathematical problem, which   can also be used to develop a sequential estimation procedure. Letting $r=1$,  \eqref{supervised_CRL} becomes $\min_{\bsbs\in \mathbb{R}^{p},\bsbv\in \mathbb{R}^{m}} \|\bsbY- \bsbX\bsbs \bsbv^T\|_F^2      \mbox{ s.t. }   \| \bsbv\|_2^2 =1, \| \bsbs\|_{\mathcal C}\le q $.
Reparematrize  $\bsbs = d \bsbs^{\circ}$ with $d \in \mathbb R$ and $\bsbs^\circ$ satisfying $\| \bsbX\bsbs^{\circ} \|_2=1$. 
Simple algebra shows that   the problem is minimized at     $d=\langle\bsbX \bsbs^{\circ}\bsbv^T,\bsbY \rangle$ and    $\bsbv=\bsbY^T\bsbX\bsbs^{\circ}/\| \bsbY^T\bsbX\bsbs^{\circ}\|_2$. Therefore,  \eqref{supervised_CRL} reduces to \eqref{rank1Prob2} when $r=1$:
\begin{align}
\max_{ \bsbs^{\circ}\in {\mathbb R}^p}  {\bsbs^{\circ}}^T (\bsbX^T{\bsbY \bsbY^T}\bsbX)  \bsbs^{\circ}  \ \ \mbox{ s.t. }  \    \bsbs^{\circ T}  \bsbX^T \bsbX \bsbs^{\circ} =1, \| \bsbs^\circ\|_{\mathcal C}\le q .
\label{rank1Prob2}
\end{align}
Without the last constraint,  \eqref{rank1Prob2} is   a  \textit{generalized  eigenvalue    decomposition}  problem. The regularization enforces equisparsity in     estimating the   generalized eigenvector. Though \eqref{rank1Prob2}  is  intuitive,  \eqref{supervised_CRL}  is  much more amenable to optimization.

The regularization   admits other   variants via  the SV-formulation. For example, with
    $\bsbB=\bsbS\bsbV^T = \bsbs_1\bsbv_1^T + \cdots + \bsbs_r\bsbv_r^T$,   one can    pursue equisparsity in each  component $\bsbs_i \bsbv_i^T$ or  $\bsbs_i$:
\begin{equation}
\min_{\bsbS\in \mathbb{R}^{p\times r},\bsbV\in \mathbb{R}^{m\times r}}  \|\bsbY- \bsbX\bsbS \bsbV^T\|_F^2  \quad \mbox{s.t.} \  \ \bsbV^T\bsbV=\bsbI,\  \|\bsbs_i\|_{\mathcal C}\leq q^e,\  1 \leq i \leq r . \label{supervised_CRL_rankwise}
\end{equation}
This rankwise CRL allows each feature to belong to more than one cluster as  $r>1$. In comparison, the constraint in   \eqref{supervised_CRL} offers a uniform control.
 Unless otherwise mentioned, we will  focus on the row-wise problem \eqref{supervised_CRL},      but our algorithm  applies to both.

\begin{remark} Alternative formulations  of CRL. \upshape
Given a positive definite matrix $\bsbGamma$ of size $m \times m$, consider a weighted criterion:
        $
        \min_{(\bsbS,\bsbV) \in  \mathbb{R}^{p\times r} \times \mathbb{R}^{m\times r}} \allowbreak  \mbox{Tr}\{ ( \bsbY-\bsbX\bsbS\bsbV^T)\bsbGamma( \bsbY-\bsbX\bsbS\bsbV^T)^T \}   \mbox{ s.t. }     \bsbV^T\bsbV=\bsbI, \| \bsbS\|_{{2,C}}\le q.
        $ 
        Then, for  $\bsbB =\bsbS \bsbV^T \bsbGamma^{ {1}/{2}}$, we have  $\rank(\bsbB)\le r$ and    $ \| \bsbB\|_{2,C} =  \|\bsbS\|_{2,C}$. Applying the SV representation to $\bsbB$ gives an equivalent  problem (with $\bsbS, \bsbV$ redefined)
        \begin{equation} \label{weighted_SIC3}
        \min_{(\bsbS,\bsbV) \, \in \, \mathbb{R}^{p\times r} \times \mathbb{R}^{m\times r}} \frac{1}{2}\| \bsbY\bsbGamma^{{1}/{2}}  -\bsbX\bsbS\bsbV^T \|_F^2    \ \mbox{ s.t. } \  \bsbV^T\bsbV=\bsbI, \| \bsbS\|_{2,C}\le q.  \end{equation}
        \eqref{weighted_SIC3} is of the same form of    \eqref{supervised_CRL} with an adjusted response matrix.

        Another  related \textit{projected} form    directly measures  discrepancy in the projected space:
         \begin{equation}
        \min_{(\bsbS,\bsbA ) \, \in \, \mathbb{R}^{p\times r} \times \mathbb{R}^{m\times r}} \frac{1}{2}\| \bsbY \bsbA  -\bsbX\bsbS \|_F^2 \mbox{   s.t. }   (\bsbY\bsbA)^T \bsbY \bsbA=n\bsbI, \| \bsbS\|_{2, \mathcal C}\le q,
         \end{equation}
where we assume $r\le \rank(\bsbY)$.
        Let  $\bsbSig_{ Y}= \bsbY^T\bsbY/n=\bsbU \bsbD \bsbU^T$ with the diagonal matrix $\bsbD$ containing   $\rank(\bsbY)$ nonzero eigenvalues, $\bsbW = \bsbD^{1/2}\bsbU^T \bsbA$, and $\bsbB = \bsbS \bsbW^T \bsbD^{1/2} \bsbU^T$. Because    $ \|\bsbY \bsbA - \bsbX \bsbS\|_F^2 = \mbox{Tr} \{(  \bsbY -\bsbX\bsbB) \bsbSig_Y^{+}  ( \bsbY -\bsbX\bsbB)^T\} +nr-n  \rank(\bsbY)$ (cf. Lemma \ref{lem:ccatoweighted}), $\rank(\bsbB)\le r$,  and $\bsbB $, $\bsbS $ have the same row patterns,
we see that         the projected form simply amounts to  taking   $\bsbGamma= \bsbSig_{Y}^{+}$. 
        Popular choices of $\bsbGamma$ are   based on the  covariance matrix of $\bsbY$. 
See \cite{SheiGGL2020} for a proposal to account for dependence in the case of a general loss. A further  topic is   to  estimate  the high-dimensional   covariance matrix and mean matrix jointly,   but it  is beyond the scope of the current paper and we regard   $\bsbGamma$ as known. Then, based on \eqref{weighted_SIC3},
        one simply needs to ``whiten'' $\bsbY$ by  $
        \bsbY\bsbGamma^{\frac{1}{2}}
        $  
        beforehand.
\end{remark}

\begin{remark} \label{rem:pairwisepen} Pairwise-difference  penalization.         \upshape
        An alternative   idea, following   \cite{she2010sparse} and \cite{chi2015splitting}, is  to penalize the pairwise row-differences   of $\bsbB = [  \bsbb_1, \ldots,  \bsbb_p]^T$:
\begin{align*}
\sum_{1\le j< j'\le p} P(\|  \bsbb_j - \bsbb_{j'}\|_2; \lambda), 
\end{align*} where $P$ is a sparsity-inducing  function.  This type of regularization is however not of our primary interest,  due to its computational burden, suboptimal error rate, and difficulties in parameter tuning.
See Appendix \ref{app:pairwise} and Theorem \ref{th:pairwisel0} for more details.

\end{remark}

\begin{remark}\label{rmk:unsup} Unsupervised learning.  \upshape  Supervised learning is the focus of  our work,  but when there is a single data matrix $\bsbY\in \mathbb R^{n\times m}$, we can set $\bsbX=\bsbI$ in \eqref{supervised_CRL} to cluster its rows for unsupervised learning. (Substituting $\bsbY^T$ for $\bsbY$  offers clustered PCA as an alternative to sparse PCA.) Similar to the derivation in the rank-1 case, we can evaluate the optimal $\bsbV$ to get   $\min_{\bsbS \in \mathbb R^{n \times r}} \|  \bsbS  \|_F^2/2 - \|\bsbY^T \bsbS\|_*$ s.t. $\|\bsbS\|_{2, \mathcal C}\le q $, or       $\max_{ \| \bsbS^\circ\|_F=1}  \|\bsbY^T \bsbS^\circ\|_*^2 /2$  s.t. $\|\bsbS^\circ\|_{2, \mathcal C}\le q $ via $\bsbS = d\bsbS^\circ$. (In the more general supervised setup, we can show that  $\bsbS$ solves $\min_{\bsbS\in \mathbb R^{p\times r} } \allowbreak \|\bsbX \bsbS   \|_F^2/2 - \| \bsbY^T \bsbX \bsbS\|_* $ s.t. $\|\bsbS\|_{2, \mathcal C}\le q $.) Because  $\|\bsbY^T \bsbS\|_* = \mbox{Tr}\{ (\bsbS^T \bsbY\bsbY^T \bsbS)^{1/2}\} $,     CRL's clustering vectors  depend on $\bsbY$ through its sample inner products only. One can then introduce  a \textit{kernel}  CRL by substituting a positive semi-definite   $\bsbK$   for  $\bsbY \bsbY^T$. The desired clusters can still be obtained by solving the SV-form problem, with a   suitable pseudo-response constructed from  the kernel matrix.    
 Let's  consider two special cases    to contrast the unsupervised CRL with some related methods. (a) No data projection, i.e.,   $r=m$.  Then we can show that K-means is an algorithm to solve the  problem  (cf. Section \ref{subsec:algdesign}). Modern implementations of K-means    make good use of seeding and   can   obtain a decent   solution in {low} dimensions, which  will   assist the optimization of      CRL,   owing to its low-rank nature. Of course, as  K-means   operates in the input space, it   can be     ineffective    for large $m$.   
        (b)     No   equisparsity regularization. In this case, CRL   reduces to spectral clustering (cf. Appendix \ref{appsub:impldetails}). 
        Giving up the equisparsity regularization    simplifies the computation significantly, but spectral clustering, as well as other similarity-motivated procedures,  performs dimension reduction and clustering in two distinct steps. It would be less greedy to perform both steps {simultaneously}. In Section \ref{sec:comp}, we will see  that      the CRL algorithm can integrate   clustering   and subspace learning  to   solve the problem  as a whole.
\end{remark}

\section{Nonasymptotic  Statistical Analysis of Clustered Reduced-rank Learning}
\label{sec_theory}
Rigorous theoretical guarantees must be provided to justify  the proposed clustered rank reduction method. There is a   big literature gap  to fill in this regard. For example, how many samples are needed for signal recovery by adopting  the new notion of structural parsimony? In which situations will pursuing equisparsity be advantageous over performing  variable selection? Is it always  necessary to obtain a globally optimal solution in the nonconvex setup? The answers to these questions  seem   to be largely  unknown.

In this section, we go beyond the regression setup and consider  a  loss $l_0(\bsbX \bsbB; \bsbY) $ that is defined on the systematic component $\bsbX \bsbB$ with $\bsbY$ as parameters. Here,   $\bsbB$ is unknown and $\bsbX$ and $ \bsbY$ are observed, and so we occasionally omit the dependence on  $\bsbY$. Assume that $l_0$ is differentiable with respect to  $\bsbX \bsbB$.   Our tool for  tackling    a general loss  is the \textit{generalized   Bregman function} \citep{SheBregman}: given a      differentiable function $\psi$,
\begin{equation}
\breg_\psi( \bm\alpha,\bm\beta) := \psi(\bm\alpha) - \psi(\bm\beta) - \langle\nabla\psi(\bm\beta), \bm\alpha-\bm\beta\rangle. \label{genbregdef}
\end{equation}
If   $\psi$ is also strictly convex, $\breg_\psi(\bm\alpha,\bm\beta)$ becomes the standard Bregman divergence denoted by $\mathbf D_\psi(\bm\alpha,\bm\beta)$  \citep{Bregman1967}.     A simple  example is $\Breg_2(\bsb{\alpha},\bsb{\beta}) :=  \|\bsb{\alpha}-\bsb{\beta}\|_2^2/2$,  associated with $\psi=\|\,\cdot\, \|_2^2/2$, and its matrix version is $\Breg_2 (\bsbA, \bsbB) = \| \vect (\bsbA) - \vect(\bsbB)\|_2^2 /2 = \| \bsbA - \bsbB\|_F^2/2$. In general, $\breg_\psi(\bm\alpha,\bm\beta)$ may not be symmetric, and we define   its symmetrized version   by 
$\bar{\bm\Delta}_\psi(\bm\alpha,\bm\beta):=(\bm\Delta_\psi(\bm\alpha,\bm\beta)+\bm\Delta_\psi(\bm\beta,\bm\alpha))/2.$ 

Introducing the notion of noise in  the non-likelihood setup is  another essential component, since   $  l_0 $    may not correspond to    a distribution function.  We define the \textit{effective noise} associated with the statistical truth $ \bsbB^* $   by
\begin{equation} \label{noise-def}
\bsbE = -\nabla l_0(\bm X\bsbB^*; \bsbY).
\end{equation}
So having a zero-mean noise means that the risk vanishes at the statistical truth, assuming we can  exchange the  gradient and expectation. In a  canonical  generalized linear model (GLM) with   cumulant function  $b(\cdot)$ and $g=(\nabla b )^{-1}$  as the canonical link (cf. Appendix \ref{app:proofs}), the (unscaled) loss can be represented by $  -\langle\bm Y, \bsbX \bsbB\rangle +   b(\bsbX \bsbB)$, and by matrix differentiation, $$\bsbE  = \bsbY - g^{-1}(\bsbX\bsbB^*) = \bsbY-\mathbb E(\bsbY),$$ or     $\bsbE = \bsbY-\bsbX\bsbB^*  $ in   regression.
Unless otherwise specified, we assume that $\vect (\bsbE)$ is a \textit{sub-Gaussian} random vector with mean zero and scale bounded by $\sigma$ (namely, all   marginals $\langle \vect(\bsbE), \bm\alpha \rangle$   satisfy $\|\langle \vect(\bsbE), \bm\alpha \rangle\|_{\psi_2}\leq \sigma \|\bm\alpha \|_2$, $\forall \bm\alpha\in \mathbb R^{p}$, where $\| \cdot\|_{\psi_2} = \inf  \big\{ t>0: \EE \exp[( \cdot\,  /{t})^2]\le 2 \big\},
$ cf. \cite{van1996weak}). Note that  the components of  $\bsbE$ need  not be independent. Sub-Gaussian noises   are   typical  in regression and classification problems, since Gaussian and bounded random variables are  sub-Gaussian.  Yet   sub-Gaussianity   is not   critical   for    our analysis.

    This section focuses on  row-wise equisparsity   $\| \bsbB\|_{2, \mathcal C}$. It turns out that the two measures   $\|\cdot\|_{2, \mathcal C}$ and $\rank(\cdot)$ can effectively bound the  stochastic terms arising from CRL. Because it is be difficult in  present-day applications to tell whether   the sample size, relative to the problem dimensions, is large enough  to apply asymptotics,    all of our investigations will be  nonasymptotic.
\subsection{Universal minimax lower bounds}
\label{subsec:minimax}
The first question one must answer is how   small the  error could be under equisparsity with possibly low rank.   We derive new minimax lower bounds     to address the question.  Let $I(\cdot)$ be an arbitrary nondecreasing   function with $I(0)=0, I\not\equiv 0$; some particular examples are $I(t) = t$ and $I(t) = 1_{t\ge c}$. 

\begin{theorem} \label{th:minimax}
    Assume $\bsbY | \bsbX \bsbB^*$ follows  a distribution in the regular  {exponential   family} with dispersion $\sigma^2$ with      $l_0$ the associated  negative log-likelihood  function (cf. Appendix \ref{app:proofs} for details). Define a signal class by
        \begin{align}
        \bsbB^*\in \mathcal S(q, r)= \{\bsbB \in \mathbb R^{p\times m}:   \|\bsbB \|_{2, \mathcal C}\leq q, \,\rank(\bsbB)\leq r \},\label{modelass}
        \end{align} where      $p   \ge q\ge   r \ge 2$,    $r(q\wedge  \rank(\bsbX)+m - r )\ge 4$. Let $b, \zeta$ be any integers satisfying
        \begin{align}
        \sum_{i=0}^{\zeta} {r\choose i} (b-1)^i\ge q,
        \end{align} with $
        b\ge 2 ,  1\le \zeta \le r$, and define  a complexity function\begin{align}
        P (q, r) =   (q +m  ) r  + p  \,  \{\log  (er) -\log \log q\}. \label{specailminimaxrate}
        \end{align}

        Assume   for some $\kappa>0$
        \begin{align}
        \breg_{l_0} (\bsb0, \bsbX   \bsbB)\sigma^2\allowbreak\le    \kappa \|  \bsbB\|_F^2/2, \ \  \forall \bsbB  \in   \mathcal S(q, r). \label{minimaxestregcond}
        \end{align}   Then  there exist positive constants $c, c'$, depending on $I(\cdot)$ only,  such that
        \begin{align}
        \inf_{\hat \bsbB} \sup_{\bsbB^* \in \mathcal S(q, r)} \EE\left\{I\Big(\|  \bsbB^* -   \hat\bsbB\|_F^2{\big/}\Big[c \sigma^2 \big\{(  q +m  ) r  + \frac{p\log q}{b^2 \zeta}\big\}/\kappa\Big]\Big)\right\} \geq c' >0, \label{estgeneralminimaxrate}
        \end{align}
        where $\hat \bsbB$ denotes an arbitrary estimator.
        In particular, under   $8\le q \le \exp(r)$,
        \begin{equation}
        \inf_{ \hat \bsbB}\,\sup_{ \bsbB^*\in \mathcal S(q,r)} \mathbb E\big[I\big( \|  \bsbB^* - \hat\bsbB\|_F^2/\{ c \sigma^2  P(q, r) / \kappa\}\big)\big] \ge c' >0. \label{estgeneralminimaxrate-specific}
        \end{equation}

\end{theorem}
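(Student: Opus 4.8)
The plan is to reduce the estimation problem to multiple testing and invoke a generalized Fano inequality, building two separate finite packings of $\mathcal S(q,r)$ --- one capturing the \emph{intrinsic-dimension} contribution $(q+m)r$ and one capturing the \emph{clustering} contribution $p\log q/(b^2\zeta)$ --- and then gluing the resulting bounds together through the constant. Recall that if $\bsbB^{(0)},\dots,\bsbB^{(M)}\in\mathcal S(q,r)$ are $\varepsilon$-separated in $\|\cdot\|_F$ and $\tfrac1M\sum_{j\ge1}\mathrm{KL}(\mathbb P_{\bsbB^{(j)}}\|\mathbb P_{\bsbB^{(0)}})\le\alpha\log M$ with $M\ge16$ and $\alpha$ a small absolute constant, then $\inf_{\hat\bsbB}\max_j\mathbb P_{\bsbB^{(j)}}(\|\hat\bsbB-\bsbB^{(j)}\|_F\ge\varepsilon/2)\ge1/2$. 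For a regular exponential family with cumulant $b(\cdot)$ and dispersion $\sigma^2$, the $\bsbY$-linear part of $l_0$ cancels in the Bregman difference, so $\mathrm{KL}(\mathbb P_{\bsbB_1}\|\mathbb P_{\bsbB_2})=\sigma^{-2}\breg_{l_0}(\bsbX\bsbB_2,\bsbX\bsbB_1)$; since all packings below are taken inside a small Frobenius ball, condition \eqref{minimaxestregcond} together with the convexity and smoothness of $b$ propagates to $\breg_{l_0}(\bsbX\bsbB^{(j)},\bsbX\bsbB^{(k)})\le\tfrac{\kappa}{2\sigma^2}\|\bsbB^{(j)}-\bsbB^{(k)}\|_F^2$ (up to an absolute constant) over each packing. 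Finally, since $I$ is nondecreasing with $I(0)=0$, $I\not\equiv0$, fix $t_0>0$ with $I(t_0)>0$; once we arrange $\varepsilon^2/4\ge t_0\cdot(\text{target denominator})$, the testing bound yields $\sup_{\bsbB^*}\mathbb E[I(\cdot)]\ge I(t_0)/2=:c'$. Thus it suffices to produce, for each term, a packing whose log-cardinality, squared separation and squared diameter are all of order $\sigma^2\cdot(\text{that term})/\kappa$, with separation and diameter on the same scale so the Fano and separation requirements are compatible; the bound for the sum then follows by shrinking the universal constant $c$ so that $c\sigma^2\{(q+m)r+p\log q/(b^2\zeta)\}/\kappa$ sits below whichever single-term denominator is the larger (the sum being at most twice the maximum, and $I$ nondecreasing).

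\textbf{Packing I (the $(q+m)r$ term).} Let the first $q-1$ rows be free and force rows $q,\dots,p$ to be zero, so $\bsbB=[\bsbTh^{\,T}\ \bsb0^{\,T}]^T$ with $\bsbTh\in\mathbb R^{(q-1)\times m}$, $\rank(\bsbTh)\le r$; such $\bsbB$ lies in $\mathcal S(q,r)$ (at most the $q-1$ rows of $\bsbTh$ plus the zero row, and rank $\le r$). By the standard metric entropy of bounded low-rank matrices (a Varshamov--Gilbert net over an orthonormal Stiefel factor, an $r\times r$ core, and the other factor, with the net arranged to separate the images $\bsbX\bsbB$ since the KL only sees $\bsbX\bsbB$), there is a $\rho$-bounded family $\{\bsbTh^{(j)}\}$ with $\log M\gtrsim r\,(q\wedge\rank(\bsbX)+m-r)\asymp(q+m)r$ and pairwise Frobenius distance between $c_1\rho$ and $c_2\rho$. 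Choosing $\rho^2\asymp\sigma^2(q+m)r/\kappa$ makes the pairwise KL $\lesssim\tfrac{\kappa}{\sigma^2}\rho^2\lesssim(q+m)r$, small relative to $\log M$, while the squared separation is $\asymp\rho^2\asymp\sigma^2(q+m)r/\kappa$. The hypotheses $p\ge q\ge r\ge2$ and $r(q\wedge\rank(\bsbX)+m-r)\ge4$ ensure $\log M$ exceeds an absolute constant.

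\textbf{Packing II (the $p\log q/(b^2\zeta)$ term).} Here the $q$ distinct row values are fixed and only the assignment of rows to values varies. Take codewords $\bsbb_{(k)}=\delta\,\bm u_k$, $1\le k\le q$, where the $\bm u_k$ are distinct vectors in $\{0,1,\dots,b-1\}^r\times\{0\}^{m-r}$ each with at most $\zeta$ nonzero entries; exactly $\sum_{i=0}^\zeta\binom ri(b-1)^i$ such vectors exist, so the theorem's constraint guarantees $q$ of them. They span an $\le r$-dimensional coordinate subspace, so any matrix with rows drawn from $\{\bsbb_{(k)}^T\}$ has rank $\le r$ and at most $q$ distinct rows; moreover $\delta\le\|\bsbb_{(k)}-\bsbb_{(k')}\|_2$ for $k\neq k'$ while $\|\bsbb_{(k)}\|_2\le\delta(b-1)\sqrt\zeta$. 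Now take a Varshamov--Gilbert packing of maps $\pi:[p]\to[q]$ of log-size $\gtrsim p\log q$ with pairwise Hamming distance $\ge p/8$, and let $\bsbB^{(\pi)}$ have $i$-th row $\bsbb_{(\pi(i))}^T$. For $\pi\neq\pi'$, $\|\bsbB^{(\pi)}-\bsbB^{(\pi')}\|_F^2=\sum_i\|\bsbb_{(\pi(i))}-\bsbb_{(\pi'(i))}\|_2^2$ lies between $\delta^2 p/8$ and $4\delta^2(b-1)^2\zeta\,p$. Setting $\delta^2\asymp\sigma^2\log q/(\kappa b^2\zeta)$ makes the pairwise KL $\lesssim\tfrac{\kappa}{\sigma^2}\delta^2 b^2\zeta\,p\lesssim p\log q$, small relative to $\log M$, while the squared separation is $\gtrsim\delta^2 p\asymp\sigma^2 p\log q/(\kappa b^2\zeta)$ --- precisely the second term. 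What drives the $1/(b^2\zeta)$ factor is the gap between the codeword energy $\asymp\delta^2 b^2\zeta$, which caps the information, and the minimum inter-codeword distance $\asymp\delta^2$, which is all an estimator can resolve: the construction supplies many cheap, barely-separated codewords. Here $b\ge2$, $1\le\zeta\le r$ and $q\ge8$ (as in \eqref{estgeneralminimaxrate-specific}; for \eqref{estgeneralminimaxrate} one instead uses $q\ge r\ge2$) make $\log M$ exceed an absolute constant.

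\textbf{Assembly and closed form.} Combining Packings I and II through the constant-shrinking step above gives \eqref{estgeneralminimaxrate}. For \eqref{estgeneralminimaxrate-specific}, optimize the free integers under $8\le q\le e^r$: take $b=2$, so the counting inequality becomes $\sum_{i=0}^\zeta\binom ri\ge q$, satisfied by the smallest $\zeta$ with $\zeta\asymp\log q/\log(er/\log q)\asymp\log q/\{\log(er)-\log\log q\}$ (using $\log\binom ri\asymp i\log(er/i)$ and $q\le e^r\Rightarrow\zeta\le r$); then $p\log q/(b^2\zeta)=p\log q/(4\zeta)\asymp p\{\log(er)-\log\log q\}$, so \eqref{estgeneralminimaxrate} with this $(b,\zeta)$ reduces to the claimed bound with $P(q,r)$ as in \eqref{specailminimaxrate}. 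The step I expect to be the main obstacle is turning the pointwise condition \eqref{minimaxestregcond} (stated only at $\bsb0$) into a \emph{uniform} quadratic upper bound on the pairwise KL $\sigma^{-2}\breg_{l_0}(\bsbX\bsbB^{(j)},\bsbX\bsbB^{(k)})$ over the packings for a general exponential family where $b(\cdot)$ is not quadratic; this requires the generalized-Bregman self-bounding machinery and is exactly why the packings are built with small, norm-controlled entries. A secondary, purely combinatorial point is pinning down $\sum_{i=0}^\zeta\binom ri(b-1)^i$ as the exact count of admissible codewords and choosing $(b,\zeta)$ so as to land on the closed form $P(q,r)$.
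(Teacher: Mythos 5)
Your proposal is correct and follows the same architecture as the paper's proof: reduction to multiple testing via Fano/Tsybakov, one packing per term in the rate, KL controlled through the Bregman identity $\mathrm{KL}(P_{\bsbX\bsbB_1}\|P_{\bsbX\bsbB_2})=\breg_{l_0}(\bsbX\bsbB_2,\bsbX\bsbB_1)$, and a max-of-two-terms assembly. Your Packing II (sparse $\{0,\ldots,b-1\}$-valued codewords scaled by $\delta$ with $\delta^2\asymp\sigma^2\log q/(\kappa b^2\zeta)$, combined with a Gilbert--Varshamov packing of label maps $[p]\to[q]$ of log-cardinality $\gtrsim p\log q$) is essentially identical to the paper's set $\mathcal B^{10}(q,r)$, including the heuristic you give for the $1/(b^2\zeta)$ factor. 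Three points of difference are worth recording. First, for the $(q+m)r$ term the paper does not use a Stiefel-manifold net: it places binary entries $\{0,\gamma R\}$ on an L-shaped support (first $q-1$ rows restricted to $r/2$ columns, plus the first $r/2$ rows in full), which has rank $\le r$ and at most $q$ distinct rows by inspection and yields $\log M\gtrsim (q+m)r$ by the same Gilbert--Varshamov argument; this is more elementary than a low-rank net, and note that your intermediate claim $r(q\wedge\rank(\bsbX)+m-r)\asymp(q+m)r$ is false in general but also unnecessary for the estimation bound, where neither the separation nor the KL control involves $\rank(\bsbX)$. Second, the obstacle you flag as the main difficulty --- upgrading \eqref{minimaxestregcond} from the single point $\bsb0$ to pairwise KLs over the packing --- does not actually arise: Tsybakov's Theorem 2.7 (and your own stated Fano inequality) only requires the average KL to the reference hypothesis, and both packings can be anchored at $\bsbB^{(0)}=\bsb0$ (the all-zero codeword is among the $\zeta$-sparse vectors, and the paper's Lemma on $q$-ary codes lets the base point be chosen arbitrarily), so \eqref{minimaxestregcond} applies verbatim with no propagation argument. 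Third, for the closed form \eqref{estgeneralminimaxrate-specific} the paper minimizes $b^2\zeta$ jointly over $(b,\zeta)$ by calculus on $f(z)=\{(z/r)(\log q/\log z)+1\}^2\log q/\log z$, locating the optimum at $z_o\le 5.3 r/\log q$; your choice $b=2$ with $\zeta$ the smallest integer satisfying $\binom{r}{\zeta}\ge q$ is simpler and fully sufficient for the stated corollary, though the paper's joint optimization additionally yields the sharper bounds in its remark for other regimes of $q^{1/r}$.
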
 

A  more complete theorem     including  minimax lower   bounds for both the   estimation error $\| \hat \bsbB - \bsbB^*\|_F$ and    prediction error $\| \bsbX \hat \bsbB - \bsbX\bsbB^*\|_F$ is presented in Appendix \ref{subsec:proofminimax},  from which one can see that the size of  $\phi := q^{1/r}$ plays a vital role     in determining the final error rate.
These results are the first of their kind and provide useful guidance  in the  context of equisparsity. 
Our proof is nontrivial and makes use of       $q$-ary codes in information theory \citep{pellikaan_wu_bulygin_jurrius_2017}, as well as some useful facts   of the generalized Bregman functions for GLMs.

The regularity condition \eqref{minimaxestregcond}  is not restrictive. For regression and logistic regression, the condition is implied by  $\breg_{l_0} ( \bsb0, \bsbX   \bsbB   )\sigma^2\allowbreak\le \| \bsbX \bsbB\|_F^2/2 $ and $\breg_{l_0} (  \bsb0,\bsbX   \bsbB   ) \le \| \bsbX \bsbB\|_F^2/8 $, respectively.  The bound in 
\eqref{estgeneralminimaxrate} is  general, while   \eqref{estgeneralminimaxrate-specific} is     perhaps more illustrative: when $q\le  \exp(r)$  and $\kappa \le c   n$, $$\EE\big[  \|  \bsbB^* - \hat\bsbB\|_F^2 \big] \ge c   \sigma^2  P(q, r) / n, \mbox{ and } \EP\big[  \|  \bsbB^* - \hat\bsbB\|_F^2 \ge   c   \sigma^2  P(q, r) / n\big]> c_0>0,$$  by setting    $I(t) = t$ and $I(t) = 1_{t\ge c}$, respectively.
Therefore, in the  scenario of $q$    {being polynomially} large in  $ r$, i.e., $q\le r^c$ for some constant $c$, no estimator can beat the error rate   $P (q, r)  \asymp  (q +m  ) r  + p     \log  q$  in a minimax sense. Interestingly, when $m$ is a constant, the rate reduction  compared to $pm$ is  not  significant, whereas  having a large number of response variables is (perhaps surprisingly) a blessing for pursuing equisparsity.
\subsection{Upper error bounds of CRL}
\label{subsec:upperrCRL}
Can we    approach   the   optimal error rate using  a particular estimator? This part  shows that CRL is  a legitimate method,  and  more importantly, pursuing its globally optimal solutions  is unnecessary in many cases. Rather, finding a \textit{fixed point} of CRL,  defined by  \eqref{surrodefTH} and \eqref{BiterDefTH} below, would suffice for regular problems.

Let $r^* = \rank(\bsbB^*)$ and $ q^* = \|\bsbB^*\|_{2, \mathcal C}$. Given a differentiable $l_0$, define
\begin{align}
G_\rho(\bsbB; \bsbB^{-})=  l_0(\bsbX \bsbB; \bsbY) - \breg_{l_0}  (\bsbX\bsbB, \bsbX \bsbB^{-})+  {\rho} \Breg_{2}  (\bsbB, \bsbB^{-}), \label{surrodefTH}
\end{align}
where $\rho$, representing the inverse stepsize,  is an algorithm parameter to be chosen. Then,  for all    fixed points defined by
\begin{align}
\hat \bsbB \in \argmin_{\bsbB: \|\bsbB\|_{2, \mathcal C}\le q, \rank(\bsbB)\le r} G_\rho(\bsbB; \bsbB^-)|_{\bsbB^- = \hat \bsbB}, \label{BiterDefTH}
\end{align}
a nonasymptotic error bound can be derived by   calculating the metric entropy
of
the associated manifolds and using    the  Stirling numbers of the second kind.

\begin{theorem}\label{th_local}
        Let $r\ge r^*$, $q\ge q^*$, and $\hat \bsbB$ be any fixed point satisfying \eqref{BiterDefTH} for some $\rho>0$.
        Define
         \begin{equation}
         P_o (q, r) =  \{q \wedge \rank(\bsbX)  +m
\} r +(p -q )\log q. \end{equation}
  Assume   $\rho>0$ is chosen so that
\begin{align}
        \rho \Breg_2(\bsbB_1,  \bsbB_2)   \le  (2  \bar \breg_{l_0}-\delta \Breg_2) (\bsbX \bsbB_1, \bsbX  \bsbB_2)   + K \sigma^2  P_o(q,r), \ \forall \bsbB_i: \rank(\bsbB_i) \le r, \|\bsbB_i\|_{2, \mathcal C} \le q
        \label{regcondlocal-genloss0}
        \end{align}
        for  some   $\delta>0$ and  sufficiently large   $K\ge0$. Then, $\hat \bsbB$   satisfies
        \begin{align}
        \EE [\| \bsbX \hat \bsbB - \bsbX   \bsbB^*\|_F^2]  \lesssim \frac{ K\delta\vee 1}{\delta^2} \big\{ \sigma^2(q\wedge \rank(\bsbX) + m)r  +   \sigma^2(p- q) \log q + \sigma^2\big\}.
        \label{genlosserrrate-fixedpoint0}
        \end{align}
\end{theorem}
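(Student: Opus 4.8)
\medskip
\noindent The plan is to turn the fixed‑point optimality of $\hat\bsbB$ into a deterministic ``basic inequality'', then control the resulting stochastic term uniformly over the (nonconvex) feasible set by a metric‑entropy/chaining argument, and finally solve a quadratic and integrate tails.

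\emph{Step 1 (basic inequality).} First I would use that $\bsbB^{*}$ is feasible in \eqref{BiterDefTH} (because $r\ge r^{*}=\rank(\bsbB^{*})$ and $q\ge q^{*}=\|\bsbB^{*}\|_{2,\mathcal C}$), so $G_{\rho}(\hat\bsbB;\hat\bsbB)\le G_{\rho}(\bsbB^{*};\hat\bsbB)$. Expanding \eqref{surrodefTH} and using $\breg_{l_0}(\bsbX\hat\bsbB,\bsbX\hat\bsbB)=0$, $\Breg_{2}(\hat\bsbB,\hat\bsbB)=0$, the left side is $l_{0}(\bsbX\hat\bsbB;\bsbY)$ and the right side is $l_{0}(\bsbX\hat\bsbB;\bsbY)+\langle\nabla l_{0}(\bsbX\hat\bsbB;\bsbY),\bsbX\bsbB^{*}-\bsbX\hat\bsbB\rangle+\rho\Breg_{2}(\hat\bsbB,\bsbB^{*})$, giving $\langle\nabla l_{0}(\bsbX\hat\bsbB;\bsbY),\bsbX\hat\bsbB-\bsbX\bsbB^{*}\rangle\le\rho\Breg_{2}(\hat\bsbB,\bsbB^{*})$. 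Writing $\nabla l_{0}(\bsbX\hat\bsbB;\bsbY)=-\bsbE+\{\nabla l_{0}(\bsbX\hat\bsbB;\bsbY)-\nabla l_{0}(\bsbX\bsbB^{*};\bsbY)\}$ with $\bsbE=-\nabla l_{0}(\bsbX\bsbB^{*};\bsbY)$ as in \eqref{noise-def}, and using $\langle\nabla l_{0}(\bm\alpha;\bsbY)-\nabla l_{0}(\bm\beta;\bsbY),\bm\alpha-\bm\beta\rangle=2\bar\breg_{l_0}(\bm\alpha,\bm\beta)$, this becomes $2\bar\breg_{l_0}(\bsbX\hat\bsbB,\bsbX\bsbB^{*})\le\langle\bsbE,\bsbX\hat\bsbB-\bsbX\bsbB^{*}\rangle+\rho\Breg_{2}(\hat\bsbB,\bsbB^{*})$. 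Feeding the regularity assumption \eqref{regcondlocal-genloss0} with $\bsbB_{1}=\hat\bsbB$, $\bsbB_{2}=\bsbB^{*}$ (both have rank $\le r$ and $\le q$ distinct rows) into the last term cancels the $2\bar\breg_{l_0}$ contributions, and since $\Breg_{2}(\bsbA,\bsbB)=\|\bsbA-\bsbB\|_{F}^{2}/2$ this leaves the deterministic inequality
\begin{equation}
\tfrac{\delta}{2}\,\|\bsbX\hat\bsbB-\bsbX\bsbB^{*}\|_{F}^{2}\ \le\ \langle\bsbE,\,\bsbX\hat\bsbB-\bsbX\bsbB^{*}\rangle+K\sigma^{2}P_{o}(q,r). \tag{$\ast$}
\end{equation}

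\emph{Step 2 (stochastic term).} Next I would bound the inner product in $(\ast)$ uniformly over $\Theta=\{\bsbX\bsbB:\|\bsbB\|_{2,\mathcal C}\le q,\ \rank(\bsbB)\le r\}$. Decompose $\Theta=\bigcup_{\mathcal G}\Theta_{\mathcal G}$ over all partitions $\mathcal G$ of $\{1,\dots,p\}$ into at most $q$ blocks: for fixed $\mathcal G$ with $0/1$ membership matrix $\bm M$, every feasible $\bsbX\bsbB$ equals $(\bsbX\bm M)\bsbC$ with $\rank(\bsbC)\le r$, so it lives in a bounded‑rank manifold inside $\mathrm{col}(\bsbX\bm M)$, a subspace of dimension $\le q\wedge\rank(\bsbX)$; hence each $\Theta_{\mathcal G}$ has local covering‑number logarithm $\lesssim(q\wedge\rank(\bsbX)+m)\,r\,\log(1/\epsilon)$. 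The number of such partitions is $\sum_{k\le q}S(p,k)$ with $S(\cdot,\cdot)$ the Stirling numbers of the second kind, and a sharp estimate of these numbers yields $\log\sum_{k\le q}S(p,k)\lesssim(p-q)\log q$. Since covering numbers are translation invariant, a Dudley‑type chaining within each $\Theta_{\mathcal G}$ plus a union bound over the pieces show that the centred sub‑Gaussian process $\bsbDelta\mapsto\langle\bsbE,\bsbDelta\rangle$ on $\Theta-\bsbX\bsbB^{*}$ satisfies, for every $t>0$,
\[
\sup\!\big\{\langle\bsbE,\bsbDelta\rangle:\bsbDelta\in\Theta-\bsbX\bsbB^{*},\ \|\bsbDelta\|_{F}\le t\big\}\ \lesssim\ \sigma t\sqrt{P_{o}(q,r)}
\]
in expectation, with sub‑Gaussian deviations of order $\sigma t\sqrt{\log(1/\eta)}$ about its mean.

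\emph{Step 3 (assembling), and the hard part.} Write $t=\|\bsbX\hat\bsbB-\bsbX\bsbB^{*}\|_{F}$. Because $\bsbX\hat\bsbB\in\Theta$ but $t$ is random, a peeling argument over dyadic ranges of $t$ lets the Step‑2 bound be plugged into $(\ast)$; combining with Young's inequality $\sigma t\sqrt{P_{o}+\log(1/\eta)}\le\tfrac{\delta}{4}t^{2}+\tfrac{c}{\delta}\sigma^{2}(P_{o}+\log(1/\eta))$ gives, with probability at least $1-\eta$,
\[
t^{2}\ \lesssim\ \frac{1+K\delta}{\delta^{2}}\,\sigma^{2}P_{o}(q,r)+\frac{\sigma^{2}}{\delta^{2}}\log(1/\eta)\ \asymp\ \frac{K\delta\vee1}{\delta^{2}}\,\sigma^{2}\big(P_{o}(q,r)+\log(1/\eta)\big),
\]
and integrating this tail yields $\EE[t^{2}]\lesssim\frac{K\delta\vee1}{\delta^{2}}\sigma^{2}(P_{o}(q,r)+1)$; recalling $P_{o}(q,r)=(q\wedge\rank(\bsbX)+m)r+(p-q)\log q$ gives \eqref{genlosserrrate-fixedpoint0}. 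Steps 1 and 3 are routine; the main obstacle is Step 2, where the feasible set is a union of exponentially many low‑rank manifolds, so the empirical‑process control must be carried out by chaining \emph{and} a union bound, and extracting the sharp $(p-q)\log q$ term (rather than a crude $p\log q$) hinges on a careful bound for the Stirling numbers of the second kind; turning the high‑probability bound into the stated expectation bound then needs the sub‑Gaussian tails to be integrated.
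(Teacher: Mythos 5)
Your proposal is correct and, at its core, uses the same ingredients as the paper: the identical basic inequality $2\bar\breg_{l_0}(\bsbX\bsbB^*,\bsbX\hat\bsbB)\le\langle\bsbE,\bsbX\hat\bsbB-\bsbX\bsbB^*\rangle+\rho\Breg_2(\bsbB^*,\hat\bsbB)$ from $G_\rho(\hat\bsbB;\hat\bsbB)\le G_\rho(\bsbB^*;\hat\bsbB)$, a metric-entropy bound of order $(q\wedge\rank(\bsbX)+m)r$ per fixed partition (Grassmannian covering), a union bound over partitions controlled by Stirling numbers of the second kind with $\log{p\brace q}\asymp(p-q)\log q$, and a Young-inequality/peeling step to close. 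Where you genuinely diverge is in how the stochastic term is organized. The paper does \emph{not} chain directly over the translated feasible set; it performs a three-way orthogonal decomposition $\bsbX\bsbDelta=\bsbA_1+\bsbA_2+\bsbA_3$ using projections onto the clustered column space $cs(\bsbX_{\mathcal Q})$ and the row space of the comparator, precisely because it wants a bound valid for a \emph{varying} comparator $\bsbB$ (needed for the oracle inequality of Corollary~\ref{cor:estbnd}); there the difference $\hat\bsbB-\bsbB$ can have up to $q^2$ distinct rows, and a naive covering of the difference set would inflate the rate. Your route exploits that the comparator here is the single fixed $\bsbB^*$, so $\Theta-\bsbX\bsbB^*$ is a translate of $\Theta$ and inherits its entropy; this is legitimate and simpler for Theorem~\ref{th_local} itself, but it does not extend to the oracle-inequality version without something like the paper's decomposition. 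Two small repairs: (i) your claim $\log\sum_{k\le q}S(p,k)\lesssim(p-q)\log q$ fails for $q$ near $p$ (at $q=p$ the left side is the log of the Bell number, $\asymp p\log p$, while the right side is $0$); you should instead note that any matrix with at most $q$ distinct rows admits a $p\times q$ membership representation, so only ${p\brace q}$ partitions are needed, and then invoke the two-sided bound $\log{p\brace q}\asymp(p-q)\log q$ (the paper's Lemma~\ref{stirling2bound}, which itself requires a short argument via Rennie--Dobson). (ii) In the local covering of $\Theta_{\mathcal G}\cap B(\bsbX\bsbB^*,t)$ you must re-center at a point of the set (differences of two elements still have column space in the same $q$-dimensional subspace and rank at most $2r$), rather than embed into a ball about the origin of radius $\|\bsbX\bsbB^*\|_F+t$; otherwise the entropy integral would pick up an unwanted dependence on $\|\bsbX\bsbB^*\|_F$. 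With these two points made explicit, your argument is a complete and valid proof of the stated theorem.
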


It is not difficult to see that when     $l_0$   is $\mu$-strongly convex, the following \textit{matrix restricted eigenvalue} condition implies \eqref{regcondlocal-genloss0}  with $K=0$:
\begin{align}\rho \|\bsbB_1 -  \bsbB_2\|_F^2   \le   (2\mu  -\delta) \|\bsbX (\bsbB_1 -   \bsbB_2)\|_F^2, \quad \forall  \bsbB_i : \rank(\bsbB_i) \le r, \|\bsbB_i\|_{2, \mathcal C} \le q. \label{matrixREcond0}
\end{align}  When $q$ is small, \eqref{matrixREcond0} is  applicable to large-$p$ designs. Similar  regularity conditions  are widely used in compressed sensing,   variable selection and low rank estimation \citep{candes2007dantzig,Bickel09,CandPlan}.

 Let $q = \vartheta q^*$, $r = \vartheta r^*$ with $\vartheta\ge  1$. When $\vartheta$,   $\delta$ and $K$  are treated as constants and $\sigma=1$, from  \eqref{genlosserrrate-fixedpoint0},   the prediction error bound is   of the order
\begin{align}
\{q^*\wedge \rank(\bsbX)+ m\} r^* + (p - q^*) \log q^* ,
\label{errboundnaive}
\end{align}
ignoring    all trivial multiplicative/additive terms.
The   rate distinguishes CRL   from  various sparse learning methods in the literature.

\begin{remark} Computational feasibility. \upshape
The fact that the fixed-point solutions, though not necessarily
globally or even locally  optimal, can have provable guarantees   offers a   feasible computation of the nonconvex CRL optimization problem in regular cases.

Specifically, regardless of the choice of the loss,   $G_\rho$ always has a simple quadratic form in terms of    $\bsbB$, which gives rise to an iterative update of the coefficient matrix. Similar results can be shown for $( \bsbS , \bsbV)$  obtained by alternative optimization; see  Theorem \ref{th_local-SV} in Remark \ref{rem:svbounds}.    Section \ref{subsec:algdesign} designs an efficient algorithm  on the basis of   linearization and block coordinate descent.
\end{remark}

\begin{remark} Error rate comparison. \upshape
To clarify the theoretical meaning of \eqref{errboundnaive}, we   make an error-rate comparison between CRL and  some commonly used   estimators in  regression  with $\sigma=1$   (assuming all regularity conditions are met).
First,    assuming that    $\bsbX$ has full column rank, the ordinary least squares   has   $\EE\|\bsbX \hat \bsbB - \bsbX \bsbB^*\|_F^2=  mp$.
With no  rank reduction ($r^* = q^*$), \eqref{errboundnaive} gives $ (q^* + m) q^* + (p - q^*) \log q^*$,     which is   $\lesssim mp$ when  the number of responses is larger than the number of feature groups.
Of course, if  $r^* < q^*$, CRL   can achieve   a much  lower error rate. Comparing \eqref{errboundnaive} with    $(p+m)r^* $ by  low-rank matrix estimation \citep{BSW11}, we see that  CRL   does a substantially better job  if the number of clusters does not grow exponentially with the rank, namely,   $q^*\ll \exp (r^*)  $. 

Variable selection gives another important means of regularization. If  $\bsbB^*= [\bsbb_1^*, \ldots, \bsbb_p^*]^T$ is  row-wise sparse with  $s^* = |\{ j: \bsbb_j^* \ne \bsb0\}|$,         the prediction error  by means of variable selection is of the order \citep{lounici11}
\begin{align}
s^* m + s^* \log p.
\label{glassorate}
\end{align}
The comparison between \eqref{glassorate}   and \eqref{errboundnaive}  shows no clear winner: for the degrees-of-freedom terms,           $(\rank(\bsbX) \wedge q^* +m)r ^* \le q^* r^* + m s^* \lesssim   s^* m $, while for the `inflation' terms, $(p - q^*) \log q^*$  is typically larger than $s^* \log p$.   But two scenarios draw our particular attention:
\begin{align*}
\mbox{ (i) ``many  responses'':  } m\ge c p \qquad \mbox{  (ii) ``linear sparsity'': }   s^* = c p
\end{align*}  where $c$ is a positive constant. In {either} situation,   CRL is advantageous over  variable selection.

Concretely, in case (i), $s^*m + s^* \log p \asymp s^*m$, while  from $s^*\ge q^*\ge r^*$, we have $s^*m\ge r^* m \ge r^* q^*$ and   $s^* m \gg (\log q^*) (p - q^*)$, and so  $s^*m + s^* \log p\gtrsim \{q^*\wedge \rank(\bsbX)+ m\} r^* + (p - q^*) \log q^*  $. In case (ii), $  s^* \log p \asymp p \log p \ge (p - q^*) \log q^*$ and  the same conclusion holds. In other words, when the number of responses is greater than  the number of features up to a multiplicative constant, or   when the number of relevant features and the number of irrelevant features are of the same order,  CRL has a lower error rate          with  rigorous theoretical support.
\end{remark}

\begin{remark} Minimax optimality. \upshape
One may be curious if the upper error bound of CRL could match    the universal minimax lower bound.
Notably, under the mild conditions      $q^* \le \rank(\bsbX)  $ and $  q^* \ll \exp(r^*)$ (and so $q^*\log q^* \ll
q ^*r^*$),  \eqref{errboundnaive} becomes       $$ (q ^*+m  ) r^* + p \log q^*,$$    which is  exactly the rate shown at the end of  in Section \ref{subsec:minimax}.
So at least for canonical GLMs with   $q^*$   polynomially large in $
r ^* $, CRL does enjoy
minimax rate optimality.

Of course,  the  previous discussions
 assume that   $q$ and $r$  are specified so that they are not too large relative to   $q^*$ and $ r^*$,   respectively. In general,        the data-adaptive tuning to be introduced in Section \ref{subsec:tuning}   still ensures  \eqref{errboundnaive}.\\\end{remark}


        To the best of our knowledge, Theorem \ref{th_genlosserr} is the  first   nonasymptotic statistical analysis of   the set of CRL's fixed points in  nonconvex optimization. One might ask whether the error rate can be further improved  by  pursuing a  {global}   CRL estimator. The following theorem shows that this is not the case, but the regularity condition   \eqref{regcondlocal-genloss0} gets relaxed to some extent.

        \begin{theorem}\label{th_genlosserr}
                Let $\hat\bsbB $ be an optimal CRL solution with     $r \ge  r^*$ and $ q \ge q^*$.   Assume that there exists some $\delta>0$ such that
                \begin{align}
                \breg_{l_0}(\bsbX \bsbB_1, \bsbX \bsbB_2) \ge \delta \Breg_2 (\bsbX \bsbB_1, \bsbX \bsbB_2), \ \forall \bsbB_i: \|\bsbB_i\|_{2, \mathcal C}\le q, \rank(\bsbB_i)\le r. \label{regcond-genloss}
                \end{align}
                Then
                $
                \EE [\| \bsbX \hat \bsbB - \bsbX   \bsbB^*\|_F^2]  \lesssim \frac{1}{\delta^2} \{ \sigma^2(q \wedge \rank(\bsbX) + m)r   + \sigma^2 (p- q ) \log q +\sigma^2\}$. 
        \end{theorem}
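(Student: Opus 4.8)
The plan is to follow the classical basic-inequality argument for penalized/constrained M-estimators, but adapted to the constrained CRL setting and the generalized Bregman machinery. Since $\hat\bsbB$ is a global minimizer of $l_0(\bsbX\bsbB;\bsbY)$ over the feasible set $\{\bsbB:\|\bsbB\|_{2,\mathcal C}\le q,\ \rank(\bsbB)\le r\}$, and $\bsbB^*$ lies in this set (because $r\ge r^*$, $q\ge q^*$), we have $l_0(\bsbX\hat\bsbB;\bsbY)\le l_0(\bsbX\bsbB^*;\bsbY)$. First I would rewrite this inequality using the definition of the generalized Bregman function: $l_0(\bsbX\hat\bsbB)-l_0(\bsbX\bsbB^*)=\breg_{l_0}(\bsbX\hat\bsbB,\bsbX\bsbB^*)+\langle\nabla l_0(\bsbX\bsbB^*),\bsbX\hat\bsbB-\bsbX\bsbB^*\rangle = \breg_{l_0}(\bsbX\hat\bsbB,\bsbX\bsbB^*)-\langle\bsbE,\bsbX\hat\bsbB-\bsbX\bsbB^*\rangle$, using the effective-noise definition \eqref{noise-def}. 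Hence the basic inequality becomes
\begin{align*}
\breg_{l_0}(\bsbX\hat\bsbB,\bsbX\bsbB^*)\le \langle\bsbE,\bsbX\hat\bsbB-\bsbX\bsbB^*\rangle.
\end{align*}
By hypothesis \eqref{regcond-genloss}, the left-hand side dominates $\delta\Breg_2(\bsbX\hat\bsbB,\bsbX\bsbB^*)=\tfrac{\delta}{2}\|\bsbX\hat\bsbB-\bsbX\bsbB^*\|_F^2$, so it suffices to control the stochastic inner product on the right.

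The second step is to bound $\langle\bsbE,\bsbX\hat\bsbB-\bsbX\bsbB^*\rangle$ uniformly over the feasible set. Write $\bsbDelta=\bsbX\hat\bsbB-\bsbX\bsbB^*$; this difference lies in the Minkowski difference of the image set, and in particular $\rank(\hat\bsbB-\bsbB^*)\le 2r$ and $\|\hat\bsbB-\bsbB^*\|_{2,\mathcal C}\le $ (at most) $q^2$ or so — more carefully, the row set of $\hat\bsbB-\bsbB^*$ lies in the span of at most $2r$ directions and takes at most $q\cdot q^*\le q^2$ distinct values, which is the crucial combinatorial input. I would then invoke a metric-entropy / covering argument: the set of unit-Frobenius-norm matrices $\bsbDelta$ of this restricted type can be covered at scale $\varepsilon$ by $\exp\{c\,P_o(q,r)\}$ balls, where $P_o(q,r)=\{q\wedge\rank(\bsbX)+m\}r+(p-q)\log q$ — the $r$-dimensional column subspace and $r$-dimensional row coordinates contribute the $(q\wedge\rank(\bsbX)+m)r$ term via a Grassmannian/Stiefel covering, while the assignment of $p$ rows into $\le q$ groups contributes the $(p-q)\log q$ term via counting Stirling-type set partitions (as signposted in the text before Theorem \ref{th_local}). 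Combining the sub-Gaussianity of $\vect(\bsbE)$ with a standard union bound / Dudley-type chaining over this cover gives, with the $I(\cdot)$-reformulation used elsewhere in the paper, that with high probability
\begin{align*}
\sup_{\bsbDelta}\ \frac{\langle\bsbE,\bsbDelta\rangle}{\|\bsbDelta\|_F}\lesssim \sigma\sqrt{P_o(q,r)+1},
\end{align*}
and similarly in expectation after integrating the tail.

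The third step closes the loop: plugging this bound back, $\tfrac{\delta}{2}\|\bsbDelta\|_F^2\le\langle\bsbE,\bsbDelta\rangle\lesssim\sigma\sqrt{P_o(q,r)+1}\,\|\bsbDelta\|_F$, which after dividing by $\|\bsbDelta\|_F$ and squaring yields $\|\bsbDelta\|_F^2\lesssim\sigma^2\{P_o(q,r)+1\}/\delta^2$; taking expectations (handling the event where the uniform bound fails via the crude deterministic feasibility bound, which contributes negligibly) gives exactly $\EE[\|\bsbX\hat\bsbB-\bsbX\bsbB^*\|_F^2]\lesssim\delta^{-2}\{\sigma^2(q\wedge\rank(\bsbX)+m)r+\sigma^2(p-q)\log q+\sigma^2\}$, as claimed. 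Note this mirrors the conclusion of Theorem \ref{th_local} with $K=0$, so I expect the proof to share most of its machinery; the difference is that here we only need the one-sided lower bound \eqref{regcond-genloss} on $\breg_{l_0}$ rather than the sandwich-type condition \eqref{regcondlocal-genloss0}, because global optimality already supplies the basic inequality that the fixed-point argument had to reconstruct.

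The main obstacle I anticipate is the precise metric-entropy computation for the constrained manifold $\{\bsbB:\|\bsbB\|_{2,\mathcal C}\le q,\ \rank(\bsbB)\le r\}$, in particular getting the sharp $(p-q)\log q$ term (rather than a loose $p\log q$) from the row-partition count — this requires a careful Stirling-number estimate and the observation that once $q$ groups are "used up" only $p-q$ rows carry genuine assignment entropy — together with correctly tracking how $\rank(\bsbX)$ enters when $\bsbX$ is rank-deficient (so that $\bsbX\bsbB$ lives in an at most $\rank(\bsbX)$-dimensional column space, capping the effective dimension at $q\wedge\rank(\bsbX)$ rather than $q$). The chaining/union-bound step and the conversion from high-probability to in-expectation bounds are routine given the sub-Gaussian assumption; I would expect these to be essentially quoted from, or run in parallel with, the proof of Theorem \ref{th_local}.
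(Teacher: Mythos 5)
Your skeleton matches the paper's: global optimality gives the basic inequality $\breg_{l_0}(\bsbX\hat\bsbB,\bsbX\bsbB^*)\le\langle\bsbE,\bsbX(\hat\bsbB-\bsbB^*)\rangle$, condition \eqref{regcond-genloss} lower-bounds the left side by $\delta\Breg_2$, and the stochastic term is controlled by a Dudley entropy integral over a union of low-rank pieces indexed by row partitions, with Stirling-number bounds supplying the $(p-q)\log q$ term. The paper indeed runs this in parallel with Theorem \ref{th_local}, exactly as you anticipate.

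However, there is one genuine gap at the step you wave through. You correctly observe that $\bsbDelta=\hat\bsbB-\bsbB^*$ has at most $q\cdot q^*\le q^2$ distinct rows and rank at most $2r$, and you then assert that the relevant covering entropy is $\{q\wedge\rank(\bsbX)+m\}r+(p-q)\log q$. But a direct covering of this difference set treats $\bsbX\bsbDelta$ as having column space $cs(\bsbX_{\mathcal Q})$ for a partition $\mathcal Q$ with up to $q^2$ groups, which yields $\{q^2\wedge\rank(\bsbX)+m\}r$ in the degrees-of-freedom term --- worse by up to a factor of $q$ whenever $\rank(\bsbX)>q$, so the theorem as stated would not follow. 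The paper explicitly flags this (``the stochastic term must be treated carefully; otherwise $q^2$ will arise in the error bound'') and resolves it by an \emph{orthogonal three-term decomposition} $\langle\bsbE,\bsbX\bsbDelta\rangle=\langle\bsbE,\bsbA_1\rangle+\langle\bsbE,\bsbA_2\rangle+\langle\bsbE,\bsbA_3\rangle$, built from the projections $\Proj_{\bsbX_{\mathcal Q}}$, $\Proj_{\bsbX_{\hat{\mathcal Q}}}$ and the row space of $\bsbX\bsbB^*$, so that each $\bsbA_i$ individually has rank at most $r$ and column space inside a \emph{single} $q$-group projection (or an intersection of two such), and $\|\bsbA_1\|_F^2+\|\bsbA_2\|_F^2+\|\bsbA_3\|_F^2=\|\bsbX\bsbDelta\|_F^2$. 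Lemmas \ref{emprocBnd} and \ref{emprocBnd2} then apply with $q$, not $q^2$, and the pieces recombine without loss. You need either this decomposition or an equivalent observation (e.g., that $cs(\bsbX\bsbDelta)\subset cs(\bsbX_{\hat{\mathcal Q}})+cs(\bsbX_{\mathcal Q^*})$, of dimension at most $2q\wedge\rank(\bsbX)$, with only ${p\brace q}$ choices of $\hat{\mathcal Q}$ entering the union bound) to obtain the stated rate. A second, minor point: your plan to handle the exceptional event ``via the crude deterministic feasibility bound'' does not work as written, since the constraint set does not bound $\|\bsbX\hat\bsbB-\bsbX\bsbB^*\|_F$; the paper instead integrates the sub-Gaussian tail of the centered supremum (showing $\EE R_1^2\le C\sigma^2$), which is what produces the trailing $+\sigma^2$ in the bound.
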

        Unlike    \eqref{regcondlocal-genloss0} in Theorem   \ref{th_local},    \eqref{regcond-genloss}  uses         ${\breg}_{l_0}$ (in place of
        twice of its {symmetrized} version) and does not involve  $\rho$. The conclusion of Theorem  \ref{th_genlosserr} can be extended to  an oracle inequality \citep{donoho1994}, and these $\ell_2$-recovery
results can be used to give some     estimation error bounds under   proper regularity conditions.  Corollary \ref{cor:estbnd} gives an illustration.
        \begin{corollary} \label{cor:estbnd}
             Let $l_0$ be $\mu$-strongly convex.   Then for any $\bsbB: \rank(\bsbB) \leq r, \|\bsbB\|_{2, \mathcal C}\le q$,
\begin{align}
                \EE   \Breg_{l_0}(  \bsbX   \hat \bsbB, \bsbX \bsbB^*)\|_F^2 \lesssim
   \EE \Breg_{l_0}( \bsbX    \bsbB,  \bsbX \bsbB^*)  +
\frac{\sigma^2}{\mu} \{ (q\wedge \rank(\bsbX) +m  ) r+(p -q)\log q\} +\frac{ \sigma^2}{\mu}. \label{eqoracleineq}\end{align}
                Furthermore, assume   $\|\bsbX \bsbB\|_F^2/ n \ge \delta \| \bsbB\|_{2, \infty}^2, \forall \bsbB: \rank(\bsbB)\le (1+\vartheta) r^* , \|\bsbB\|_{2, \mathcal C}\le  \vartheta  q^{* 2}$ for some $\delta>0$, and     $r  =\vartheta r^*, q  =  \vartheta   q^*$, $\vartheta\ge 1$,   $q^*> 1$. Then with probability at least $1 - C\exp\{-c(m+\rank(\bsbX))\}$,     \begin{align}\| \hat \bsbB -   \bsbB^*\|_{2, \infty}^2
                \le   \frac{c_0\vartheta^2  \sigma^2}{  n \delta \mu^2 } \{(q^{*} \wedge \rank(\bsbX)+m  ) r^{*} + (p  -q^*)\log q^{*} \}  \label{eq2infnormbound}\end{align} for some constants $c_0, c, C>0$.
        \end{corollary}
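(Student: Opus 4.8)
The plan is to obtain the oracle inequality \eqref{eqoracleineq} by rerunning the argument behind Theorem \ref{th_genlosserr} against an arbitrary feasible comparison matrix, and then to turn that prediction-error control into the pointwise bound \eqref{eq2infnormbound} through the postulated $\ell_{2,\infty}$ restricted-eigenvalue condition, after upgrading the expectation estimate to a deviation estimate.

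\textbf{Step 1 (the oracle inequality).} Fix any $\bsbB$ with $\rank(\bsbB)\le r$ and $\|\bsbB\|_{2,\mathcal C}\le q$. It is feasible for CRL, so optimality of $\hat\bsbB$ gives $l_0(\bsbX\hat\bsbB;\bsbY)\le l_0(\bsbX\bsbB;\bsbY)$; expanding both sides around $\bsbX\bsbB^*$ via the Bregman identity \eqref{genbregdef} with $\bsbE=-\nabla l_0(\bsbX\bsbB^*;\bsbY)$ yields the basic inequality
\begin{align*}
\Breg_{l_0}(\bsbX\hat\bsbB,\bsbX\bsbB^*)\le\Breg_{l_0}(\bsbX\bsbB,\bsbX\bsbB^*)+\langle\bsbE,\bsbX(\hat\bsbB-\bsbB)\rangle .
\end{align*}
Since $\hat\bsbB-\bsbB$ has rank at most $2r$ and at most $q^2$ distinct rows, the metric-entropy computation used for Theorem \ref{th_genlosserr} (the Stirling-number count of row-partitions into $q$ blocks combined with a covering of the rank-$r$ Grassmannian) shows, for sub-Gaussian $\bsbE$, that $Z:=\sup\langle\bsbE,\bsbX\bsbDelta\rangle/\|\bsbX\bsbDelta\|_F$ over that difference class obeys $\EE Z^2\lesssim\sigma^2\{P_o(q,r)+1\}$. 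Bounding $\langle\bsbE,\bsbX(\hat\bsbB-\bsbB)\rangle\le Z\|\bsbX(\hat\bsbB-\bsbB)\|_F$, applying the triangle inequality and $\mu$-strong convexity in the form $\Breg_{l_0}(\bsbX\bsbB_1,\bsbX\bsbB_2)\ge(\mu/2)\|\bsbX(\bsbB_1-\bsbB_2)\|_F^2$, and using $2ab\le\epsilon a^2+b^2/\epsilon$ to absorb a constant fraction of $\Breg_{l_0}(\bsbX\hat\bsbB,\bsbX\bsbB^*)$ into the left side, we get $\Breg_{l_0}(\bsbX\hat\bsbB,\bsbX\bsbB^*)\lesssim\Breg_{l_0}(\bsbX\bsbB,\bsbX\bsbB^*)+Z^2/\mu$; taking expectations gives \eqref{eqoracleineq} (which at $\bsbB=\bsbB^*$, $\delta=\mu$, also recovers Theorem \ref{th_genlosserr}).

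\textbf{Step 2 (prediction error, in high probability).} Put $\bsbB=\bsbB^*$ in \eqref{eqoracleineq}; the approximation term vanishes, and $\mu$-strong convexity gives $\EE\|\bsbX(\hat\bsbB-\bsbB^*)\|_F^2\lesssim(\sigma^2/\mu^2)\{P_o(q,r)+1\}$. To get the claimed failure probability I would not use Markov's inequality but revisit Step 1: $Z$ is the supremum of a sub-Gaussian process over a set whose statistical dimension is of order $P_o(q,r)$, so sub-Gaussian concentration gives $\EP[Z^2\gtrsim\sigma^2(P_o(q,r)+t)]\le Ce^{-ct}$, hence $\EP[\|\bsbX(\hat\bsbB-\bsbB^*)\|_F^2\gtrsim(\sigma^2/\mu^2)(P_o(q,r)+t)]\le Ce^{-ct}$ for all $t\ge0$. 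Take $t\asymp m+\rank(\bsbX)$; since $q^*>1$ forces $P_o(q^*,r^*)\gtrsim m r^*+(p-q^*)\log q^*\gtrsim m+\rank(\bsbX)$ (either $(p-q^*)\log q^*\gtrsim p\ge\rank(\bsbX)$, or else $q^*$ already exceeds a fixed fraction of $\rank(\bsbX)$), the $t$-term is absorbed, and because $r=\vartheta r^*$, $q=\vartheta q^*$ give $P_o(q,r)\le c\vartheta^2 P_o(q^*,r^*)$, with probability at least $1-C\exp\{-c(m+\rank(\bsbX))\}$,
\begin{align*}
\|\bsbX(\hat\bsbB-\bsbB^*)\|_F^2\lesssim\frac{\vartheta^2\sigma^2}{\mu^2}\big\{(q^*\wedge\rank(\bsbX)+m)r^*+(p-q^*)\log q^*\big\} .
\end{align*}

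\textbf{Step 3 (from prediction error to $\ell_{2,\infty}$) and the main obstacle.} The difference $\hat\bsbB-\bsbB^*$ has rank at most $r+r^*=(1+\vartheta)r^*$, and since a difference of matrices with $a$ and $b$ distinct rows has at most $ab$ distinct rows, $\|\hat\bsbB-\bsbB^*\|_{2,\mathcal C}\le qq^*=\vartheta q^{*2}$; thus $\hat\bsbB-\bsbB^*$ lies precisely in the class on which $\|\bsbX\bsbB\|_F^2/n\ge\delta\|\bsbB\|_{2,\infty}^2$ is assumed, giving $\|\hat\bsbB-\bsbB^*\|_{2,\infty}^2\le\|\bsbX(\hat\bsbB-\bsbB^*)\|_F^2/(n\delta)$, and combining with Step 2 yields \eqref{eq2infnormbound}. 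The routine part is Step 1, a mechanical variant of Theorem \ref{th_genlosserr}; the delicate point is Step 2, where the stated failure probability $\exp\{-c(m+\rank(\bsbX))\}$ cannot come from the expectation bound alone, so one must extract a genuine tail inequality for the empirical process $Z$ from the chaining/covering estimates and verify that the exponential rate $m+\rank(\bsbX)$ is always dominated by the leading complexity $P_o(q^*,r^*)$ so it does not degrade the rate; obtaining the clean $\vartheta^2$ scaling of the constant also requires some care with the $q\wedge\rank(\bsbX)$ truncation and the $\log(\vartheta q^*)$ term.
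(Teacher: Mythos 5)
Your proposal follows the same route as the paper's own proof: the basic inequality from the optimality of $\hat\bsbB$ against an arbitrary feasible $\bsbB$, control of the stochastic term by the entropy/Stirling-number machinery behind Theorem~\ref{th_genlosserr}, an expectation-form bound yielding \eqref{eqoracleineq}, a high-probability form at $\bsbB=\bsbB^*$ with the tail exponent $m+\rank(\bsbX)$ absorbed because $P_o(q,r)\gtrsim m+\rank(\bsbX)$ when $q\ge 2$, the $\vartheta^2$ scaling via $\log q\le\vartheta\log q^*$, and finally the assumed restricted eigenvalue condition applied to $\hat\bsbB-\bsbB^*$, which has rank at most $(1+\vartheta)r^*$ and at most $\vartheta q^{*2}$ distinct rows. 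Your Steps 2 and 3 are essentially identical to what the paper does.

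The one place where your argument has a genuine gap is the assertion in Step 1 that $Z=\sup\langle\bsbE,\bsbX\bsbDelta\rangle/\|\bsbX\bsbDelta\|_F$, taken over the difference class with up to $q^2$ distinct rows and rank $2r$, satisfies $\EE Z^2\lesssim\sigma^2\{P_o(q,r)+1\}$ via ``the Stirling-number count of row-partitions into $q$ blocks.'' A difference class with $q^2$ distinct rows is not covered by partitions into $q$ blocks; a direct Dudley bound on that class counts partitions into $q^2$ blocks and column spaces of dimension $q^2\wedge\rank(\bsbX)$, i.e., it yields $P_o(q^2,2r)$ rather than $P_o(q,r)$, and the degrees-of-freedom term $(q^2\wedge\rank(\bsbX)+m)r$ can be worse than $(q\wedge\rank(\bsbX)+m)r$ by a factor of $q$. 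This is precisely the pitfall the paper flags (``otherwise $q^2$ will arise in the error bound''). The repair is the paper's orthogonal three-term decomposition $\bsbX\bsbDelta=\bsbA_1+\bsbA_2+\bsbA_3$, where each $\bsbA_i$ has rank at most $r$ and column space governed by a \emph{single} clustering with at most $q$ blocks (Lemmas~\ref{emprocBnd} and~\ref{emprocBnd2}), combined through the Pythagorean identity $\sum_i\|\bsbA_i\|_F^2=\|\bsbX\bsbDelta\|_F^2$. Your claimed bound on $Z$ is true, but only after this decomposition; as written, your entropy count does not deliver it.
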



        The RHS of \eqref{eqoracleineq}  offers a bias-variance tradeoff and as a result,  CRL applies to   $\bsbB^*$ with just   \textit{approximate} equisparsity and/or low rank.
 The $(2, \infty)$-norm error bound \eqref{eq2infnormbound} implies  {faithful}  cluster recovery   with  high probability,  if
the {signal-to-noise ratio}  is properly large:        $\min_{b_{j\cdot}^* \ne b_{k\cdot}^*}\allowbreak|  \mbox{avg}_{l} {b_{jl}^{* 2}} - \mbox{avg}_{l} b_{kl}^{*  2}|/\sigma^2 >    2\zeta$. Here,   $ \mbox{avg}_{l} {b_{jl}^{* 2}}$ is the average of $ b_{j1}^{* 2}, \ldots, b_{jm}^{* 2}$ and $\zeta=   \frac{c_0 \vartheta^2}{\delta \mu^2}   \{\frac{(p-q^*)\log q^{*}}{n m}+\frac{  r^* (q^{*}\wedge \rank(\bsbX))}{nm}+\frac{r^*}{n}\}.$
        Then, from  the bound on $\| \hat \bsbB -   \bsbB^*\|_{2, \infty}^2$, we know that for  $q= q^*$, $\hat \bsbB$ exhibits the same row clusters as does $\bsbB^*$, and for $q>q^*$,  $\hat \bsbB$     refines the clustering structure of $\bsbB^*$.


\section{Computation and Tuning}
\label{sec:comp}
In this section, we develop an efficient optimization-based  CRL
algorithm  with implementation ease and guaranteed convergence. We also propose  a novel model comparison  criterion and provide its theoretical justification   from a predictive learning perspective,
without assuming any infinite sample-size or large signal-to-noise ratio conditions.

\subsection{Algorithm design}
\label{subsec:algdesign}

In this part, we discuss how to solve the CRL  problem with   $q$ and $ r$  fixed. CRL  poses some intriguing  challenges in optimization: the problem is  highly nonconvex, $l_0$   is not restricted to the quadratic loss or a negative log-likelihood function, and  the equisparsity and low-rank constraints are of discrete nature. These obstacles render standard algorithms inapplicable.  In addition, as $p$ and $m$ may be large in real applications, the coefficient matrix $\bsbB\in \mathbb R^{p\times m}$ can easily contain an overwhelming number of unknowns. Then, how to make  use of its low-rank nature to reduce the computational cost at each iteration,  while maintaining the convergence of the overall procedure, is  crucial in large-scale computation.

Before describing the algorithm design in full detail, we provide below a simplified version of the   algorithm.

{\small
\begin{algorithm}[h]
        \caption{Clustered Reduced-rank Learning Algorithm
        }
        \label{alg_kmean}
        \begin{algorithmic}[1]
                \REQUIRE
                $(\bsbY,\bsbX) \in \mathbb R^{n \times m} \times \mathbb R^{n \times p}$,
                $l_0$:  a  loss function with $\nabla l_{0}$   $L$-Lipschitz continuous;
                $q$: desired number of clusters;
                $\epsilon$: error tolerance;
                $M$: maximum  number of iterations;
                a feasible starting point: $\bsbF^0, \bsbmu^0,   \bsbV^{[0]}$,  $\bsb{\alpha}^{[0]}$;   $ \rho$: inverse stepsize (e.g., $L\|\bsbX\|_2^2$)
                \STATE$k \leftarrow 0$, $\bsbS^{[0]} = \bsbF^0 \bsbmu^0$; 
                \WHILE{$k<M$ \textbf{and} $\epsilon <\|\bsbB^{[k]}-\bsbB^{[k-1]}\| \mbox{ (if existing)}  $ }
                \STATE $k \leftarrow k+1$;
                \STATE $\bsbB^{[k-1]}=\bsbS^{[k-1]} (\bsbV^{[k-1]})^T$,\,\,$\widetilde{\bsbY} \leftarrow \bsbB^{[k-1]}-\bsbX^T\nabla l_0(\bsb1 \bsb{\alpha}^{[k-1]}+\bsbX \bsbB^{[k-1]} )/\rho$;
                \STATE $\bsb{\alpha}^{[k]} \leftarrow  \bsb{\alpha}^{[k-1]}- [\nabla l_0(\bsb1 \bsb{\alpha}^{[k-1]}+\bsbX \bsbB^{[k-1]} )]^{T}\bsb1/\rho  $;
                \STATE $t \leftarrow 0, \bsbS^{(0)} \leftarrow \bsbS^{[k-1]}, \bsbV^{(0)} \leftarrow \bsbV^{[k-1]}, \bsbB^{(0)} \leftarrow \bsbS^{(0)}  \bsbV^{(0)T}$;
                \WHILE{not converged}
                \STATE $t \leftarrow t+1$;
                \STATE $\bsbW   \leftarrow \widetilde{\bsbY}^{T} \bsbS^{(t-1)}$;
                \STATE $\bsbV^{(t)} \leftarrow \bsbU_w\bsbV^T_w$ with $\bsbU_w$, $\bsbV_w$ from the SVD $\bsbW=\bsbU_w\bsbD_w\bsbV_w^T$;
                \STATE $\bsbL \leftarrow \widetilde{\bsbY} \bsbV^{(t)}$;
                \STATE With $\bsbmu^0$ as the initial cluster centroid matrix, call K-means on $\bsbL$ to update $\bsbF^l$ and $\bsbmu^{l}\, (l \geq 1)$ alternatively till convergence;
                \STATE $\bsbS^{(t)}  \leftarrow  \bsbF^{l} \bsbmu^{l}$, $\bsbB^{(t)}= \bsbS^{(t)}(\bsbV^{(t)})^T$, $\bsbmu^0 \leftarrow \bsbmu^{l}$;
                \ENDWHILE
                \STATE $ \bsbS^{[k]} \leftarrow \bsbS^{(t)}, \bsbV^{[k]} \leftarrow \bsbV^{(t)}, \bsbB^{[k]} \leftarrow \bsbS^{[k]} (\bsbV^{[k]})^T$;
                \ENDWHILE
                \RETURN $\bsbS=\bsbS^{[k]}, \bsbV=\bsbV^{[k]}, \bsbF=\bsbF^l$.
        \end{algorithmic}
\end{algorithm}
}
Define $\iota_{\bsbV}(\bsbV) =0$ if $\bsbV^T\bsbV=\bsbI$ and $+\infty$ otherwise. Similarly,  $\iota_{2,\mathcal C}(\bsbS)=0$ if $\| \bsbS \| _{2,\mathcal C} \leq q$ and $+\infty$ otherwise, and    $\iota_{\mathcal C}(\bsbS)=0$ if $\| \bsbs_i \| _{\mathcal C} \leq q^e$ for all $1 \leq i \leq r$ and $+\infty$ otherwise. We use    $\iota(\bsbS)$ to denote   $\iota_{2,\mathcal C}(\bsbS)$  in the row-equisparsity case and    $\iota_{\mathcal C}(\bsbS)$ in the rankwise case. The loss  $l_0(\bsbX \bsbB; \bsbY)$ is also written as $l_0(\bsbX \bsbS \bsbV^T; \bsbY)$
  or   $l(\bsbS,\bsbV; \bsbX, \bsbY)$,  often abbreviated as $l_0(\bsbX\bsbS \bsbV^T)$ or $l(\bsbS,\bsbV)$ for convenience.
The general CRL optimization problem  can be stated as
\begin{align}\min_{\bsbS \in \mathbb R^{p \times r},\bsbV \in \mathbb R^{m \times r}} f(\bsbS,\bsbV):= l(\bsbS,\bsbV; \bsbX, \bsbY)+ \iota(\bsbS)+\iota_{\bsbV}(\bsbV).\label{geneprob}
\end{align}
For simplicity, assume   the gradient of $l_0$ is $L$-Lipschitz continuous for some $L>0$: \begin{align} \| \nabla l_0  (\bsb{\Theta}_1) -   \nabla l_0  (\bsb{\Theta}_2)\|_F \leq L  \|\bsb{\Theta}_1 -\bsb{\Theta}_2 \|_F, \forall \bsb{\Theta}_1, \bsb{\Theta}_2 .\label{lipcond}
\end{align} 

 One idea might be to  apply alternating optimization directly, but    it would encounter  difficulties when $l_0$ is  non-quadratic.
Motivated by Theorem \ref{th_local}, we use a surrogate function to design an iterative algorithm.  Given $(\bsbS^{-}$, $\bsbV^{-})$, define
 \begin{align}
 \begin{split}
G_\rho(\bsbS{,}\bsbV; \bsbS^{-}{,}\bsbV^{-}) = \ &  l(\bsbS^{-}{,}\bsbV^{-}) + \langle \nabla l_0(\bsbX\bsbB^{-}), \bsbX(\bsbB-\bsbB^{-}) \rangle  \\ &+ \frac{\rho}{2} \|\bsbB-\bsbB^{-}\|_F^2   + \iota(\bsbS) + \iota_{\bsbV}(\bsbV),
\end{split}
\label{sol2}
\end{align}
where $\bsbB^{-}=\bsbS^{-}  (\bsbV^{-})^T$,  $\bsbB=\bsbS\bsbV^T$.  The dependence of $G$ on $\rho$ is often dropped for notational simplicity.
  \eqref{sol2} applies   linearization on $\bsbS \bsbV^T$ as a whole to construct the surrogate, but not on $\bsbS$ or $\bsbV$ individually.

Given any $(\bsbS^{[0]}, \bsbV^{[0]})$, let  $(\bsbS^{[k]}, \bsbV^{[k]})$  ($k\ge 1$) satisfy
\begin{align}
(\bsbS^{[k]}, \bsbV^{[k]})\in\argmin_{(\bsbS, \bsbV)}G(\bsbS, \bsbV; \bsbS^{[k-1]},\bsbV^{[k-1]}), \label{seqdef1}
\end{align}
or just
\begin{align}
G(\bsbS^{[k]}, \bsbV^{[k]}; \bsbS^{[k-1]},\bsbV^{[k-1]})\le G(\bsbS^{[k-1]}, \bsbV^{[k-1]}; \bsbS^{[k-1]},\bsbV^{[k-1]}).\label{seqdef2}
\end{align}
We can show that   whenever $\rho$ is chosen large enough,   \begin{align}
f(\bsbS^{[k]}, \bsbV^{[k]})&\leq G(\bsbS^{[k]},\bsbV^{[k]}; \bsbS^{[k-1]},\bsbV^{[k-1]}), \label{surrogate_ineqs0}
\end{align}
from which it follows that $f(\bsbS^{[k]}, \bsbV^{[k]})\leq G(\bsbS^{[k-1]}, \bsbV^{[k-1]}; \bsbS^{[k-1]},\bsbV^{[k-1]}) =f(\bsbS^{[k-1]}, \bsbV^{[k-1]})$.
 A conservative choice is   $\rho  = L\|\bsbX \|_2^2$ (cf. Appendix \ref{appsub:impldetails}), but the structural parsimony  in $\bsbB^{[k]}$ makes it possible to pick a    much smaller $\rho$, which is beneficial  from  Theorem \ref{th_local}.
  Let $\overline\kappa_2(q,r)$  satisfy  $\Breg_{2} ( \bsbX   \bsbB_1, \bsbX   \bsbB_2  ) \le    \overline\kappa_2(q, r) \Breg_2(   \bsbB_1, \bsbB_2)$, for  $ \bsbB_i  :   \|\bsbB_i \|_{2, \mathcal C}\le q, \rank(\bsbB_i) \le r$. Summarizing the above derivations gives the following computational convergence. 

\begin{theorem} \label{th:compconv}Given  any feasible initial point $(\bsbS^{[0]},\bsbV^{[0]})$, the sequence of iterates $(\bsbS^{[k]},\bsbV^{[k]})$ generated from \eqref{seqdef1} or \eqref{seqdef2} satisfies $$ f(\bsbS^{[k-1]},\bsbV^{[k-1]}) -f(\bsbS^{[k]},\bsbV^{[k]}) \ge \frac{\rho -L\overline{\kappa}_2(q, r) }{2} \|\bsbB^{[k]} - \bsbB^{[k-1]} \|_F^2   $$
 for any  $  k \geq 1$, where  $\bsbB^{[k]} = \bsbS^{[k]} (\bsbV^{[k]})^T$.
Therefore, if $\rho >L    \overline{\kappa}_2(q, r)$, $ f(\bsbS^{[k]},\bsbV^{[k]})$ is monotonically decreasing, $ \bsbB^{[k]} - \bsbB^{[k-1]} \rightarrow 0$ as $k\rightarrow +\infty $ and $ \min_{1\le k \le K}\|\bsbB^{[k]} - \bsbB^{[k-1]} \|_F^2\le \frac{1}{K}\cdot\frac{2 f(\bsbS^{[0]},\bsbV^{[0]}) }{ \rho -L\overline{\kappa}_2(q, r)}$, $\forall K\ge 1$.
\end{theorem}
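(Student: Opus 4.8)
The proof will be a majorization--minimization (block-coordinate descent) convergence argument, the only refinement being that the sufficient-decrease constant is sharpened from the crude $L\|\bsbX\|_2^2$ to the structure-aware $L\overline{\kappa}_2(q,r)$. First I would show that $G$ in \eqref{sol2} majorizes $f$ with this constant. Since $\nabla l_0$ is $L$-Lipschitz by \eqref{lipcond}, the loss $l_0$ is $L$-strongly smooth, so the descent inequality applied at the matrix pair $\bsbX\bsbB$ and $\bsbX\bsbB^{-}$ gives
\[
l_0(\bsbX\bsbB)\le l_0(\bsbX\bsbB^{-})+\langle\nabla l_0(\bsbX\bsbB^{-}),\,\bsbX(\bsbB-\bsbB^{-})\rangle+\frac{L}{2}\,\|\bsbX(\bsbB-\bsbB^{-})\|_F^2 .
\]
For feasible $(\bsbS,\bsbV)$ and $(\bsbS^{-},\bsbV^{-})$, the matrices $\bsbB=\bsbS\bsbV^T$ and $\bsbB^{-}=\bsbS^{-}(\bsbV^{-})^T$ both obey $\|\cdot\|_{2,\mathcal C}\le q$ and $\rank(\cdot)\le r$, so the definition of $\overline{\kappa}_2(q,r)$ bounds $\|\bsbX(\bsbB-\bsbB^{-})\|_F^2=2\Breg_2(\bsbX\bsbB,\bsbX\bsbB^{-})\le\overline{\kappa}_2(q,r)\,\|\bsbB-\bsbB^{-}\|_F^2$. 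Substituting this, adding $\iota(\bsbS)+\iota_{\bsbV}(\bsbV)$ to both sides, and using $l(\bsbS^{-},\bsbV^{-})=l_0(\bsbX\bsbB^{-})$ with the form of $G$ in \eqref{sol2}, I would arrive at
\[
f(\bsbS,\bsbV)\le G(\bsbS,\bsbV;\bsbS^{-},\bsbV^{-})+\frac{L\overline{\kappa}_2(q,r)-\rho}{2}\,\|\bsbB-\bsbB^{-}\|_F^2 ,
\]
a $\rho$-free, sharpened form of \eqref{surrogate_ineqs0} (when $(\bsbS,\bsbV)$ is infeasible both sides are $+\infty$ and it holds trivially).

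Next I would turn this into the per-iteration estimate. Setting $(\bsbS,\bsbV)=(\bsbS^{[k]},\bsbV^{[k]})$ and $(\bsbS^{-},\bsbV^{-})=(\bsbS^{[k-1]},\bsbV^{[k-1]})$, and noting that the linear and quadratic parts of $G$ vanish when its argument equals the anchor, so that $G(\bsbS^{[k-1]},\bsbV^{[k-1]};\bsbS^{[k-1]},\bsbV^{[k-1]})=f(\bsbS^{[k-1]},\bsbV^{[k-1]})$, the defining relation \eqref{seqdef1}---or merely the weaker \eqref{seqdef2}---gives $G(\bsbS^{[k]},\bsbV^{[k]};\bsbS^{[k-1]},\bsbV^{[k-1]})\le f(\bsbS^{[k-1]},\bsbV^{[k-1]})$, and chaining the two inequalities yields
\[
f(\bsbS^{[k-1]},\bsbV^{[k-1]})-f(\bsbS^{[k]},\bsbV^{[k]})\ge\frac{\rho-L\overline{\kappa}_2(q,r)}{2}\,\|\bsbB^{[k]}-\bsbB^{[k-1]}\|_F^2 ,
\]
which is the claimed inequality. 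Along the way I would verify that feasibility propagates: the initial point is feasible by hypothesis, and if $(\bsbS^{[k-1]},\bsbV^{[k-1]})$ is feasible then \eqref{seqdef2} forces $G(\bsbS^{[k]},\bsbV^{[k]};\bsbS^{[k-1]},\bsbV^{[k-1]})<\infty$, hence $\iota(\bsbS^{[k]})=\iota_{\bsbV}(\bsbV^{[k]})=0$, so the $\overline{\kappa}_2(q,r)$ bound is applicable at every step.

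Finally, for the global statements, if $\rho>L\overline{\kappa}_2(q,r)$ the right-hand side above is nonnegative, so $\{f(\bsbS^{[k]},\bsbV^{[k]})\}$ is nonincreasing; summing the per-iteration inequality over $k=1,\dots,K$ telescopes to $\frac{\rho-L\overline{\kappa}_2(q,r)}{2}\sum_{k=1}^{K}\|\bsbB^{[k]}-\bsbB^{[k-1]}\|_F^2\le f(\bsbS^{[0]},\bsbV^{[0]})-f(\bsbS^{[K]},\bsbV^{[K]})\le f(\bsbS^{[0]},\bsbV^{[0]})$, the last step using $f\ge0$ (otherwise subtract $\inf f$, which is finite for the $l_0$ considered). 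Dividing by $K$ and bounding the minimum by the average gives $\min_{1\le k\le K}\|\bsbB^{[k]}-\bsbB^{[k-1]}\|_F^2\le\frac{1}{K}\cdot\frac{2f(\bsbS^{[0]},\bsbV^{[0]})}{\rho-L\overline{\kappa}_2(q,r)}$, and letting $K\to\infty$ yields $\sum_{k\ge1}\|\bsbB^{[k]}-\bsbB^{[k-1]}\|_F^2<\infty$, hence $\bsbB^{[k]}-\bsbB^{[k-1]}\to0$. The hard part---the only step beyond routine bookkeeping---is the first: certifying that $G$, obtained by linearizing the composite $l_0(\bsbX\,\cdot\,)$ in the \emph{product} $\bsbB=\bsbS\bsbV^T$ rather than in $\bsbS$ and $\bsbV$ individually, genuinely majorizes $f$ with the structure-aware $L\overline{\kappa}_2(q,r)$ in place of $L\|\bsbX\|_2^2$, and keeping the indicator/feasibility accounting straight so that the smaller constant may be used throughout.
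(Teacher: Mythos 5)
Your proposal is correct and follows essentially the same route as the paper: the paper establishes the majorization gap $G(\bsbS^{[k]},\bsbV^{[k]};\bsbS^{[k-1]},\bsbV^{[k-1]})-f(\bsbS^{[k]},\bsbV^{[k]})\ge \frac{\rho}{2}\|\bsbB^{[k]}-\bsbB^{[k-1]}\|_F^2-\frac{L}{2}\|\bsbX(\bsbB^{[k]}-\bsbB^{[k-1]})\|_F^2$ by writing the Bregman remainder of $l_0$ as an integral of gradient differences and applying the $L$-Lipschitz bound (equivalent to your descent-lemma step), then invokes the definition of $\overline{\kappa}_2(q,r)$ on the feasible iterates and the relation $G(\bsbS^{[k]},\bsbV^{[k]};\cdot)\le G(\bsbS^{[k-1]},\bsbV^{[k-1]};\cdot)=f(\bsbS^{[k-1]},\bsbV^{[k-1]})$, followed by the same telescoping. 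Your feasibility-propagation and $f\ge 0$ remarks are consistent with what the paper implicitly uses, so there is no gap.
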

When the value of      $L$ is unknown,         \eqref{surrogate_ineqs0} can  be used for line search to get a proper  $\rho$  in implementation.
After some simple algebra,  the optimization problem in \eqref{seqdef1} is
\begin{equation}\label{sol2_3}
\min_{\bsbS \in \mathbb R ^{ p \times r},\bsbV \in \mathbb R ^{ m \times r}} \frac{1}{2} \big\|  \bsbB^{[k-1]}-\frac{\bsbX^T\nabla l_0(\bsbX\bsbB^{[k-1]})}{\rho} -  \bsbS\bsbV^T\big\|_F^2 + \iota(\bsbS) \quad \mbox{s.t.}\ \   \bsbV^T \bsbV =\bsbI.
\end{equation}
 \eqref{sol2_3}  is   the   unsupervised CRL problem. There is no need to solve the problem in depth though;  we   use
block coordinate descent (BCD) to get some  $(\bsbS^{[k]}, \bsbV^{[k]})$ that satisfies
 \eqref{seqdef2}.
Let
\begin{align}
\widetilde{\bsbY}=\bsbB^{[k-1]}-\bsbX^T{\nabla l_0(\bsbX\bsbB^{[k-1]})}/{\rho},\  \  \bsbW = \widetilde{\bsbY}^T \bsbS , \ \ \bsbL= \widetilde{\bsbY}\bsbV.
\end{align}
First,  with $\bsbS$ held fixed, a globally optimal $\bsbV$ can be  obtained by Procrustes rotation: $\bsbV = \bsbU_w \bsbV_w^T$, where $\bsbU_w$ and $\bsbV_w$ are from the SVD of
$
\bsbW. 
$
Equivalently, $\bsbV = \{(\bsbW \bsbW^T)^{+}\}^{1/2} \bsbW.$
The Procrustes rotation     simplifies to a normalization operation $ \bsbW  /\| \bsbW\|_2$  when $r=1$.

Next, we solve for $\bsbS$ given $\bsbV$. Using the orthogonal decomposition $\|\widetilde{\bsbY}-\bsbS\bsbV^T \|_F^2=\allowbreak \|\bsbL-\bsbS \|_F^2 + \| \widetilde{\bsbY}\bsbV_\perp \|_F^2 , $ the problem  reduces to a low-dimensional one:  $$
\min_{\bsbS \in \mathbb R^{p \times r}}     \|  \bsbL-\bsbS  \|_F^2 +\iota(\bsbS).
$$
Consider $\|\bsbS\|_{2, C}\le q$ first. Let  $\bsbS  =\bsbF\bsbmu$, where $\bsbmu \in \mathbb R^{q \times r}$ stores the   $q$ cluster centroids, and $\bsbF\in \mathbb R^{p\times q}  $ is the associated binary  membership matrix, with $\bsbF[j,k]=1$ indicating that the $j$-row of $\bsbL$ falls into the $k$-th cluster, i.e.,  $\bsbF \in \mathcal F^{p\times q}:= \{\bsbF \in\mathbb   R^{p\times q}: \bsbF\ge 0, \bsbF \bsb{1} = \bsb{1}, \bsbF^T \bsbF \mbox{ is diagonal} \}$.  BCD can be used to update $\bsbF$ and $\bsbmu$ alternatively: given $\bsbF$, the optimal
$\bsbmu$ is $ (\bsbF^T \bsbF)^{-1}\bsbF^T \bsbL$, while  given $\bsbmu=[\bsbmu_1,\cdots,\bsbmu_q]^T$, it suffices to solve        $\min_{ \bsbf\in \{0, 1\}^q, \bsb{1}^T\bsbf=1  } \|\bsbL[j,:] -\bsbf^T\bsbmu \|_2^2 $,   $1\le j\le  p$, from which it follows that $\bsbF[j,c_0]=1 $ for $c_0=\argmin_{1 \leq c \leq q} \| \bsbL[j,:] - \bsbmu_c \|_2^2$,   and $\bsbF[j,c]=0$ for any $c \neq c_0$. The  algorithm   turns out to be    K-means.  Similarly, for the rank-wise   constraint $\|\bsbs_i\|_{\mathcal C} \leq q^e$ ($1 \leq i \leq r$), we just need to run K-means on each  column of $\bsbL$ to get   $\bsbS$.  State-of-the-art implementations of K-means    skillfully  use  initialization strategies
and    usually give a     high-quality or even globally optimal  solution  in low dimensions \citep{zhang2009k,bachem2016fast}. Of course, other unsupervised clustering criteria and algorithms can be seamlessly integrated into the framework.

To sum up, the CRL    algorithm   performs simultaneous dimension reduction and clustering and  has guaranteed convergence. Regarding  the per-iteration complexity, apart from some fundamental matrix operations, the algorithm involves the SVD of $\bsbW$ and the clustering on $\bsbL$. Neither is costly in computation, since $\bsbW$ and $ \bsbL$  have only $r$ columns.


\subsection{Parameter tuning}
\label{subsec:tuning}
CRL has two  regularization parameters   $q $ and $r$; once they are given,  CRL can  determine the  model structure that fits  best to the data, including the cluster sizes and the projection subspace. In many applications, we find it     possible  to  directly specify  these bounds  based on domain knowledge, and    they are not very sensitive parameters. But for the sake of cluster and/or rank selection,  one must carefully tune the regularization parameters in a data-adaptive manner.
The goal of this subsection is to design a proper model comparison  criterion assuming a series of candidate models have  been obtained (rather than  developing a numerical optimization algorithm).

It is well known that  parameter tuning is quite  challenging  in the  context of clustering.  AIC, BIC, and many other known information criteria do not seem to work well, and what makes a sound complexity penalty term   is a notable open problem.    Fortunately, the statistical studies    in Section \ref{sec_theory}    shed   some  light on the topic. We advocate  a new  model penalty  $P_{o}(\cdot)$ as follows
\begin{align}
P_{o}(\bsbB) =  \{\|\bsbB\|_{2, \mathcal C}\wedge \rank(\bsbX) +m  \} \rank(\bsbB) +\{p -\|\bsbB\|_{2, \mathcal C}\}\log \|\bsbB\|_{2, \mathcal C}. \label{picpenpo}
\end{align}   

\begin{theorem} \label{th:pic}
        Given any  differentiable loss $l_0$, assume that  $\vect (\bsbE)$ (cf. \eqref{noise-def}) is sub-Gaussian  with mean zero and scale bounded by $\sigma$, $\bsbB^*\ne \bsb0$ and        there exist constants $\delta>0$ and $C\ge 0$ such that $\bm\Delta_{l_0}(\bsbX\bsbB_1,\bsbX\bsbB_2) +C \sigma^2 P_o( \bsbB_1) + C \sigma^2 P_o(\bsbB_2)\ge \delta\Breg_2(\bsbX\bsbB_1,\bsbX\bsbB_2)$ for all $\bsbB_i$.   Then,   any $\hat \bsbB$ that minimizes the following criterion
        \begin{align}
        l_0 (\bsbX \bsbB; \bsbY)  + A \sigma^2 \big[\{\|\bsbB\|_{2, \mathcal C}\wedge \rank(\bsbX) +m  \} \rank(\bsbB) +\{p -\|\bsbB\|_{2, \mathcal C}\}\log \|\bsbB\|_{2, \mathcal C}\big],  \label{pic-gen}
        \end{align} where $A$
        is a sufficiently large constant, must satisfy    $$ \EE\big[ \|  \bsbX \hat \bsbB  - \bsbX \bsbB^* \|^2_F  \vee P_o(\hat \bsbB)\big]
        \lesssim    \sigma^2\big \{(q^*\wedge \rank(\bsbX) +m  ) r^* +(p -q^*)\log q^*\big\}. $$
\end{theorem}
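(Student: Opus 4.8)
\textbf{Proof proposal for Theorem \ref{th:pic}.}

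The plan is to treat the information criterion \eqref{pic-gen} as a penalized empirical risk minimization problem and run a basic-inequality argument, controlling the stochastic fluctuation by a supremum of an empirical process indexed by the union of the manifolds $\{\bsbB : \|\bsbB\|_{2,\mathcal C} \le q,\ \rank(\bsbB)\le r\}$ over all admissible pairs $(q,r)$. First I would write down the optimality of $\hat\bsbB$ against the competitor $\bsbB^*$ in the objective \eqref{pic-gen}: since $\hat\bsbB$ minimizes $l_0(\bsbX\bsbB;\bsbY) + A\sigma^2 P_o(\bsbB)$, we get $l_0(\bsbX\hat\bsbB;\bsbY) + A\sigma^2 P_o(\hat\bsbB) \le l_0(\bsbX\bsbB^*;\bsbY) + A\sigma^2 P_o(\bsbB^*)$. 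Rearranging and using the definition of the generalized Bregman function $\breg_{l_0}$ together with the effective noise $\bsbE = -\nabla l_0(\bsbX\bsbB^*;\bsbY)$ from \eqref{noise-def}, the difference $l_0(\bsbX\hat\bsbB;\bsbY) - l_0(\bsbX\bsbB^*;\bsbY)$ splits into the (nonnegative, up to the strict-convexity-free issue) Bregman term $\breg_{l_0}(\bsbX\hat\bsbB, \bsbX\bsbB^*)$ plus the linear ``cross'' term $\langle \bsbE, \bsbX(\hat\bsbB - \bsbB^*)\rangle$. This yields the basic inequality
\begin{align*}
\breg_{l_0}(\bsbX\hat\bsbB, \bsbX\bsbB^*) + A\sigma^2 P_o(\hat\bsbB) \le \langle \bsbE, \bsbX(\hat\bsbB - \bsbB^*)\rangle + A\sigma^2 P_o(\bsbB^*),
\end{align*}
and the task becomes bounding the inner-product term.

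Next I would handle the stochastic term $\langle \bsbE, \bsbX(\hat\bsbB - \bsbB^*)\rangle$ by a peeling/chaining argument over the scale of $\|\bsbX(\hat\bsbB - \bsbB^*)\|_F$ combined with a union bound over the complexity level $P_o(\hat\bsbB)$. The crucial ingredient is a metric-entropy estimate for the set of matrices of the form $\bsbX\bsbB$ with $\|\bsbB\|_{2,\mathcal C}\le q$ and $\rank(\bsbB)\le r$: as already invoked in Theorems \ref{th_local} and \ref{th_genlosserr}, such a manifold has log-covering number of order $(q\wedge\rank(\bsbX)+m)r + (p-q)\log q$ — precisely $P_o(q,r)$ — where the $(p-q)\log q$ factor comes from the Stirling-number count of partitions of $p$ features into at most $q$ blocks, and the $(q\wedge\rank(\bsbX)+m)r$ factor from the low-rank bilinear part (a $q\times r$ centroid block times an $m\times r$ orthonormal factor, with the $q$ replaced by $\rank(\bsbX)$ after composing with $\bsbX$). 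Since $\vect(\bsbE)$ is sub-Gaussian with scale $\sigma$, Dudley's entropy integral (or a direct one-step discretization plus a sub-Gaussian maximal inequality) gives, for fixed complexity level $P_o$, a bound of the type $\sup \langle \bsbE, \bsbX\bsbDelta\rangle \lesssim \sigma\|\bsbX\bsbDelta\|_F\sqrt{P_o} + \sigma^2 P_o$ with high probability, uniformly; the extra $\sigma^2 P_o$ term absorbs the deviation tail. Then the $A\sigma^2 P_o(\hat\bsbB)$ on the left, for $A$ large enough, dominates $\sigma^2 P_o(\hat\bsbB)$, and an application of $2ab \le \varepsilon a^2 + b^2/\varepsilon$ to $\sigma\|\bsbX\hat\bsbB - \bsbX\bsbB^*\|_F\sqrt{P_o(\hat\bsbB)}$ lets the quadratic piece be soaked up by $\delta\Breg_2(\bsbX\hat\bsbB,\bsbX\bsbB^*)$ after invoking the stated regularity condition $\breg_{l_0}(\bsbX\bsbB_1,\bsbX\bsbB_2) + C\sigma^2 P_o(\bsbB_1) + C\sigma^2 P_o(\bsbB_2) \ge \delta\Breg_2(\bsbX\bsbB_1,\bsbX\bsbB_2)$. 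Rearranging, both $\|\bsbX\hat\bsbB - \bsbX\bsbB^*\|_F^2$ and $P_o(\hat\bsbB)$ end up controlled by $\sigma^2 P_o(\bsbB^*) = \sigma^2\{(q^*\wedge\rank(\bsbX)+m)r^* + (p-q^*)\log q^*\}$, up to constants; taking expectations (the high-probability bound has an exponential tail so the residual contributes only a lower-order $\sigma^2$ term, absorbed into the $\lesssim$) finishes the proof.

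The main obstacle I anticipate is making the union bound over \emph{all} $(q,r)$ — equivalently over the discrete complexity index $P_o(\hat\bsbB)$ — go through cleanly, since unlike the fixed-$(q,r)$ analysis of Theorems \ref{th_local} and \ref{th_genlosserr} here the model size is itself data-driven. This requires that the entropy/deviation bound at level $P_o$ decays (or at least does not grow) fast enough that summing the failure probabilities over the geometric grid of complexity levels still yields a small total; the $\sigma^2 P_o$ ``inflation'' term in the maximal inequality has to be chosen with a large enough multiplicative constant (hence ``$A$ sufficiently large'') so that for every candidate complexity the penalty strictly beats the fluctuation, which is exactly the mechanism that makes $P_o$ a legitimate model-complexity functional. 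A secondary technical point is that $l_0$ is only assumed differentiable, not convex, so $\breg_{l_0}$ need not be nonnegative; one must carry it as a signed quantity and lean entirely on the regularity inequality to convert it into control of $\Breg_2(\bsbX\hat\bsbB,\bsbX\bsbB^*) = \|\bsbX\hat\bsbB-\bsbX\bsbB^*\|_F^2/2$, rather than discarding it as in the convex case. Beyond these, the argument is the standard penalized-M-estimation template, and I would expect it to parallel the proof of Theorem \ref{th_genlosserr} with the fixed penalty replaced by the data-adaptive $A\sigma^2 P_o(\cdot)$.
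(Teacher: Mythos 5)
Your proposal is correct and follows essentially the same route as the paper's proof: the basic inequality from optimality of $\hat\bsbB$, the entropy-based bound on $\langle\bsbE,\bsbX(\hat\bsbB-\bsbB^*)\rangle$ with complexity $P_o(q,r)$, a union bound over all complexity levels $(q,r)$ (the paper implements this by redefining the residual $R_1$ as a triple supremum over $q$, $r$, and the manifold, and verifying that $\sum_{q,r}\exp\{-cP_o(q,r)\}$ is summable since $P_o(q,r)\gtrsim m+\rank(\bsbX)$), and absorption of the quadratic cross term via the stated regularity condition with $A$ large enough to dominate the fluctuation at every candidate complexity. The only detail left implicit in your sketch — that the geometric sum of failure probabilities actually converges — is exactly the computation the paper carries out, and your identification of it as the crux is accurate.
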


Compared with the results   in Section \ref{subsec:upperrCRL},   Theorem  \ref{th:pic} offers the same desired order of statistical accuracy, but   involves {no} regularization parameters. We refer to the new information criterion defined by \eqref{pic-gen} as the     predictive information criterion (\textbf{{PIC}}). Unlike   most information criteria, PIC has a nonasymptotic justification, and  does not need  $n\rightarrow +\infty$ or   any growth conditions on $p $ or $m$. The new criterion aims to achieve the best prediction accuracy, and applies regardless of  the signal-to-noise ratio.

If the effective noise has a constant scale parameter like in classification,  \eqref{pic-gen} can be directly used. But some problems have an unknown $\sigma$. For example,      $\bsbY | \bsbX \bsbB^*$ may belong to  the  exponential dispersion family with a density   $$\exp\big[   \{\langle \;\cdot\;,  \bsbX\bsbB^* \rangle -  {b}(\bsbX\bsbB^* )\}/\phi\big]$$ with respect to some base measure.  For such models with dispersion, the standard  practice is to substitute a preliminary estimate $\hat\sigma^2$ for   $\sigma^2$ in \eqref{pic-gen}, but a fascinating  fact is that in some scenarios like regression ($b = \|\cdot \|_F^2/2$),   the estimation of $\sigma^{2}$ can be totally  bypassed   with a scale-free form of PIC.

Recall the Orlicz $\psi_\alpha$-norm \citep{van1996weak} defined for a random variable $Y$: $\| Y\|_{\psi_\alpha}     = \inf  \big\{ t>0: \EE \exp[( Y/{t})^{\alpha}]\le 2 \big\}$. Sub-Gaussian  random variables have finite    $\psi_2$-norm, and sub-exponential random variables (like Poisson and $\chi^2$) have finite $\psi_1$-norm. As $\alpha<1$,  random variables with even heavier  tails are included \citep{gotze2019concentration}.

\begin{theorem}\label{cor:sf-pic}
Let  the loss  be
\begin{align} \label{glmloss-unscaled}
l_0(\bsbX \bsbB; \bsbY) = -\langle \bsbY , \bsbX \bsbB \rangle + b(\bsbX \bsbB),
\end{align}
where   $b$ is differentiable,    $\mu$-strongly convex and $\mu'$-strongly smooth with $\kappa =\mu'/\mu$, the domain $\Omega = \{\bsb{\eta} \in \mathbb R^{n\times m}: b(\bsb{\eta})< \infty\}$ is open, and $\bsbY$ takes values in the closure of $\{\nabla b(\bsbe): \bsbe\in \Omega\}$. Assume the effective noise $\bsbE = \bsbY - \nabla b(\bsbX \bsbB^*)$    has independent, zero-mean entries $e_{ik}$ that satisfy  $\|e_{ik}\|_{\psi_\alpha}\le \sigma$ for some $\alpha\in (0, 2]$  and are nondegenerate in the sense that $var(e_{ik})\asymp    \sigma^2$, where $\sigma$ is  an unknown parameter. Suppose that the true model is not over-complex in the sense that $\kappa P_o(\bsbB^*) \allowbreak <  mn / A_0$ for some  constant $A_0>0$.
        Let $\delta(\bsbB)=A \{  P_o(\bsbB)/( mn/\kappa  )\}$ for some constant $A: A<A_0$, and so $\delta(\bsbB^*)<1$. Consider the following criterion 
        \begin{align}
       \frac{ {l_0(\bsbX \bsbB; \bsbY) +  b^*( \bsbY)} }{  1 - \delta(\bsbB) },\label{eq:sf-pic}
        \end{align}
        where $b^*(\cdot ) = \sup_{\bsb{\eta}}  \langle \cdot, \bsb{\eta}\rangle  - b(\bsb{\eta})$ is the \emph{Fenchel conjugate} of $b(\cdot)$.

Then, for sufficiently large  values of $A_0,A$,   any $\hat\bsbB$ that minimizes \eqref{eq:sf-pic} 
        subject to $\delta(\bsbB)<1$    satisfies $$  \| \bsbX \hat \bsbB - \bsbX \bsbB^*\|_F^2 \vee \frac{P_o(\hat \bsbB)}{\mu^2}\, \lesssim \, \frac{\kappa\sigma^2}{\mu^2} \{(q^*\wedge \rank(\bsbX) +m  ) r^*   +(p -q^*  )\log q^*\}$$  with probability at least $1 -C \exp\{ -c  (m+  P_o(\bsbB^*))\}-C \exp \{ -c   (mn)^{\alpha/2} \}$,  or          $1 - \allowbreak  C \exp\{- c\allowbreak(m+\rank(\bsbX))\}  -C \exp \{ -c   (mn)^{\alpha/2}  \}$ as $q^* > 1$,  for some
        constants $C,  c >0$.
\end{theorem}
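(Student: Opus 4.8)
The plan is to reduce the scale-free criterion \eqref{eq:sf-pic} to the scaled PIC objective \eqref{pic-gen} of Theorem \ref{th:pic}, with $\sigma^2$ replaced by a quantity that is, with high probability, pinned between two constant multiples of the true $\sigma^2$. First I would observe the key algebraic identity for GLM losses of the form \eqref{glmloss-unscaled}: since $b^*$ is the Fenchel conjugate of $b$, one has $l_0(\bsbX \bsbB; \bsbY) + b^*(\bsbY) = \breg_{b}(\nabla b^*(\bsbY), \bsbX \bsbB) = \Breg_{b^*}(\bsbY, \nabla b(\bsbX\bsbB))$, i.e. the numerator of \eqref{eq:sf-pic} is exactly the Bregman discrepancy between $\bsbY$ and the fitted mean $\nabla b(\bsbX\bsbB)$, and in particular it is nonnegative and vanishes only when $\bsbX\bsbB$ reproduces $\bsbY$ in the mean parametrization. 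Evaluating at the truth gives $l_0(\bsbX\bsbB^*;\bsbY) + b^*(\bsbY) = \breg_b(\nabla b^*(\bsbY), \bsbX\bsbB^*)$, and because $b$ is $\mu$-strongly convex and $\mu'$-strongly smooth, this is sandwiched: $(\mu/2)\|\text{(noise in }\eta\text{-scale)}\|^2 \le \cdots \le (\mu'/2)\|\cdots\|^2$, and by the nondegeneracy assumption $\mathrm{var}(e_{ik})\asymp\sigma^2$ together with a concentration bound for $\psi_\alpha$ sums, the numerator at $\bsbB^*$ concentrates around a quantity $\asymp mn\sigma^2/\mu$ (up to constants depending on $\kappa$), on the stated event of probability $1 - C\exp\{-c(mn)^{\alpha/2}\}$.

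Next I would run the basic-inequality argument as in the proof of Theorem \ref{th:pic}. Writing $N(\bsbB) := l_0(\bsbX\bsbB;\bsbY) + b^*(\bsbY) \ge 0$, optimality of $\hat\bsbB$ for \eqref{eq:sf-pic} gives $N(\hat\bsbB)/(1-\delta(\hat\bsbB)) \le N(\bsbB^*)/(1-\delta(\bsbB^*))$. Rearranging, $N(\hat\bsbB) - N(\bsbB^*) \le \delta(\hat\bsbB)N(\hat\bsbB) - \delta(\bsbB^*)N(\bsbB^*)\cdot\frac{1-\delta(\hat\bsbB)}{1-\delta(\bsbB^*)}$, and since $\delta(\bsbB) = A P_o(\bsbB)/(mn/\kappa)$ is an explicit multiple of the complexity $P_o(\bsbB)$, the term $\delta(\hat\bsbB)N(\hat\bsbB)$ is where the model-complexity penalty is effectively generated: modulo the normalization $N(\hat\bsbB)\asymp mn\sigma^2/\mu$ (to be justified a posteriori, since $N(\hat\bsbB)\le N(\bsbB^*)\cdot\tfrac{1-\delta(\hat\bsbB)}{1-\delta(\bsbB^*)}$ already controls it from above, and it is bounded below once the prediction error is controlled), $\delta(\hat\bsbB)N(\hat\bsbB) \asymp A\sigma^2 P_o(\hat\bsbB)/\mu\cdot(\kappa\text{-factor})$. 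Meanwhile $N(\hat\bsbB) - N(\bsbB^*) = -\langle\bsbE, \bsbX(\hat\bsbB-\bsbB^*)\rangle + \breg_b(\bsbX\bsbB^*,\bsbX\hat\bsbB)$ by the definition \eqref{noise-def} of the effective noise and the Bregman identity, so the chain becomes $\breg_b(\bsbX\bsbB^*,\bsbX\hat\bsbB) \le \langle\bsbE,\bsbX(\hat\bsbB-\bsbB^*)\rangle - (\text{penalty terms})$. This is precisely the inequality that drives Theorem \ref{th:pic}, and I would then invoke exactly the same machinery: the sub-Gaussian (here $\psi_\alpha$) deviation bound for $\sup\langle\bsbE,\bsbX\bsbDelta\rangle$ over the union of the low-rank/equisparse manifolds, controlled via the metric-entropy estimate involving Stirling numbers of the second kind that was used earlier, combined with the restricted strong-convexity hypothesis $\breg_{l_0} + C\sigma^2 P_o + C\sigma^2 P_o \ge \delta\Breg_2$ to convert $\breg_b(\bsbX\bsbB^*,\bsbX\hat\bsbB)$ into $\delta\|\bsbX(\hat\bsbB-\bsbB^*)\|_F^2$. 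The outcome is the claimed bound on $\|\bsbX\hat\bsbB - \bsbX\bsbB^*\|_F^2 \vee P_o(\hat\bsbB)/\mu^2$, with the $\kappa$-dependence arising from the strong-smoothness-to-strong-convexity conversions, and the failure probability $C\exp\{-c(m+P_o(\bsbB^*))\}$ (or $C\exp\{-c(m+\rank(\bsbX))\}$ when $q^*>1$) coming from the entropy/deviation step exactly as before, plus the extra $C\exp\{-c(mn)^{\alpha/2}\}$ from the concentration of $N(\bsbB^*)$.

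The main obstacle is the self-referential normalization: the penalty weight in \eqref{eq:sf-pic} is effectively $A\sigma^2/\mu$ only after one knows that $N(\hat\bsbB)$ (and $N(\bsbB^*)$) is of order $mn\sigma^2/\mu$, yet $N(\hat\bsbB)$ is itself the fitted loss one is trying to bound. Resolving this requires a careful two-sided bootstrap: use the crude upper bound $N(\hat\bsbB)\le \frac{1-\delta(\hat\bsbB)}{1-\delta(\bsbB^*)}N(\bsbB^*)$ together with the condition $\kappa P_o(\bsbB^*) < mn/A_0$ (which makes $\delta(\bsbB^*)$ bounded away from $1$) to get $N(\hat\bsbB) \lesssim N(\bsbB^*) \asymp mn\sigma^2/\mu$ on the good event; then, having a provisional prediction-error bound, use $N(\hat\bsbB) = N(\bsbB^*) + \breg_b(\bsbX\bsbB^*,\bsbX\hat\bsbB) - \langle\bsbE,\bsbX(\hat\bsbB-\bsbB^*)\rangle \gtrsim mn\sigma^2/\mu$ for the lower bound, closing the loop. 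One must also handle the constraint $\delta(\bsbB)<1$ (feasibility of $\bsbB^*$ is guaranteed by $\kappa P_o(\bsbB^*)<mn/A_0$ with $A<A_0$) and verify that the $\psi_\alpha$-concentration of $N(\bsbB^*) = \sum_{ik}\breg_b(\cdot,\cdot)_{ik}$, a sum of independent heavy-tailed-but-integrable terms, concentrates at rate $(mn)^{\alpha/2}$ in the exponent — this is where the Orlicz-norm concentration inequalities of \cite{gotze2019concentration} enter. Everything else is a direct transcription of the Theorem \ref{th:pic} argument with $\sigma^2 \rightsquigarrow N(\bsbB^*)/(mn/\mu)$.
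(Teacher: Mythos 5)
Your overall strategy coincides with the paper's in every essential respect: the Fenchel-conjugate identity expressing the numerator $N(\bsbB)=l_0(\bsbX\bsbB;\bsbY)+b^*(\bsbY)$ as a Bregman divergence, the two-sided deterministic sandwich $\|\bsbE\|_F^2/(2\mu')\le N(\bsbB^*)\le \|\bsbE\|_F^2/(2\mu)$, the Hanson--Wright-type concentration of $\|\bsbE\|_F^2$ at rate $(mn)^{\alpha/2}$ in the exponent, and the entropy/Stirling-number machinery for the stochastic term are exactly the paper's ingredients. Two small corrections: you do not need the restricted strong-convexity hypothesis of Theorem \ref{th:pic} here, since $b$ is globally $\mu$-strongly convex and hence $\breg_{l_0}=\Breg_b\ge \mu\Breg_2$ holds unconditionally (this is precisely why the theorem advertises no restricted eigenvalue assumptions); and $N(\bsbB^*)$ is not in general a sum of independent terms ($b$ is not assumed separable), so the concentration must be routed through $\|\bsbE\|_F^2$ via the sandwich, as you in fact indicate earlier in your sketch.

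The one genuine soft spot is the self-referential normalization you flag as the main obstacle. Your rearrangement makes $\delta(\hat\bsbB)N(\hat\bsbB)$ the source of the complexity penalty, which forces a lower bound $N(\hat\bsbB)\gtrsim mn\sigma^2/\mu'$, and your proposed bootstrap via $N(\hat\bsbB)=N(\bsbB^*)+\Breg_b(\bsbX\hat\bsbB,\bsbX\bsbB^*)-\langle\bsbE,\bsbX(\hat\bsbB-\bsbB^*)\rangle$ does not clearly close: under only a crude provisional bound $\|\bsbX(\hat\bsbB-\bsbB^*)\|_F^2\lesssim mn\sigma^2/\mu^2$, the term $\langle\bsbE,\bsbX(\hat\bsbB-\bsbB^*)\rangle$ is of the same order $mn\sigma^2/\mu$ as $N(\bsbB^*)$ itself, so the sign of your lower bound is not controlled without a delicate constant-tracking argument. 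The paper sidesteps this entirely by rearranging the basic inequality the other way: writing $h(\bsbB;A)=1/\{mn/\kappa-AP_o(\bsbB)\}$, optimality gives $N(\hat\bsbB)-N(\bsbB^*)\le N(\bsbB^*)\{h(\bsbB^*;A)/h(\hat\bsbB;A)-1\}$, whose right-hand side equals $N(\bsbB^*)\cdot A\{P_o(\bsbB^*)-P_o(\hat\bsbB)\}/\{mn/\kappa-AP_o(\bsbB^*)\}$. The coefficient multiplying $-P_o(\hat\bsbB)$ is therefore $N(\bsbB^*)$, not $N(\hat\bsbB)$, and the sandwich together with the concentration of $\|\bsbE\|_F^2$ immediately pins it between constant multiples of $A\sigma^2/\mu$; no bound on $N(\hat\bsbB)$ is ever required. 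Adopting this rearrangement removes the only real gap in your argument, and the rest of your sketch then goes through as in the proof of Theorem \ref{th:pic}.
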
 

The theorem needs no restricted eigenvalue or signal strength   assumptions. Some other scale-free forms can be used based on the techniques in  \cite{SCV}; see Remark    \ref{rmk:pic}.    



\section{Experiments}
\label{sec_dataexp}
We performed extensive simulation studies which, due to limited space, are   presented in the appendices.  The results show the benefits of CRL:   the desired structural parsimony  can be successfully captured by  simultaneous clustering and dimension reduction, and the  removal of  nuisance dimensions leads to improved statistical performance and reduced computational cost. We also tested kernel  CRL on a variety of benchmark datasets (cf.  Figure \ref{kernel_res} and Tables \ref{sim_uic}--\ref{sim_mbar2}) and performed  experiments in network  community detection 
(cf. Figure \ref{lfr} and Table \ref{tab_GN}). In addition, simulation studies were conducted to  test the performance of CRL when model misspecification occurs, compared with  LASSO, group LASSO, reduced rank regression and fused LASSO \citep{tibshirani2005sparsity}  (cf. Table \ref{sim_table_misspec}). Interested readers may refer to    Appendix \ref{app:exp} for details.
Here, we use   two real datasets to demonstrate the performance  of  CRL in supervised learning. Our code can be found in the supplementary material\footnote{Also available at \url{https://ani.stat.fsu.edu/~yshe/code/CRL.zip}}.
\subsection{Horseshoe crab data}\label{subsec:horseshoe}
 This part  performs  ``model segmentation''  on a horseshoe crab  dataset  in   \cite{agresti2012categorical}, to showcase an application of CRL. The response variable is the number of male crabs   residing near a female crab's nest, denoted by {\small\textsf{satellites}}, ranging from 0 to 15.  The dataset  records the number of  satellites  for 173 female horseshoe crabs  in   vector $\bsby$, as well as some covariates in $\bsbX$,  such as  width, color,  weight and the intercept. Here, width  refers to the carapace width of a female crab, measured in centimeters;     color has several categories from light to dark, and  darker female crabs  tend to be {{older}} than  lighter-colored ones. Following Agresti, we removed  some redundant and irrelevant predictors and used {\small\textsf{width}} and a dummy variable {\small\textsf{dark}} to model {\small\textsf{satellites}}.   Fitting a   simple regression model  to the overall data  gives $-10 + 0.5 \cdot {\small\textsf{width}} -0.4 \cdot {\small\textsf{dark}}$.

 An interesting question in  statistical modeling is to study the possible existence of latent ``sub-populations'', across which   predictors have different coefficients. To this end,    we  re-characterize the problem using a trace regression \citep{Kolt11}:
\begin{align}
y_i\sim \langle \bsbX_i, \bsbB \rangle, \ i=1,\ldots, n \label{tracemodel}
\end{align} where $\bsbB\in \mathbb R^{n\times p}$ is a  matrix of unknowns, and $\bsbX_i\in \mathbb R^{n\times p}$ has all rows  zero except the $i$-th row, which is equal to $\bsbX[i,:]$. For the horseshoe crab data,    the 173 rows of   matrix  $\bsbB$ give sample-specific  coefficient  vectors, and the model is clearly overparameterized.  CRL helps to estimate the coefficient matrix  and  identify  a small number of sub-models. After running the optimization  algorithm and parameter tuning, the whole sample is split to two sub-groups ($q=2$). The model on  the first subset (117 observations)  is    \begin{align}
\mbox{model 1: } -9.6 + 0.4 \cdot {\small\textsf{width}} -0.5 \cdot {\small\textsf{dark}}, \label{submod1}\end{align} while   on the second subset (56 observations), we get
\begin{align}\mbox{model 2: } -12 + 0.7 \cdot {\small\textsf{width}} + 2.1 \cdot {\small\textsf{dark}}. \label{submod2}\end{align}

The two resulting models are quite different. For example, for every 1-cm increase in width, \eqref{submod1}   predicts an increase  of 0.4 in the number of satellites, while \eqref{submod2} predicts an increase of 0.7, and the p-values associated with the two slopes are both low   (\textless 3e-4). Also, notice the positive sign of the coefficient estimate for {\small\textsf{dark}} in  \eqref{submod2}. 

 To get more intuition of the two detected sub-populations, we  built a decision tree using CART \citep{breiman1984classification}, which has a pretty simple structure: the prediction outcome is the second sub-population    if $${\small\textsf{(a) satellites}}     \ge 4 \quad \mbox{ and }\quad   {\small\textsf{(b) width}}  <28.7,$$ and the first   if either condition is violated. Therefore,  for the group of female crabs that have at least 4 satellites but do not yet have an extremely large carapace in width,     \eqref{submod2} states that being dark  is actually a beneficial factor in attracting more  satellites.
\subsection{Newsgroup data}

The 20 newsgroup dataset, available at  \url{ftp.ics.uci.edu}, contains about 18k documents   falling into  20  binary categories which we treat  as responses.
The  feature matrix records the  occurrence information of a large dictionary of words. We chose    $p = 200$ words   at random    and used     $n= 2\mbox{,}000$ documents  for training and the remaining  for test.
On this dataset, CRL  produced $q=50$ word groups and constructed      $r=16$  factors.
A prediction error comparison can be made using the test data. The classification accuracy  of          an SVM trained on   the original 200 words is       40.8\%. Using  only the 16 CRL factors improves the rate to   45.6\%, while a  LASSO model
with 16 selected words      only reaches  an accuracy rate of 31.30\%.
%
%

\begin{figure}[h!]
        \centering
        \includegraphics[width=0.9\linewidth]{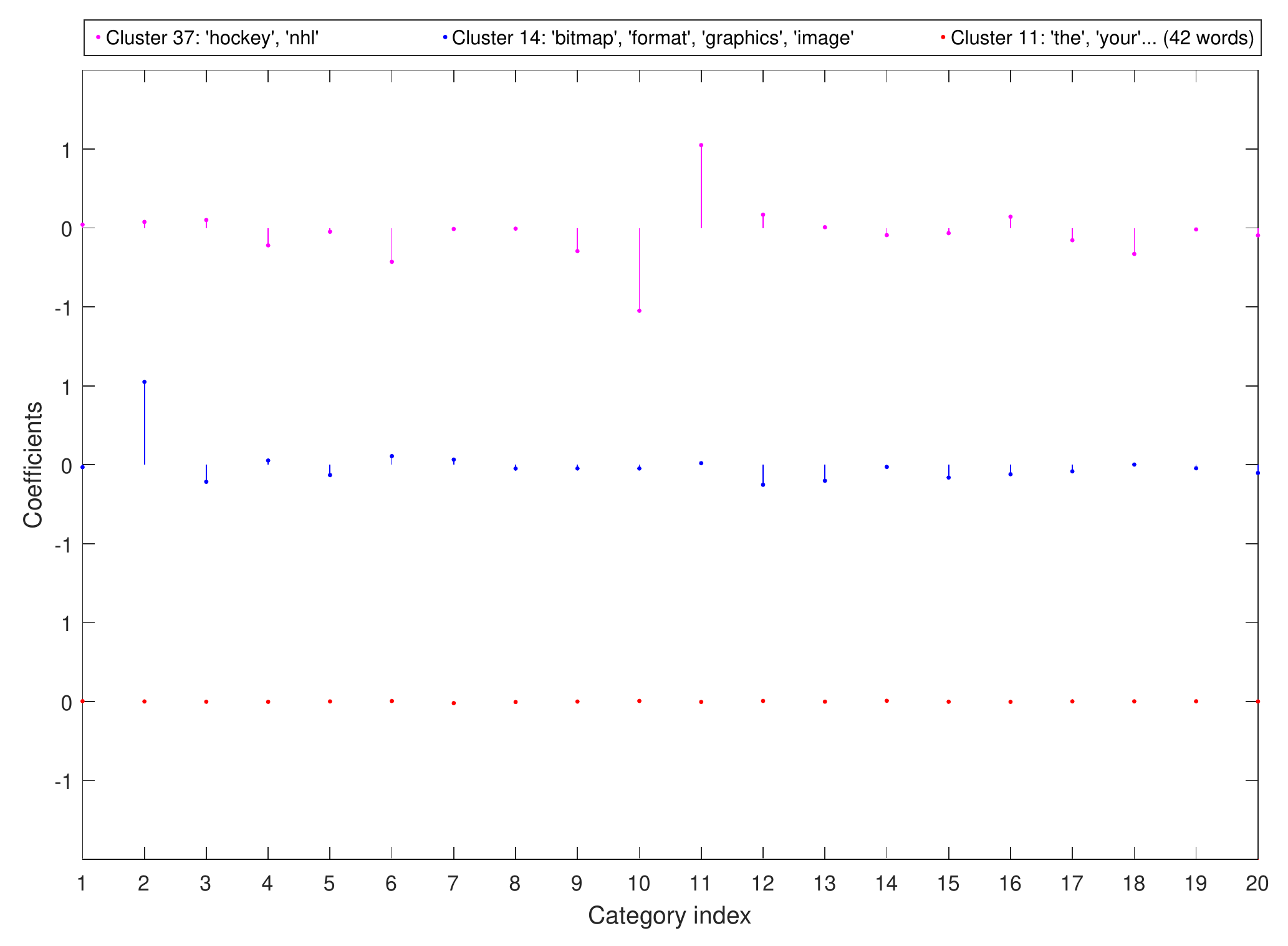}
        \caption{ Newsgroup data: the coefficients of some word clusters obtained by CRL in response to the 20 categories. 
}
        \label{cluster_11_14_37}
\end{figure}

  Next, we study  the interpretability of the CRL model. Figure \ref{cluster_11_14_37} plots the coefficients for three clusters for illustration purposes.
Note that we did not use any   available word groups from the literature,  which may or may not be useful for modeling the  responses here.

  First,   cluster 14, composed  of words {\small\textsf{bitmap}}, {\small\textsf{format}}, {\small\textsf{graphics}}, and {\small\textsf{image}},  shows  a single  large  coefficient in response to category 2.  This is sensible, as  the documentation shows that  the category corresponds to   computer graphics.

Cluster 37 contains two words only, {\small\textsf{hockey}} and {\small\textsf{nhl}}.  This group has   two big coefficients in magnitude, $+2.25$ and $-1.05$  for the  categories of   hockey and baseball, respectively.  So the occurrence of these two words   seems helpful  for      {differentiating}
the two related sport categories.

Finally, let's turn to cluster 11  which   consists of 42 words. All its coefficients  are pretty small, varying  between $-0.02$ and $ 0.01$ for   different  responses. A careful examination of its composition explains the mild effects:    almost all are   the so-called `{stop words}', such as {\small\textsf{the}},  {\small\textsf{very}}, and {\small\textsf{yours}}, and removing this cluster gave almost identical results.   CRL was able to capture these essentially irrelevant features and group them together.
To sum up, CRL        contributes as a beneficial complement to   conventional variable selection. 

\section{Conclusions}
\label{sec:summ}
Many high-dimensional methods   adopt the ``bet  on sparsity''   principle  \citep{ESL2},      but in  real multi-response applications, statisticians often face ``dense'' problems  with such a large number of relevant features  that variable selection may   be ineffective.
 This paper proposed a clustered reduced-rank learning  framework  to build  a predictive and interpretable model through feature auto-grouping and dimension reduction. 

The joint matrix regularization formulation   poses intriguing challenges in both theory and computation. We provided  universal information-theoretical limits  to  reveal the intrinsic cost of seeking for clusters, as well as the benefit of accumulating a large number of response variables in multivariate learning. The obtained error rates   are   strikingly different from those  assuming sparsity. Moreover,   we proved that CRL, unlike   the class of methods  based on pairwise-difference penalization,  achieves the minimax optimal  rate    in some common scenarios.   The remarkable
fact that the CRL estimators need not be global minimizers  but just fixed points in some regular problems paved the way for the design of an efficient optimization algorithm in the nonconvex setup. Furthermore, a new information criterion, along with its scale-free form, was proposed to address  cluster and rank selection.  Overall, our new method is as interpretable as variable selection, and     is advantageous when the numbers of relevant features and irrelevant features are of the same order,   or when   the number of responses is greater than   the number of  features up to a multiplicative constant.

CRL can be extended to tensors, and one possible application is    model segmentation  in a multi-task setting. For example, given $\bsbY = [\tilde \bsby_1, \ldots \tilde \bsby_n]^T$, $\bsbX = [\tilde \bsbx_1, \ldots \tilde \bsbx_n]^T$ and an unknown order-3 tensor  $\bsbB\in \mathbb R^{p\times m \times n}$, we can  fit a model $\tilde \bsby_i^T \sim    \tilde \bsbx_i^T \bsbB_{::i}$ ($1\le i\le n$) by enforcing low rank in $\bsbB$ and equisparsity along its third dimension. This can be applied to    heterogeneous populations. Moreover, our  algorithm often shows a  linear convergence rate for small  $q$ and $r$, which deserves  further study.
Finally, to  reduce the  search cost (thereby the overall error rate), one possible way is to limit the min/max cluster size; a   new form of regularization (convex or nonconvex) that guarantees both interpretability and efficiency  is an interesting topic that merits future research.




\appendix
\appendixpage
\numberwithin{equation}{section}
\numberwithin{theorem}{section}
\numberwithin{table}{section}
\numberwithin{figure}{section}
\section{Technical Details}
\label{app:proofs}

Throughout the proofs, we use $C$, $c$, $L$ to denote    positive constants unless otherwise mentioned. They are not necessarily the same at each occurrence. Given any matrix $\bsbA$, we use $cs(\bsbA)$ and $rs(\bsbA)$ to denote its column space and row space, respectively. Denote by    $\Proj_{\bsbA}$  the orthogonal projection matrix onto   $cs(\bsbA)$, i.e., $\Proj_{\bsbA}=\bsbA(\bsbA^{T}\bsbA)^{+}\bsbA^{T}$, and by $\Proj_{\bsbA}^{\perp}$ the projection onto  its orthogonal complement.  We use  $\bsbA[\mathcal I, \mathcal J]$ to  denote   a submatrix of $\bsbA$ with  rows and columns indexed by $\mathcal I$ and $\mathcal J$, respectively. The standard vectorization operator is denoted by  $\vect(\cdot)$.
Let $\mathbb Z$ be the set of integers and   $[p]$ be $\{1,\ldots,p\}$. Given $x\in \mathbb R$, $\lfloor x \rfloor $ and $\lceil x \rceil$ denote the floor and ceiling functions, respectively. \textit{Throughout the whole section, we
   abbreviate  $\| \bsbB\|_{2, \mathcal C}$ as $q(\bsbB)$, and $\rank(\bsbB)$ as $r(\bsbB)$.}
\\

Beyond the regression model, the canonical generalized linear models (GLMs) are widely used in statistics modeling. Here,  $\bsb{Y}\in \mathcal Y^{n\times m}  \subset \mathbb R^{n\times m}$ has density $P_{ \bsb{\eta}}(\cdot )=\exp\{(\langle \cdot, \bsb{\eta}\rangle -  {b}(\bsb{\eta} ))/\sigma^{2}  - c(\cdot,\sigma^2)\}\ $ with respect to measure $\nu_0$ defined on $\mathcal Y^{n\times m}$ (typically the counting measure or   Lebesgue measure), where   $\bsb{\eta}\in \mathbb R^{n\times m}$ represents the systematic component of interest, and $\sigma$  is the scale parameter; see  \cite{Jor87}.
Since $\sigma$ is not the parameter of interest, it is more convenient to define the density $\exp\{   (\langle \cdot, \bsb{\eta}\rangle -  {b}(\bsb{\eta} ))/\sigma^{2}\}$ (still written as $P_{ \bsb{\eta}}(\cdot )$ with a slight abuse of notation) with respect to the base measure  $\rd \nu = \exp(- c(\cdot,\sigma^2))\rd \nu_0$.    The  scaled loss for   $\bsb{\eta}$  can be written as
\begin{align}\label{l0glmass}
l_0 ( \bsb{\eta} ; \bsb{Y}  ) =  \{-\langle \bsb{Y},   \bsb{\eta}\rangle + b(\bsb{\eta} )\}/\sigma^2 .\end{align} That is, $l_0$  corresponds to  a distribution in the exponential dispersion family with  cumulant function $b(\cdot)$,  dispersion $\sigma^2$ and natural parameter $\bsb{\eta} $.
In the Gaussian case, $b (\cdot ) = \| \cdot \|_F^2/2$.
Define the natural parameter space (assumed to be nonempty)
\begin{align}\label{glmdomainb}\Omega = \{\bsb{\eta} \in \mathbb R^{n\times m}: b(\bsb{\eta})< \infty\}.
\end{align}  When $\Omega$ is open,    $p_{\bsb{\eta}}$ is called regular, and   $ b$ can be shown to be differentiable to any order and  convex, as is well known in the statistics literature (see, e.g.,  \cite{WainJordan08}).

\subsection{Minimax lower bounds}
\label{subsec:proofminimax}

We prove a more general theorem including  the lower bounds for both the
estimation error
and the prediction  error.



\begin{theorem}
        \label{th:minimax-complete}

        Assume $\bsbY | \bsbX \bsbB^*$ follows  a distribution in the regular  {exponential   family} with dispersion $\sigma^2$ with      $l_0$ the associated  negative log-likelihood  function defined in   \eqref{l0glmass},     and define a signal class by    $$\bsbB^*\in \mathcal S(q, r)= \{\bsbB \in \mathbb R^{p\times m}:   q(\bsbB)\leq q, r(\bsbB)\leq r \},$$ where      $p   \ge q\ge   r \ge 2$,    $r(q\wedge  r(\bsbX)+m - r )\ge 4$. Let $b, \zeta$ be any integers satisfying
        $ \sum_{i=0}^{\zeta} {r\choose i} (b-1)^i\ge q$, with $
        b\ge 2 ,  1\le \zeta \le r$, and define a complexity function   \begin{align}
        P (q, r) =   (q +m  ) r  + p      \{\log  (er) -\log \log q\}. \label{specailminimaxrate-complete}
        \end{align}

        (a) Assume that  $  \breg_{l_0} (\bsb0, \bsbX   \bsbB  )\sigma^2 \le
        \kappa\Breg_2(   \bsb0,      \bsbB)$ for any $\bsbB  \in   \mathcal S(q, r)$.  Then  there exist positive constants $c, c'$, depending on $I(\cdot)$ only,  such that
        \begin{align}
        \inf_{\hat \bsbB} \sup_{\bsbB^* \in \mathcal S(q, r)} \EE\left\{I\Big(\|  \bsbB^* -   \hat\bsbB\|_F^2{\big/}\Big[c \sigma^2 \big\{(  q +m  ) r  + \frac{p\log q}{b^2 \zeta}\big\}/\kappa\Big]\Big)\right\} \geq c' >0, \label{estgeneralminimaxrate-complete}
        \end{align}
        In particular, under   $8\le q \le \exp(r)$,
        \begin{equation}
        \inf_{ \hat \bsbB}\,\sup_{ \bsbB^*\in \mathcal S(q,r)} \mathbb E\big[I\big( \|  \bsbB^* - \hat\bsbB\|_F^2/\{ c \sigma^2  P(q, r) / \kappa\}\big)\big] \ge c' >0.
        \end{equation}

        (b) Let the SVD of $\bsbX$ be  $  \bsbU \bsbD \bsbV^T$ and  define  $\bsbZ= \bsbU\bsbD\in\mathbb R^{ n\times r(\bsbX)}$.
        Assume  the following  restricted condition-number condition:  when $q\le r(\bsbX)$,  there exist $
        \underline{\kappa},  \overline{\kappa}$   such that  $
        \underline{\kappa} \Breg_2(     \bsbB_1,    \bsbB_2)\leq  \Breg_2( \bsbX    \bsbB_1,   \bsbX \bsbB_2)$, $ \breg_{l_0} (  \bsb0,  \bsbX   \bsbB_1)\sigma^2  \leq  \overline{\kappa}  \Breg_2(    \bsb0,        \bsbB_1)
        $
        for any $\bsbB_i\in \mathcal S(q,r)$ and $
        \underline{\kappa} / \overline{\kappa}$ is a positive constant; when $q>r(\bsbX)$,   $
        \underline{\kappa} \Breg_2(     \bsbC_1,    \bsbC_2)\leq  \Breg_2(  \bsbZ   \bsbC_1,    \bsbZ\bsbC_2)$, $ \breg_{l_0} (  \bsb0,\bsbZ \bsbC_1   )\sigma^2
        \leq   \overline{\kappa}  \Breg_2(     \bsb0,     \bsbC_1  )
        $
        for any $\bsbC_i:   r(\bsbC_i)\le r$ and   $
        \underline{\kappa} / \overline{\kappa}$ is a positive constant.

        Then  there exist  constants $c, c'>0$, depending on $I(\cdot)$ only,  such that when $q\le r(\bsbX)$
        \begin{align}
        \inf_{\hat \bsbB} \sup_{\bsbB^* \in \mathcal S(q, r)} \EE\left\{I\Big(\|  \bsbX\bsbB^* -   \bsbX\hat\bsbB\|_F^2{\big/}\Big[c \sigma^2 \big\{(  q +m
) r
+ \frac{p\log q}{b^2 \zeta}\big\}\Big]\Big)\right\} \geq c' >0, \label{generalminimaxrate-complete}
        \end{align}
        and when $q> r(\bsbX)$,
        \begin{align}\inf_{\hat \bsbB} \sup_{\bsbB^* \in \mathcal S(q, r)} \EE(I[\|\bsbX \bsbB^* - \bsbX \hat\bsbB\|_F^2/\{c \sigma^2  (r(\bsbX)  +m  ) r \}]) \geq c' >0.
        \end{align}
        In particular, when  $8\le q \le \exp(r)$ and $q\le r(\bsbX)$,
        $$
        \inf_{\hat \bsbB} \sup_{\bsbB^* \in {\mathcal S}(q, r)} \allowbreak \EE\big[\|\bsbX \bsbB^* - \bsbX \hat\bsbB\|_F^2\big] \geq c \sigma^2 P (q, r).
        $$
\end{theorem}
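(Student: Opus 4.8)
The plan is to reduce both claims to a multiway hypothesis test over a finite subset of $\mathcal S(q,r)$ and apply Fano's inequality in the form that accommodates an arbitrary nondecreasing $I$ with $I(0)=0$, $I\not\equiv0$. The generic lemma is: if one can produce $\{\bsbB^{(1)},\dots,\bsbB^{(N)}\}\subset\mathcal S(q,r)$ that is $2\gamma$-separated in $\|\cdot\|_F$, with $\log N\asymp d$ for the target complexity $d$, and such that the mutual information between the uniform index and $\bsbY$ is at most $\tfrac12\log N$, then $\inf_{\hat\bsbB}\max_j\EP(\|\bsbB^{(j)}-\hat\bsbB\|_F^2\ge\gamma^2)\ge c_0>0$, whence, choosing $t_*$ with $I(t_*)>0$ and the absolute constant in the rate small enough, $\inf_{\hat\bsbB}\sup_{\bsbB^*\in\mathcal S(q,r)}\EE\{I(\|\bsbB^*-\hat\bsbB\|_F^2/(c\gamma^2))\}\ge I(t_*)c_0=:c'$, with $c,c'$ depending only on $I$; specializing $I(t)=t$ and $I(t)=1_{t\ge c}$ then recovers the in-expectation and in-probability forms quoted after the theorem. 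To keep the mutual information small I would use the regular-exponential-family identity $\mathrm{KL}(P_{\bsbX\bsbB_1}\|P_{\bsbX\bsbB_2})=\breg_{l_0}(\bsbX\bsbB_2,\bsbX\bsbB_1)$ (for $l_0$ as in \eqref{l0glmass}) together with the fact that the mutual information is $\le\tfrac1N\sum_j\mathrm{KL}(P_{\bsbX\bsbB^{(j)}}\|Q)$ for \emph{any} reference $Q$; taking $Q=P_{\bsbX\bsb0}$ turns the bound into $\tfrac1N\sum_j\breg_{l_0}(\bsb0,\bsbX\bsbB^{(j)})$, which by hypothesis \eqref{minimaxestregcond} is $\le\kappa\max_j\|\bsbB^{(j)}\|_F^2/(2\sigma^2)$. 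Hence it suffices to build the packing inside the Frobenius ball of radius $R\asymp\sqrt{\sigma^2\log N/\kappa}$ and $\Theta(R)$-separated; this also explains why the regularity hypothesis need only be the one-sided bound anchored at $\bsb0$.

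I would obtain the two summands of the rate from two separate packings, relying on two elementary reductions: (i) a minimax lower bound over any subfamily of $\mathcal S(q,r)$ is also one over $\mathcal S(q,r)$, and (ii) if the displayed lower bound holds with denominator $s$ it holds with every $s'\le s$ (since $I$ is nondecreasing), so that one may replace the larger of two admissible rates by their sum at the cost of the absolute constant. \emph{Continuous packing:} on a block of $\asymp q$ rows of $\bsbB$, take the rows to be $\asymp q$ free vectors lying in a varying $r$-dimensional subspace of $\mathbb R^m$; a volumetric (Varshamov-type) packing of the Grassmannian (dimension $\asymp mr$) and of the centroid vectors (another $\asymp qr$ parameters), rescaled to radius $R$, gives $\log N\asymp(q+m)r$ and separation$^2\asymp\sigma^2(q+m)r/\kappa$. \emph{Combinatorial packing:} fix $q$ centroids as codewords of a $b$-ary code of length $r$ contained in a common Hamming ball of radius $\zeta$ — feasible exactly when $\sum_{i=0}^{\zeta}\binom{r}{i}(b-1)^i\ge q$, so each centroid has $\le\zeta$ active coordinates of size $O(b)$, hence squared norm $O(b^2\zeta)$ while pairwise distances stay $\gtrsim1$ — and vary only the assignment $\{1,\dots,p\}\to\{1,\dots,q\}$ along a Gilbert--Varshamov code with linear minimum Hamming distance; here $\log N\asymp p\log q$, but the norm budget forces this block to be rescaled by $t$ with $t^2\asymp\sigma^2\log q/(\kappa b^2\zeta)$, producing separation$^2\asymp\sigma^2\,p\log q/(\kappa b^2\zeta)$ — this is where the denominator $b^2\zeta$ enters. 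Both packings lie in $\mathcal S(q,r)$ under the mild non-degeneracy bounds $p\ge q\ge r\ge2$, $r(q\wedge\rank(\bsbX)+m-r)\ge4$; combining them gives \eqref{estgeneralminimaxrate-complete}, and optimizing over admissible $(b,\zeta)$ — e.g. $b=2$ with $\zeta$ the least integer satisfying $\sum_{i\le\zeta}\binom ri\ge q$ — yields the bound with $P(q,r)$ as in \eqref{specailminimaxrate-complete} for $8\le q\le e^r$.

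For part (b) the same two packings are carried over to the prediction target $\bsbX\bsbB$. When $q\le\rank(\bsbX)$ the restricted condition-number hypothesis supplies both $\|\bsbX(\bsbB_1-\bsbB_2)\|_F^2\gtrsim\underline\kappa\|\bsbB_1-\bsbB_2\|_F^2$ (so the Frobenius separation transfers to a prediction separation of the same order, since $\underline\kappa/\overline\kappa\asymp1$) and $\breg_{l_0}(\bsb0,\bsbX\bsbB)\sigma^2\le\overline\kappa\|\bsbB\|_F^2/2$ (so the mutual-information budget transfers), leaving the rate unchanged. When $q>\rank(\bsbX)$ I would reparametrize $\bsbX\bsbB=\bsbZ\bsbC$ with $\bsbZ=\bsbU\bsbD$ and $\bsbC$ having only $\rank(\bsbX)<q$ rows, so the equisparsity constraint is vacuous in the prediction space and the combinatorial packing collapses; only the continuous packing survives, now with $\rank(\bsbX)$ in place of $q$, and the restricted lower-eigenvalue bound on $\bsbZ$ yields the rate $\sigma^2(\rank(\bsbX)+m)r$.

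The hard part will be the combinatorial packing: one must simultaneously pack the $q$ centroids into a rank-$r$ subspace with pairwise distance $\gtrsim1$ and diameter $O(q^{1/r})$ via the $b$-ary Hamming-ball construction, pack the $p$-to-$q$ assignments with linear minimum distance, keep every packing element (and, if a pairwise form of Fano is used, every pairwise difference) inside $\mathcal S(q,r)$, and calibrate the scale $t$ and the code parameters $(b,\zeta)$ so that the resulting separation is the right $\sigma^2 p\log q/(\kappa b^2\zeta)$ and is dominated by the KL budget. The bookkeeping linking $b^2\zeta$, the centroid diameter $q^{1/r}$, and the eventual $-\log\log q$ is the delicate point; the Grassmann/centroid packing, the Fano reduction, and the exponential-family KL identity are standard by comparison.
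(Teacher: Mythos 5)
Your proposal follows essentially the same route as the paper's proof: a two-packing Fano/Tsybakov reduction with the KL--Bregman identity anchored at $\bsb0$, the combinatorial packing built from $b$-ary Hamming-ball centroids of radius $\zeta$ combined with a Gilbert--Varshamov code on the $[p]\to[q]$ assignments (yielding the $p\log q/(b^2\zeta)$ term), and the reparametrization $\bsbX\bsbB=\bsbZ\bsbC$ when $q>\rank(\bsbX)$. Two minor points of divergence: the paper realizes the $(q+m)r$ term via a discrete $\{0,\gamma R\}$-valued hypercube packing on an L-shaped support rather than a Grassmannian volumetric packing (both work), and your illustrative choice $b=2$ is infeasible for $2^r<q\le e^r$ (since $\sum_{i\le\zeta}\binom{r}{i}\le 2^r$) and in any case does not reach the $\log(er)-\log\log q$ denominator throughout $8\le q\le e^r$ --- the paper's explicit minimization of $b^2\zeta$ over $z=q^{1/\zeta}$, locating the optimum at $z_o\lesssim r/\log q$, is genuinely needed for the final form of $P(q,r)$, as you yourself anticipate in flagging that bookkeeping as the delicate step.
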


\begin{proof}
        We first introduce some standard notations and symbols in coding theory.
        Let $V_q(n, d)$ denote the volume of a Hamming ball of radius $d\le q$ in $\{0, 1, \ldots, q-1\}$, i.e.,
        $$
        V_q(n, d) =   \sum_{i=0}^{\lfloor d\rfloor} {n\choose i} (q-1)^i.
        $$
        Given $q\ge 2$, $x \in [0, 1]$, define the $q$-ary entropy function    $$h_q(x) = x \log_q (q - 1) - x \log_q x - (1-x) \log_q (1-x),$$
        and $  h_2(x)$ is the Shannon entropy function. It is easy to see some basic facts:  $ h_q(x)
        \log q \allowbreak + \mbox{KL}  (x \|  \frac{q-1}{q})=\log q$,   $h_q(0) = 0$, $h_q(1-1/q) = 1$ and   $h_q(\cdot)$ is continuous
        and increasing in $[0, 1-1/q]$. Here, the Kullback-Leibler divergence $\mbox{KL}(x\|  y) =  x \log (x /y ) + (1-x) \log ((1-x)/(1-y))$ for $x, y \in[0, 1]$.

        The following result is well known; see, e.g., \cite{van2012introduction}. 
        \begin{lemma}\label{volbounds}
                Let $q\ge 2$,   $d\in [0, n(1-1/q)]$, and $q, d\in \mathbb Z$. Then
                $$
                V_q(n, d) \le q^{h_q(\theta) n}, \
                V_q(n, d)\ge {n\choose d} (q-1)^d   \ge q^{ h_q(\theta)   n } \exp(- c \log  n -c'),
                $$ 
                where    $\theta = d/n$, and $c, c'$ are positive constants.
        \end{lemma}

        The   next lemma   is essentially the   Gilbert-Varshamov bound for $q$-ary codes, adapted for our purposes.

        \begin{lemma}
                Let $\Omega = \{\bsba= [a_1, \ldots, a_n]^T: a_j \in \mathcal A  \}$, where   $|\mathcal A| = q$, $n\ge q > 1 $. Then there exists a subset $ \{\bsba^0, \ldots, \bsba^M\} \subset \Omega$ such that $\bsba^{0}\in \mathcal A^n$ is arbitrarily chosen,      and
                \begin{align}
                &\log M \ge \log (q^n/V_q(n, \lceil d \rceil-1)-1)\ge c_1 n\log q , \\&  \rho (\bsba^{j}, \bsba^{k}) \ge c_2 n , \ \forall 0\le j < k  \le M,
                \end{align}
                where $\rho(\bsba, \bsba') = \sum_{i=1}^n 1_{a_j \ne a_j'}$ and $c_1, c_2$ are universal positive constants.\label{qarysepa}
        \end{lemma}


        \begin{proof}
                We can use   the `greedy algorithm' \citep{tsybakov2009introduction,pellikaan_wu_bulygin_jurrius_2017} and Lemma
                \ref{volbounds} to prove the result. Some details are as follows.

                Let $d = c_2 n$.  Given any $\bsba\in \Omega$, the number of elements in $\{ \bsbb\in \Omega$:
                $\rho (\bsbb , \bsba) \le  l
                \}=:   B_{q}(\bsba; l)$  is no more than
                $
                \sum_{i=0}^l {n\choose i} (q-1)^i$ or $ V_q(n, l).
                $

                Consider the following procedure to partition $\Omega$. Let $\bsba^0 \in \mathcal A^n$ be arbitrarily chosen and  $\Omega^0 = \Omega$. Given $\bsba^{t}\in\Omega^t$ ($t\ge 0$), construct $L^t = \{\bsba \in   B_{q}(\bsba^t; \lceil  d \rceil-1)\}\cap \Omega^t$ and $\Omega^{t+1} = \Omega^t \setminus L^t$, and choose an arbitrary $\bsba^{t+1}\in \Omega^{t+1}$. Repeat the process until $\Omega^{M+1} = \emptyset$. Then $L^t$ $(0\le t\le M)$ form a partition of $\Omega$, and so
                $$
                (M + 1)V_q(n, \lceil d \rceil -1) \ge q^n.
                $$
                By Lemma \ref{volbounds}, for $d/n\le 1 -  1/q$, or $c_2 \le 1 - 1/q$,
                \begin{align*}
                { M + 1}  \ge   q^{  (1-  h_q(c_2)) n}.    \end{align*}
                It is not difficult  to show that with a small enough $c_2$, $q^{
                        (1-
                        h_q(c_2)) n} \ge 1 +
                q^{c_1 n}$ holds for  all $  n\ge q\ge 2$ and  some constant $c_1>0$. The conclusion  follows.
        \end{proof}

        \begin{lemma} \label{lemKLBreg} For  $P_{\bsbX\bsbB }$  in the regular exponential family, the Kullback-Leibler divergence of   $P_{\bsbX\bsbB_2} $ from  $P_{\bsbX\bsbB_1} $ satisfies\begin{equation} \nonumber
                \mbox{KL}(  P_{\bsbX\bsbB_1} \|    P_{\bsbX\bsbB_2})   = \breg_{l_0} (   \bsbX  \bsbB_2,   \bsbX  \bsbB_1),
                \end{equation}
                where $\breg_{l_0}(\cdot, \cdot)$ is the generalized Bregman function defined in \eqref{genbregdef}. 
        \end{lemma}
        See Lemma 3(iii) of \cite{SheBregman}.\\

        We prove (b). The proof for (a) is similar and simpler.   Consider three cases.

        \textit{Case (i):} $  (p\log q)/(\zeta b^2) \ge  ( q  +m
        ) r  $ and $q\le r(\bsbX)$.
        Recall that   $b, \zeta$ are  integers satisfying
        \begin{align}V_b(r, \zeta) =\sum_{i=0}^{\zeta} {r\choose i} (b-1)^i\ge q, \
        b\ge 2 ,  1\le \zeta \le r . \label{bzetacond}\end{align}
        Hence  we can pick $q$ $r$-dimensional vectors to form a set $\mathcal C$ satisfying   $$\mathcal |\mathcal C|=q, \ \bsb0\in \mathcal C, \ C \subset\{   \bsbc\in \mathbb R^r:   \| \bsbc\|_0 \le   \zeta, c_k  \in \{ 0, 1, \ldots, b-1\}, 1\le k\le r  \}.$$ Next, construct   \begin{align*}
        {\mathcal B}^1(q, r)   =\{  \bsbB &  =[\bsbb_{1}, \ldots, \bsbb_j,\ldots, \bsbb_p]^T: \bsbb_j[1:r] / ( \gamma R) \in \mathcal C   \mbox{ and }   \bsbb_j[(r+1):m]=\bsb0     \},
        \end{align*}
        where  $\gamma>0$ is a small constant to be chosen later, and $$R=   \sigma ( \log q )^{1/2}/( \zeta b^2 \overline{ \kappa })^{1/2}.      $$
        Clearly,  ${\mathcal B}^{1}(q, r) \subset \mathcal {S}(q,r) $, $r(\bsbB_1 - \bsbB_2) \le r$ and $q(\bsbB_1  - \bsbB_2) \le q^{2} \wedge V_{2b-1}(r, 2\zeta) =q^2$ for any $\bsbB_i \in{\mathcal B}^1(q, r) $.

        Define a   vector version of Hamming distance by  $$\rho(\bsbB_1, \bsbB_2) = \sum_{j=1}^p 1_{\bsbB_1[j,:]\ne \bsbB_2[j,:]}.$$
        From  Lemma \ref{qarysepa},  there exists  a subset ${\mathcal B}^{10}(q, r)\subset {\mathcal B}^{1}(q,r)$ such that $\bsb0\in {\mathcal B}^{10}(q, r)$ and
        \begin{eqnarray*}
                &\log( | {\mathcal B}^{10}(q, r)| -1)\geq \log (q^p/(V_q(p, \lceil d \rceil -1)-1))\ge c_1
                p \log q,
                 \\
                &\rho(\bsbB_1, \bsbB_2) \geq c_2 p, \forall \bsbB_1, \bsbB_{2} \in \mathcal B^{10},  \bsbB_1\neq \bsbB_2
        \end{eqnarray*}
        for some universal constants $c_1, c_2>0$.
        Hence $$\| \bsbB_1 - \bsbB_2\|_F^2 \ge   \rho(\bsbB_1, \bsbB_2) \cdot 1 \cdot \gamma^2 R^2\geq   c_2     \gamma^2 R^2 p, \ \forall \bsbB_1, \bsbB_{2} \in \mathcal B^{10}.$$  From the restricted conditional number assumption,
        \begin{align}
        \| \bsbX \bsbB_1 - \bsbX \bsbB_2  \|_F^2 \geq  c_2  \underline{\kappa}   \gamma^2 R^2 p  =   \frac{c_{2 }  \underline{\kappa}}{ \overline{\kappa} }\frac{p\log q}{\zeta b^2}\sigma^2\gamma^2   \label{separationLBoundcase1}
        \end{align}
        for any $\bsbB_1, \bsbB_{2} \in \mathcal B^{10}$,  $\bsbB_1\neq \bsbB_2$, where $c_2$, $\underline{\kappa}/\overline{\kappa}$ are positive constants.
        \\


        From Lemma \ref{lemKLBreg} and the regularity condition, for any $\bsbB \in {\mathcal B}^{10}(q, r)$, we have
        \begin{align*}
        \mbox{KL} (P_{\bsbX\bsbB}\| P_{\bsb{0}})\le \frac{\overline{\kappa}}{\sigma^2} \Breg_2 ( \bsb0,\bsbB) \leq \frac{1}{2\sigma^2}\overline{\kappa}    (b-1)^{2}\gamma^2 R^2 \rho(\bsb{0}, \bsbB)\zeta \leq \frac{\zeta b^{2}\gamma^2}{2\sigma^2}   \overline{\kappa}R^2 p.
        \end{align*}
        Therefore,
        \begin{align}
        \frac{1}{|\mathcal B^{10}|-1}\sum_{\bsbB\in \mathcal B^{10}\setminus \{\bsb0\}}   \mbox{KL} (  P_{\bsbX\bsbB}\|P_{\bsb{0}}) \leq    \gamma^2  p \log q. \label{KLUBoundcase1}
        \end{align}
        Combining  \eqref{separationLBoundcase1} and \eqref{KLUBoundcase1} and choosing a sufficiently small value for  $\gamma$, we can apply Theorem 2.7 of \cite{tsybakov2009introduction} to get the    desired lower bound  $(p\log q)/(\zeta b^2)$.
        \\

        \textit{Case (ii):} $(  q  +m  ) r \ge  (p\log q)/(\zeta b^2)$ and  $r(\bsbX)\ge q $. Consider a signal  subclass
        \begin{align*}
        {\mathcal B}^2(q,r)=\{\bsbB =  [b_{jk}]: &    \ b_{jk}=0 \mbox{ or } \gamma R   \mbox{ if }  1\le j \le q-1, 1\le k \le r/2 \\
        & \mbox{ or } 1\le j \le r/2, 1\le k \le m, \mbox{ and }   b_{jk}=0 \mbox{ otherwise} \},
        \end{align*}
        where $R=  {\sigma}/{{\overline{\kappa}^{1/2}}  } $ and $\gamma>0$ is a small constant to be chosen later.
        Clearly,  $|{\mathcal B}^2 (q,r)|= 2^{(q-1+m-r/2)r/2}\ge 2^{c (q+m)r}$, $\mathcal B^2 (q,r)\subset \mathcal S(q, r)$, and $r(\bsbB_1 - \bsbB_2)\leq r$, $q(\bsbB_1 - \bsbB_2)\leq q,$ $\forall \bsbB_1, \bsbB_2 \in \mathcal B^2(q,r)$. In this case,
        we define the  elementwise  Hamming distance  by  $$\rho(\bsbB_1, \bsbB_2)=\|\vect(\bsbB_1) - \vect(\bsbB_2)\|_0.$$ 

        By  Lemma \ref{qarysepa}  and  $(q-1+m-r/2)r/2\ge 2$,
        there exists  a subset ${\mathcal B}^{20}(q,r)\subset {\mathcal B}^{2}(q,r)$ such that $\bsb0\in {\mathcal B}^{20},$
        \begin{eqnarray*}
                \log (| {\mathcal B}^{20}(q,r)|-1) \geq c_1 r(q+m),
        \end{eqnarray*}
        and
        \begin{eqnarray*}
                \rho(\bsbB_1, \bsbB_2) \geq c_2 r(q+m), \forall \bsbB_1, \bsbB_{2} \in \mathcal B^{20},  \bsbB_1\neq \bsbB_2
        \end{eqnarray*}
        for some universal constants $c_1, c_2>0$.
        Then  for any $\bsbB_1, \bsbB_{2} \in \mathcal B^{20}$ $$\| \bsbX \bsbB_1 - \bsbX \bsbB_2  \|_F^2 \ge   \underline{\kappa}\gamma^2 R^2 \rho(\bsbB_1, \bsbB_2) \geq c_2 (\underline{\kappa}/\overline{\kappa}) \sigma^2\gamma^2  (q+m) r.$$ Furthermore,
        for any $\bsbB\in \mathcal B^{20}(q,r )$, we have
        \begin{align*}
        \mbox{KL} (P_{\bsbX\bsbB}\| P_{\bsb{0}})  \leq \frac{1}{2\sigma^2}\overline{\kappa}   \gamma^2 R^2 \rho(\bsb{0}, \bsbB) \leq  {\gamma^2}    (q+m)r.
        \end{align*}
        The afterward treatment follows 
        same lines as in (i) and the details are omitted.
        \\

        \textit{Case (iii):} $(r(\bsbX)\wedge q  +m  ) r\ge (p\log q)/(\zeta b^2) $ and  $r(\bsbX)<q $.  In this case, we have a more relaxed  regularity condition that is imposed on the matrix  $\bsbZ$ with full column rank (and so it   can hold even when  $p>n$).  Define  $  \tilde{\mathcal S} (
r) =\{ \bsbC  \in \mathbb R^{r(\bsbX)\times m}: r(\bsbC)\leq r \}$. Then
        for any estimator $\hat\bsbB$, and $\hat \bsbC= \bsbV^T \hat \bsbB$, we have  \begin{align*}
        &\sup_{ \bsbB^*   \in \mathcal S(q, r)} \EP\big[\|\bsbX \bsbB^*  - \bsbX \hat\bsbB  \|_F^2\geq c \sigma^2 \{r(\bsbX)\wedge  q +m  \} r   \big]\\
        \geq \ &\  \sup_{  \bsbC   \in  \tilde{\mathcal S}(  r)} \EP\big[\|\bsbZ\bsbC - \bsbZ \hat\bsbC  \|_F^2\geq c \sigma^2 \{r(\bsbX)\wedge  q +m  \} r\big],
        \end{align*} because $\tilde {\mathcal S}(  r) \subset \{ \bsbV^T \bsbB^*: \bsbB^*\in \mathcal  S(q,r )\}$ under $r(\bsbX)<q$.   
        Repeating the argument in case (ii) gives  the result. (So        for the purpose of prediction,  using a $  q$ greater than $ r(\bsbX)$ is unnecessary.)
        \\

        To show the specific   rate   \eqref{specailminimaxrate-complete} under   $8\le q \le \exp(r)$, we   need to pick proper   $b, \zeta$
satisfying \eqref{bzetacond}. Due to  $$  V_b(r, \zeta)\ge {r\choose \zeta} (b-1)^\zeta  \ge \Big\{\frac{(b-1) r}{\zeta}\Big\}^\zeta, $$ our goal is to
minimize $b^2 \zeta$ subject to  the constraint $(r(b-1)/\zeta)^\zeta\ge q$ or \begin{align}b\ge q^{1/\zeta}\zeta/r + 1. \label{bconstr}\end{align} Introduce $$\phi = q^{1/r}> 1 ,$$
and reparametrize $\zeta$ by  $$  z = q^{1/\zeta}\in [\phi, q].$$ Because
$\zeta = (\log q) /(
        \log z $) and \eqref{bconstr}  means $b \ge (z \log q) / (r \log z) + 1  (\ge 2) $,  $b^2 \zeta$ is greater than or equal to  $$f(z)  = \Big (\frac{z}{r}\frac{\log q}{\log z}+1\Big)^2 \, \frac{\log q}{\log z}.
        $$
        Simple calculation shows $$f'(z) =  {   g(z)(z   \log q  +r \log z)(\log q)^2}/\{z r^{2} (\log z)^3\} $$ where   $$g(z) = 2z -3 \frac{z}{\log z} - \frac{1}{\log \phi}.$$
        It is easy to verify that  $g(\cdot)$ is strictly increasing in its domain $[\phi, q]\subset(1, q]$. Since $\phi \le e \le 4.95$,
 $$g(\phi) =  (2\phi \log \phi - 3 \phi -1) /\log \phi< 0.$$
        Moreover, from as $q\ge 8 >e^2$, $$g(q) = (1/\log q)[ q(2 \log q - 3) - r]>0.$$ Therefore, the minimum of $f$ is achieved at a  point $z_o$ that satisfies $ g(z_o )= 0$. 
  But    \begin{align*}  g(5.3  /\log \phi) &\ge (5.3 /\log \phi) (2 - 3/(\log (1 /\log \phi) + 5.3))-1/\log \phi\\ & \ge   (1/\log \phi) 5.3(2 - 3/
        \log 5.3)-1/\log \phi \ge 0, \end{align*}  so $z_o \le 5.3 r /\log q$.  Therefore,
        $$
        f(z_o) \le f(5.3r/\log q)\le \frac{c\log q}{\log (5.3r) - \log \log q},
        $$
        and the corresponding $\frac{p\log q}{\zeta b^2}\ge  cp ( \log (e
r)  \allowbreak - \log \log q)     $ for some constant $c>0$. \end{proof}

\begin{remark} 
The above scheme can yield other useful bounds in different situations. For example, simple calculation shows $f(\phi)= (q^{1/r}+1)^2 r$, $f(q) = (q/r + 1)^2$. Therefore, we also know that under $q\le r(\bsbX)$,    the minimax rate for the prediction error is  greater than or equal to   $
\sigma^2 \big[(q  +m
) r+  \{ (q^{1/r}+1)^{-2}  r^{-1} \vee(q/r + 1)^{-2} \}p\log q\big].
$
So when $\phi$  (or $ q^{1/r}$)  is sufficiently large, say,    $\phi \ge 5$ or $q \ge 5^r$,    the optimal $z_o$   takes    $ \phi$, and so    the
minimax error rate     is no lower than     $\sigma^2 \{ p   q^{-2/r}  r^{-1}\log q  +  ( q  +m
) r\}   $.
\end{remark}

\subsection{CRL's upper error bounds  }
\label{subsec:prooforacle}

Given any  $\bsbA\in\mathbb R^{p\times m}$ satisfying $q(\bsbA)\le q$, using the symbols introduced in  Section \ref{subsec:algdesign}, we can      write $\bsbA = \bsbF \bsbmu$ with  $\bsbF \in \mathbb R^{p\times q}$, $\bsbmu \in \mathbb R^{q\times m}$. The  binary  membership matrix $\bsbF$    fully characterizes the row clusters of $\bsbA $.   We give it a new notation     $\mathcal Q(\bsbA)$, and abbreviate $\bsbX \Proj_{\mathcal Q(\bsbA)} =\bsbX \bsbF (\bsbF^T \bsbF)^{+} \bsbF^T $  as $\bsbX_{ \mathcal Q(\bsbA)}$. 
Obviously,   $\Proj_{\mathcal Q(\bsbA)} \bsbA = \bsbA$.  Note that $\Proj_{\mathcal Q(\bsbA)} $ is permutation invariant: $\Proj_{\mathcal Q(\bsbA)\bsbP} = \Proj_{\mathcal Q(\bsbA)} $ for any permutation matrix $\bsbP\in \mathbb R^{q\times q}$, namely, the row clusters are unlabeled in nature. Let $\hat r = r(\hat \bsbB)$, $\hat q = q(\hat \bsbB)$, $r^* = r(\bsbB^*)$, $q^* = q(\bsbB^*)$.

First, we prove Theorem \ref{th_local}. Using the $\iota$ notations in Section \ref{sec:comp}, we  write $$G_{\rho}(\bsbB; \bsbB^-) = l_0(\bsbX\bsbB)    - \breg_{l_0} (\bsbX \bsbB, \bsbX\bsbB^-)+    \rho \Breg_2(  \bsbB,  \bsbB^-)+ \iota(\bsbS)+\iota_{\bsbV}(\bsbV).$$
It follows from   $
G_{\rho}(\hat \bsbB; \hat \bsbB) \le G_{\rho}(\bsbB^*; \hat \bsbB)
$
that
$$
l_0(\bsbX \hat \bsbB) \le l_0(\bsbX \bsbB^*)   - \breg_{l_0}  (\bsbX\bsbB^*, \bsbX \hat \bsbB) + \rho \Breg_2(\bsbB^*, \hat\bsbB),
$$
or
\begin{align}
2 \bar \breg_{l_0}(\bsbX \bsbB^*, \bsbX \hat \bsbB) \le \langle \bsbE, \bsbX \hat \bsbB - \bsbX \bsbB^*\rangle+ \rho \Breg_2(\bsbB^*, \hat\bsbB),\label{fixedpoint-auxeq1}
\end{align}
based on the definition of the generalized Bregman function \eqref{genbregdef}.

To make our proof applicable to  oracle inequalities,  we study a more general stochastic term  $$
\langle \bsbE, \bsbX   \bsbDelta  \rangle
$$
where    $\bsbDelta =\hat \bsbB -  \bsbB$ with $q(\bsbB)\le q, r(\bsbB)\le r$.
Let  $\hat{\mathcal Q}= \mathcal Q(\hat\bsbB)$, $\mathcal Q= \mathcal Q(\bsbB)$.
Then $q(\bsbDelta)\le q^2$ and $r(\bsbDelta)\le 2r$. The stochastic term $\langle \bsbE,   \bsbX\bsbDelta \rangle$ must be treated carefully; otherwise   $q^2$ will arise in the error bound.   The following decomposition is  key to our analysis:
\begin{align*}
&\quad \  \langle \bsbE,   \bsbX\bsbDelta \rangle \notag \\ & =    \langle \bsbE,    \bsbX \bsbDelta \Proj_{rs} + \bsbX \bsbDelta \Proj_{rs}^{\perp}\rangle \notag
\\&=    \langle  \bsbE,   \bsbX \bsbDelta \Proj_{rs} + \bsbX \hat \bsbB \Proj_{rs}^{\perp}\rangle\notag\\ & =   \langle  \bsbE,   \Proj_{\bsbX_{\mathcal Q}} \bsbX \bsbDelta \Proj_{rs}\rangle + \langle \bsbE,   \Proj_{\bsbX_{\mathcal Q}}^{\perp}  \bsbX  \hat\bsbB  \Proj_{rs}  \rangle  + \langle \bsbE,      \bsbX  \hat\bsbB  \Proj_{rs}^{\perp}\rangle\notag\\ & =   \langle  \bsbE,   \Proj_{\bsbX_{\mathcal Q}} \bsbX \bsbDelta \Proj_{rs}\rangle + \langle \bsbE,   \Proj_{\bsbX_{\mathcal Q}}^{\perp} \Proj_{\bsbX_{\hat {\mathcal Q}}}  \bsbX  \hat\bsbB  \Proj_{rs}  \rangle  + \langle \bsbE,  \Proj_{\bsbX_{\hat {\mathcal Q}}}   \bsbX  \hat\bsbB  \Proj_{rs}^{\perp}\rangle\notag\\
& =:    \langle \bsbE,    \bsbA_1  \rangle +  \langle \bsbE,   \bsbA_2\rangle+  \langle \bsbE,   \bsbA_3\rangle, 
\end{align*}
where  $\Proj_{rs}$  is  the orthogonal projection onto the  row space   of $\bsbX\bsbB $ and $\Proj_{rs}^{\perp}$  its orthogonal complement.
Because of the orthogonality between $\bsbA_i$, $$\| \bsbA_1\|_{ F}^2 + \| \bsbA_2\|_{ F}^2 + \| \bsbA_3\|_{ F}^2 = \|\bsbX \bsbDelta \|_{F}^2.$$
It is easy to see that   $r(\bsbA_1) \le r( \Proj_{rs})\le r$, $r(\bsbA_2) \le r(\hat \bsbB) \wedge r( \Proj_{rs})\le r$,
$r(\bsbA_3) = r ( \Proj_{\bsbX_{\hat {\mathcal Q}}}   \bsbX  \hat\bsbB  \Proj_{rs}^{\perp})\le \hat r\le r $. 

In the following, we use    ${p\brace q}$ to denote the ``{Stirling number of the second kind}'', i.e.,  the number of ways to partition a set of $p$ labeled objects into $q$ non-empty unlabeled subsets.

\begin{lemma}\label{emprocBnd}
        Suppose that  $ \vect(\bsbE)$ is  sub-Gaussian with mean zero and  scale bounded by $\sigma$.  
        Given $\bsbX\in \mathbb R^{n\times p}$, $1\leq q\leq p$, $1\leq r \leq q$, define  $ \Gamma_{  r, q} = \{  \bsbA: \bsbA\in \mathbb R^{n\times m}, \|\bsbA\|_F  \le 1,  r(\bsbA)\le r, cs(\bsbA) \subset cs(\bsbX_{\mathcal Q   }) \mbox{ for some } \mathcal Q:  q(  \mathcal Q)\le q \}$.  Let $$P(r, q) =    (r(\bsbX)\wedge  q  + m) r  +  \log {p\brace q}.$$
        Then for any $t\geq 0$,
        \begin{align}
        \EP [\sup_{\bsbA \in \Gamma_{r,q}} \langle \bsbE, \bsbA \rangle \geq t \sigma +  \{L\sigma^2   P(r,q)\}^{1/2} ] \leq C\exp(- ct^2),
        \end{align}
        where $L, C, c>0$ are universal constants.
\end{lemma}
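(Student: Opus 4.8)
The plan is to reduce the supremum to a union, over a controlled number of \emph{fixed} clusterings, of suprema of a sub-Gaussian linear process on a low-rank Frobenius ball, and then to control each piece by a covering argument that is careful enough to keep the tail parameter from picking up spurious rank factors. First I would note that if a partition $\mathcal Q'$ refines $\mathcal Q$ then $cs(\bsbX_{\mathcal Q})\subseteq cs(\bsbX_{\mathcal Q'})$, and since $p\ge q$ any partition into fewer than $q$ blocks refines to one into exactly $q$ nonempty blocks; hence $\Gamma_{r,q}=\bigcup_{\mathcal Q}\Gamma_{r,q}^{\mathcal Q}$, the union being over the ${p\brace q}$ partitions of $[p]$ into exactly $q$ nonempty blocks, where $\Gamma_{r,q}^{\mathcal Q}=\{\bsbA\in\Gamma_{r,q}:cs(\bsbA)\subseteq cs(\bsbX_{\mathcal Q})\}$. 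For fixed $\mathcal Q$, $cs(\bsbX_{\mathcal Q})$ has dimension $d_{\mathcal Q}\le r(\bsbX)\wedge q=:d$ (since $\bsbX_{\mathcal Q}=\bsbX\bsbF(\bsbF^T\bsbF)^{+}\bsbF^T$ with $\bsbF$ of full column rank $q$); writing $\bsbU_{\mathcal Q}$ for an orthonormal basis of this column space, every $\bsbA\in\Gamma_{r,q}^{\mathcal Q}$ factors as $\bsbA=\bsbU_{\mathcal Q}\bm N$ with $\bm N\in\mathbb R^{d_{\mathcal Q}\times m}$, $\|\bm N\|_F=\|\bsbA\|_F\le1$, $r(\bm N)=r(\bsbA)\le r$, and $\langle\bsbE,\bsbA\rangle=\langle\bsbU_{\mathcal Q}^T\bsbE,\bm N\rangle$. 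The vector $\tilde\bsbE_{\mathcal Q}:=\bsbU_{\mathcal Q}^T\bsbE$ is again sub-Gaussian with scale at most $\sigma$ because $\bsbU_{\mathcal Q}$ is column-orthonormal. Thus it remains to bound $S_{\mathcal Q}:=\sup\{\langle\tilde\bsbE_{\mathcal Q},\bm N\rangle:\bm N\in\mathbb R^{d_{\mathcal Q}\times m},\,r(\bm N)\le r,\,\|\bm N\|_F\le1\}$, uniformly enough to survive a union bound over the ${p\brace q}$ clusterings.

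For the single-clustering bound I would run the usual $\tfrac12$-net argument on the rank-$r$ Frobenius unit ball in $\mathbb R^{d_{\mathcal Q}\times m}$, which admits a net of cardinality at most $C^{(d_{\mathcal Q}+m)r}$ by the standard low-rank covering estimate. The one wrinkle is that the difference of two points of this ball has rank at most $2r$ rather than $r$, so the net only gives $S_{\mathcal Q}\le M_r+\tfrac12 S_{\mathcal Q}^{(2r)}$, with $M_r$ the maximum of $\langle\tilde\bsbE_{\mathcal Q},\cdot\rangle$ over the net and $S_{\mathcal Q}^{(2r)}$ the analogous supremum with rank budget $2r$. I would therefore iterate the rank budget along $r,2r,4r,\dots$, stopping once it reaches $d_{\mathcal Q}$ (where the rank constraint is vacuous and the residual supremum is just $\|\tilde\bsbE_{\mathcal Q}\|_F$, controlled by an $\varepsilon$-net of the sphere in $\mathbb R^{d_{\mathcal Q}m}$). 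Each $M_{2^jr}$ is the maximum of a sub-Gaussian variable over at most $C^{(d_{\mathcal Q}+m)2^jr}$ points, hence $M_{2^jr}\le C\sigma\{(d_{\mathcal Q}+m)2^jr\}^{1/2}+\sigma v_j$ with probability at least $1-e^{-cv_j^2}$; summing the geometric series $\sum_j2^{-j}M_{2^jr}$ with $v_j=v+j$ gives $S_{\mathcal Q}\le C\sigma\{(d_{\mathcal Q}+m)r\}^{1/2}+\sigma v$ with probability at least $1-Ce^{-cv^2}$, and importantly the tail term carries the coefficient $\sigma$, not $\sigma\sqrt r$.

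A union bound over the ${p\brace q}$ clusterings with $v^2\asymp\log{p\brace q}+t^2$ then yields $\sup_{\bsbA\in\Gamma_{r,q}}\langle\bsbE,\bsbA\rangle\le C\sigma\{(d+m)r\}^{1/2}+C\sigma\{\log {p\brace q}\}^{1/2}+C\sigma t$ with probability at least $1-Ce^{-ct^2}$; collapsing the first two terms under a single square root gives exactly $t\sigma+\{L\sigma^2 P(r,q)\}^{1/2}$ with $P(r,q)=(r(\bsbX)\wedge q+m)r+\log{p\brace q}$, as claimed.

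The main obstacle is precisely the single-clustering estimate: a naive bound such as $S_{\mathcal Q}\le\sqrt r\,\|\tilde\bsbE_{\mathcal Q}\|_2$ would, after the union over clusterings, blow up both the $\log{p\brace q}$ term and the tail by a factor of order $\sqrt r$, which is too lossy for the downstream use in Theorem~\ref{th_local}; the iterated handling of the rank-doubling in the net is what keeps both at the right order. I would also emphasize that only the marginal sub-Gaussianity of $\bsbE$ together with union bounds is used — no Lipschitz-concentration argument of Borell–TIS or Talagrand type — since the entries of $\bsbE$ are allowed to be dependent.
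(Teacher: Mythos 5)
Your proposal is correct and reaches the stated bound with the right dependence on $t$, $r$, and $\log{p\brace q}$. It shares the paper's key structural step — splitting $\Gamma_{r,q}$ according to the finitely many column spaces $cs(\bsbX_{\mathcal Q})$ (counted by the Stirling number ${p\brace q}$) and then exploiting the factorization $\bsbA=\bsbU_{\mathcal Q}\bm{N}$ so that the effective dimension is $(r(\bsbX)\wedge q+m)r$ — but the concentration machinery is genuinely different. The paper folds the term $\log{p\brace q}$ into a single global metric entropy bound, covering the row space by an $\varepsilon$-net of the Grassmann manifold $G_{m,r}$ (via Szarek's estimate) and the core matrix by a Euclidean ball net, and then invokes Dudley's entropy integral for the process with sub-Gaussian increments; this handles all scales at once and delivers the $\{L\sigma^2 P(r,q)\}^{1/2}+t\sigma$ form directly. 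You instead do a union bound over the ${p\brace q}$ clusterings first and, for each fixed clustering, run a single-scale $\tfrac12$-net on the rank-$r$ Frobenius ball, iterating over the rank budgets $r,2r,4r,\dots$ to absorb the rank-doubling of differences; the geometric decay $\sum_j 2^{-j}\{(d+m)2^jr\}^{1/2}\lesssim\{(d+m)r\}^{1/2}$ and the choice $v_j=v+j$ keep both the main term and the tail at the right order. Your route is more elementary (no Dudley, no Grassmannian entropy at all scales) at the cost of the explicit iteration and the separate treatment of the residual once the rank budget saturates; both arguments use only marginal sub-Gaussianity and union bounds, so neither requires independence of the entries of $\bsbE$. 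Two small remarks: the iteration can be short-circuited by splitting a rank-$2r$ increment into two rank-$r$ pieces, which yields $S_{\mathcal Q}\le 2M_r$ in one step; and your final bound appears with a constant in front of $t\sigma$, which is harmless since rescaling $t$ only changes the universal constant $c$ in the exponent.
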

\begin{lemma}\label{emprocBnd2}
        Suppose $\vect(\bsbE)$ is  sub-Gaussian with mean zero and  scale  bounded by $\sigma$.  
        Given $\bsbX\in \mathbb R^{n\times p}$, $1\leq q_{1}, q_2\leq p$, $1\leq r \leq  q_{2}$, define  $ \Gamma_{  r, q_1, q_2} = \{  \bsbA: \bsbA\in \mathbb R^{n\times m}, \|\bsbA\|_F  \le 1,  r(\bsbA)\le r, cs(\bsbA) \subset cs( \Proj_{\bsbX_{\mathcal Q_1}^{\perp}} \Proj_{\bsbX_{ {\mathcal Q}_2}}   ) \mbox{ with }  q(\mathcal Q_1)\allowbreak \le q_1,  q(\mathcal Q_2)\le q_2 \}$.  Let $$P(r, q_{1}, q_{2}) =     (r(\bsbX)\wedge  q_{2}  + m) r  +   \log {p\brace q_1} +   \log {p\brace q_2}.$$
        Then for any $t\geq 0$,
        \begin{align}
        \EP [\sup_{\bsbA \in \Gamma_{r,q_1, q_2}} \langle \bsbE, \bsbA \rangle \geq t \sigma +  \{L  \sigma^2 P(r,q_{1}, q_{2})\}^{1/2} ] \leq C\exp(- ct^2),
        \end{align}
        where $L, C, c>0$ are universal constants.  \end{lemma}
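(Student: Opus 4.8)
The plan is to reduce the supremum over $\Gamma_{r,q_1,q_2}$ to a union, over pairs of set partitions, of suprema over fixed low-dimensional sets, exactly along the lines of Lemma~\ref{emprocBnd} but with one extra projection. First I would write $\Gamma_{r,q_1,q_2}=\bigcup_{(\mathcal Q_1,\mathcal Q_2)}\Gamma_r^{\mathcal Q_1,\mathcal Q_2}$, where $\mathcal Q_1$ runs over all set partitions of $[p]$ into at most $q_1$ nonempty blocks, $\mathcal Q_2$ over those into at most $q_2$ blocks, and $\Gamma_r^{\mathcal Q_1,\mathcal Q_2}=\{\bsbA:\|\bsbA\|_F\le 1,\ r(\bsbA)\le r,\ cs(\bsbA)\subset cs(\Proj_{\bsbX_{\mathcal Q_1}}^{\perp}\Proj_{\bsbX_{\mathcal Q_2}})\}$, so that $\sup_{\bsbA\in\Gamma_{r,q_1,q_2}}\langle\bsbE,\bsbA\rangle=\max_{(\mathcal Q_1,\mathcal Q_2)}\sup_{\bsbA\in\Gamma_r^{\mathcal Q_1,\mathcal Q_2}}\langle\bsbE,\bsbA\rangle$. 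The key geometric observation is that $cs(\Proj_{\bsbX_{\mathcal Q_1}}^{\perp}\Proj_{\bsbX_{\mathcal Q_2}})=\Proj_{\bsbX_{\mathcal Q_1}}^{\perp}\!\big(cs(\bsbX_{\mathcal Q_2})\big)$ has dimension at most $\rank(\bsbX_{\mathcal Q_2})\le\rank(\bsbX)\wedge q_2$, since $\bsbX_{\mathcal Q_2}=\bsbX\bsbF_2(\bsbF_2^T\bsbF_2)^{+}\bsbF_2^T$ with $\bsbF_2$ a $p\times q(\mathcal Q_2)$ membership matrix; hence the ``inner'' partition $\mathcal Q_1$ can only shrink the effective column dimension, and it will enter the final bound only through the size of the partition class, never through the metric entropy.

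Next, fixing a pair $(\mathcal Q_1,\mathcal Q_2)$ and letting $\mathcal U:=cs(\Proj_{\bsbX_{\mathcal Q_1}}^{\perp}\Proj_{\bsbX_{\mathcal Q_2}})$ with $d:=\dim\mathcal U\le\rank(\bsbX)\wedge q_2$, I would pick a fixed orthonormal $n\times d$ basis $\bsbU$ of $\mathcal U$ and write $\bsbA=\bsbU\bsbC$; this identifies $\Gamma_r^{\mathcal Q_1,\mathcal Q_2}$ isometrically (in Frobenius norm) with the set of rank-$\le r$ matrices in the unit ball of $\mathbb R^{d\times m}$, whose $\varepsilon$-covering number is at most $(9/\varepsilon)^{(d+m+1)r}$ by the usual SVD-factorization argument (net the two Stiefel factors and the singular-value vector). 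Since $\bsbA\mapsto\langle\bsbE,\bsbA\rangle$ is a sub-Gaussian process with increments $\|\langle\bsbE,\bsbA-\bsbA'\rangle\|_{\psi_2}\le\sigma\|\bsbA-\bsbA'\|_F$, Dudley's entropy integral gives $\EE\sup_{\bsbA\in\Gamma_r^{\mathcal Q_1,\mathcal Q_2}}\langle\bsbE,\bsbA\rangle\lesssim\sigma\{(\rank(\bsbX)\wedge q_2+m)r\}^{1/2}$, and the standard concentration of the supremum of a sub-Gaussian process about its mean yields, for every $s\ge0$, $\EP[\sup_{\bsbA\in\Gamma_r^{\mathcal Q_1,\mathcal Q_2}}\langle\bsbE,\bsbA\rangle\ge Cs\sigma+\{L_0\sigma^2(\rank(\bsbX)\wedge q_2+m)r\}^{1/2}]\le C\exp(-cs^2)$.

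Finally I would union-bound over the partition pairs and peel. The number of set partitions of $[p]$ into at most $q_i$ nonempty blocks is $\sum_{k\le q_i}{p\brace k}$, whose logarithm is of order $\log{p\brace q_i}$ up to the same harmless factors already absorbed in the proof of Lemma~\ref{emprocBnd}; call these counts $N_1$ and $N_2$. Summing the fixed-pair tail bound over the $N_1N_2$ pairs and choosing the threshold $t\sigma+\{L\sigma^2[(\rank(\bsbX)\wedge q_2+m)r+\log N_1+\log N_2]\}^{1/2}$, i.e.\ replacing $s$ by a shift that swallows $\log N_1+\log N_2$, turns $N_1N_2\cdot C\exp(-cs^2)$ into $C\exp(-ct^2)$ after adjusting the constants, which is precisely the asserted bound with $P(r,q_1,q_2)=(\rank(\bsbX)\wedge q_2+m)r+\log{p\brace q_1}+\log{p\brace q_2}$. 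The step I expect to demand the most care is the geometric one --- verifying that the two stacked projections contribute at most $\rank(\bsbX)\wedge q_2$ to the ambient dimension (so that $q_1$ costs only $\log{p\brace q_1}$ and never multiplies $r$ or $m$) --- together with the companion bookkeeping that converts the double union over membership matrices into $\log{p\brace q_1}+\log{p\brace q_2}$; both are mild extensions of the single-projection argument in Lemma~\ref{emprocBnd}, so the cleanest write-up simply says the proof proceeds ``as in Lemma~\ref{emprocBnd}'' with the obvious modifications.
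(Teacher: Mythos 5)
Your proposal is correct and follows essentially the same route as the paper: the same geometric reduction (each admissible column space has dimension at most $\rank(\bsbX)\wedge q_2$ and there are at most ${p\brace q_1}\times{p\brace q_2}$ of them), the same SVD-based covering of the rank-$r$ unit ball, and the same Dudley/chaining tail bound. The only cosmetic difference is that you union-bound over partition pairs and shift the threshold by $\log N_1+\log N_2$, whereas the paper folds the count ${p\brace q_1}\times{p\brace q_2}$ directly into the metric entropy of the whole class and applies the entropy-integral tail bound once; these are interchangeable.
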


\begin{lemma}\label{stirling2bound}
        For any $p\ge 1$, $1\le q \le p$,
        \begin{align}
        c_1 (p -q ) \log q \le \log {p\brace q} \le c_2 (p - q) \log q, \label{st2ndbound}
        \end{align}
        where $c_1, c_2$ are positive constants.
\end{lemma}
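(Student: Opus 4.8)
The plan is to bracket $\log{p\brace q}$ between two multiples of $(p-q)\log q$ by two short counting arguments, which along the way yield the explicit constants $c_1=1$ and $c_2=2$. Throughout one may assume $2\le q\le p$: when $q=1$ or $q=p$ both ${p\brace q}=1$ and $(p-q)\log q=0$, so there is nothing to prove.

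For the lower bound I would exhibit at least $q^{p-q}$ distinct partitions directly. Map each function $g\colon\{q+1,\ldots,p\}\to\{1,\ldots,q\}$ to the set partition of $[p]$ with blocks $\{i\}\cup g^{-1}(i)$, $i=1,\ldots,q$; every block is nonempty since it contains the ``anchor'' $i$. Distinct $g$'s give distinct partitions: if $g(x)=i\ne j=g'(x)$, then $x$ shares a block with anchor $i$ in the first partition and with anchor $j$ in the second, whereas anchors $i$ and $j$ always lie in different blocks (each block of the form $\{i\}\cup g^{-1}(i)$ meets $\{1,\ldots,q\}$ only in $i$). Hence ${p\brace q}\ge q^{p-q}$, i.e.\ $\log{p\brace q}\ge(p-q)\log q$. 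The cruder construction that fixes $q-1$ singleton blocks gives only $\binom{p}{q-1}$ partitions, far too few when $p-q$ is large, so the anchoring device is what produces the correct order here.

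For the upper bound I would encode a partition by its block minima. Writing the minima in increasing order $1=m_1<m_2<\cdots<m_q$, the set $\{m_1,\ldots,m_q\}$ is a $q$-subset of $[p]$ containing $1$, giving $\binom{p-1}{q-1}$ choices, after which each of the remaining $p-q$ elements lies in one of at most $q$ blocks; thus ${p\brace q}\le\binom{p-1}{q-1}\,q^{p-q}$. It then remains to absorb the binomial factor, for which the claim is $\binom{p-1}{q-1}\le q^{p-q}$ whenever $2\le q\le p$. I would prove this by fixing $q$ and showing $a_p:=q^{p-q}/\binom{p-1}{q-1}$ is nondecreasing for $p\ge q$: indeed $a_q=1$, and $a_{p+1}/a_p=q(p-q+1)/p$, which is $\ge 1$ precisely when $p\ge q$. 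Combining the two displays gives ${p\brace q}\le q^{2(p-q)}$, hence $\log{p\brace q}\le 2(p-q)\log q$.

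The one step needing care is the choice of encoding for the upper bound: the naive estimate, which bounds ${p\brace q}$ by $q^p/q!$ via the surjection count, only yields $\log{p\brace q}\le(p-q)\log q+q$ after Stirling, and the spurious additive $q$ breaks the inequality as $q\to p$. The block-minimum encoding is exactly what removes that term, after which $\binom{p-1}{q-1}\le q^{p-q}$ is a routine monotonicity computation.
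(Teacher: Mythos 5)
Your proof is correct, and it takes a genuinely different route from the paper's. The paper handles the boundary cases $q=1,2,p-1,p$ by hand and otherwise invokes Rennie's published bounds $\frac{1}{2}(q^2+q+2)q^{p-q-1}-1\le {p\brace q}\le \frac{1}{2}\binom{p}{q}q^{p-q}$, then absorbs the binomial factor via Stirling's formula in two separate regimes ($q\le p/2$ and $q\ge p/2$), ending with $c_2=4$ in the harder regime. You instead prove both inequalities from scratch by explicit injections: the anchored-function construction gives ${p\brace q}\ge q^{p-q}$ (the same quantity driving Rennie's lower bound, so $c_1=1$ either way), and the block-minima encoding gives ${p\brace q}\le\binom{p-1}{q-1}q^{p-q}$, after which the single monotonicity computation $a_{p+1}/a_p=q(p-q+1)/p\ge 1$ replaces the paper's two-case Stirling analysis and yields the cleaner constant $c_2=2$ uniformly. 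I checked the details: the injectivity of both encodings is sound (each block in your lower-bound construction contains exactly one anchor, so the partition recovers $g$), the ratio computation is right, and the equality case $a_{q+1}=1$ (i.e.\ $\binom{q}{q-1}=q^1$) shows your bound $\binom{p-1}{q-1}\le q^{p-q}$ is tight, so nothing sharper comes for free. Your approach buys self-containedness and better constants; the paper's buys brevity by citing a known estimate. Either proof suffices for the way the lemma is used downstream, where only the order $(p-q)\log q$ matters.
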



Let $P_{o}(q, r) =   (r(\bsbX)\wedge  q  + m) r  + (p - q) \log q  $.
For any   $a, b, a'>0$ and $4b > a$,
\begin{align*}
& \langle \bsbE, \bsbA_1\rangle -   \|\bsbA_1\|_F^2/a -  b L \sigma^2P_{o}(q, r)\\
\leq & \,   \|\bsbA_1\|_F  \langle \bsbE,  \bsbA_1/ \|\bsbA_1\|_F\rangle - 2({{  b}/{a}})^{1/2}  \|\bsbA_1\|_F \{{L\sigma^2 P_{o}(q, r)\}^{1/2}}
\\
\le  &  \,  \|\bsbA_1\|_F^2/{a'} +  ({a'}/{4}) R_{1}^2,
\end{align*}
where    $L$ is a   constant and
\begin{align}
R_1 &:=
\sup_{\bsbA\in \Gamma_{ r, q}}\big [\langle \bsbE, \bsbA \rangle - 2(b/a)^{1/2} \{{L\sigma^2 P_o(q, r)\}^{1/2}}\big]_+. \label{residualterm1}
\end{align}
We claim  an expectation-form bound:     $\EE R_1^2\le C \sigma^2$.
In fact,
from Lemmas \ref{emprocBnd} and \ref{stirling2bound},
\begin{align}\label{eqR1probtailbound}
\begin{split}
&   \EP\big [
\sup_{\bsbA\in \Gamma_{ r, q}}   \langle \bsbE, \bsbA \rangle -\{L\sigma^2  P_o (q, r)\}^{1/2}   \\
& \qquad \ge t \sigma  +   2(b/a)^{1/2} \sigma\{{L P_o(q, r)\}^{1/2}}-\sigma\{{L  P_o (q, r)}\}^{1/2}\big ]  \\
\leq &   C\exp(-c t^2)\exp\Big(- c \big[\{2(b/a)^{1/2}-1\}^2 L  P_o(q, r)\big]\Big)\\
\leq &  C  \exp(-c t^2)    ,
\end{split}
\end{align}
and so
$ 
\EP(R_{1}\ge  t \sigma )  \leq   C \exp(-c t^2)
$. 

Since $\hat r \le r$, $\langle \bsbE, \bsbA_2\rangle $ and $\langle \bsbE, \bsbA_3\rangle $ can be similarly bounded   using  Lemma \ref{emprocBnd}  and Lemma \ref{emprocBnd2}. To sum up,
\begin{align}
\EE \langle \bsbE,   \bsbX\bsbDelta \rangle \le\EE\left\{\Big(\frac{2}{a}+\frac{2}{a'}\Big)\frac{\|\bsbX\bsbDelta\|_F^2}{2} + 4 b L \sigma^2 P_o(q, r) + C a'\sigma^2 \right\}\label{stochtermbound-exp}
\end{align}
for any    $a, b, a'>0$ and $4b > a$.

 Substituting $\bsbB^*$ for $\bsbB$ in \eqref{stochtermbound-exp} and plugging it back to \eqref{fixedpoint-auxeq1},   we obtain
\begin{align*}
& \EE [ 2 \bar \breg_{l_0}(\bsbX \bsbB^*, \bsbX \hat \bsbB)] \\  \le \ &
\EE\Big\{\big(\frac{2}{a}+\frac{2}{a'}\big)\Breg_{2} (\bsbX \hat \bsbB, \bsbX \bsbB^*)      + \rho \Breg_2(\bsbB^*, \hat\bsbB)\Big\}  \\ & + 4bL \sigma^2[(r(\bsbX)\wedge  q   + m)   r +  (p - q ) \log q] + C a'\sigma^2.
\end{align*}
From the regularity condition,
$$
\rho \Breg_2(\bsbB^*, \hat\bsbB)   \le (2  \bar \breg_{l_0}-\delta \Breg_2) (\bsbX \bsbB^*, \bsbX \hat \bsbB)  + K \sigma^2 \{(r(\bsbX)\wedge  q   + m) r +
(p - q  ) \log q\},
$$
and so \begin{align*}
\EE\Big\{\big(\delta- \frac{2}{a}-\frac{2}{a'}\big)\Breg_{2} (\bsbX \hat \bsbB, \bsbX \bsbB^*)    \Big\} \le
(4bL +K)\sigma^2\big[(r(\bsbX)\wedge  q   + m)   r   +  (p - q) \log q\big]  + Ca' \sigma^2      .
\end{align*}
Choosing  $a = a' =8/\delta, b = 3/\delta$ gives the conclusion.\\

To prove  Theorem \ref{th_genlosserr}, we first notice that the  optimality of $\hat \bsbB$ and the definition of the effective  noise  imply  $$
\breg_{l_0} (\bsbX \hat \bsbB, \bsbX \bsbB^*) \le \langle \bsbE, \bsbX (\hat  \bsbB - \bsbB^*)\rangle,
$$
in place of \eqref{fixedpoint-auxeq1}.
Based on  the   stochastic term decomposition and the  bounds  in the lemmas,   we have
for any   $a, a', b>0$ and $4b > a$ \begin{align*}
&\EE \big\{ \breg_{l_0} (\bsbX \hat \bsbB, \bsbX \bsbB^*) - (\frac{2}{a}+\frac{2}{a'})\Breg_{2} (\bsbX \hat \bsbB, \bsbX \bsbB^*)     \big \}\\
\le \ &   4bL \sigma^2\{(r(\bsbX)\wedge  q    + m)   r    +(p - q  )   \log q    \}+Ca'\sigma^2 .
\end{align*}
The conclusion can be shown similarly.

 Next, we prove Corollary \ref{cor:estbnd}. By the optimality of $\hat \bsbB$,   we have
$l_0 (  \bsbX  \hat \bsbB;  \bsbY)  \le l_0(  \bsbX \bsbB;   \bsbY)$ for any $\bsbB: r(\bsbB) \leq r$, $q( \bsbB)\le q$,  from which it follows that
$$
\Breg_{l_0}(   \bsbX \hat \bsbB,       \bsbX \bsbB^*)  \le \Breg_{l_0}(\bsbX  \bsbB,   \bsbX    \bsbB^* ) + \langle \bsbE,   \bsbX\bsbDelta    \rangle  ,$$ where   $\bsbDelta =\hat \bsbB -  \bsbB$.
 From  \eqref{stochtermbound-exp} and the Cauchy-Schwartz inequality,
\begin{align*}
\EE \Breg_{l_0}(\bsbX \hat \bsbB, \bsbX   \bsbB^*) \le \ & \EE \big[\Breg_{l_0}(\bsbX   \bsbB, \bsbX   \bsbB^*) +  ( {2}/{a}+ {2}/{a'})(1+1/b')\Breg_2(\bsbX\hat \bsbB,  \bsbX \bsbB^*) \\ &  + ( {2}/{a}+ {2}/{a'})(1+b')\Breg_2( \bsbX  \bsbB,   \bsbX \bsbB^*)\big]+ 4b L \sigma^2 P_o(q, r) + C a'\sigma^2,\end{align*}
for any $a, a',  b'>0$ and  $4  b>a$. Choosing    $a = a'=16/\mu$, $b=5/\mu$, and $b'=1$,  and using the strongly convexity of $l_0$, we obtain the oracle inequality.

To prove the second result in the corollary, we  set  $\bsbB = \bsbB^*$    and  use a high-probability form of \eqref{stochtermbound-exp}: with probability at least $1 - C \exp(-c (r(\bsbX)+m))$,       
\begin{equation} \label{intermediatestep}
\begin{aligned}
 \langle \bsbE,   \bsbX\bsbDelta \rangle \le  \Big(\frac{2}{a}+\frac{2}{a'}\Big)\frac{\|\bsbX\bsbDelta\|_F^2}{2} + 4 b L \sigma^2 P_o(q, r), \end{aligned}
\end{equation}
where as before, $L$ is  a constant   and   $a, a', b$ are any positive numbers satisfying $  4b> a  $. We use \eqref{eqR1probtailbound} to verify the probability bound under $2\le q \le p$:  if $r = 0$,    $R_1= 0$, otherwise
$$
 \{q \wedge r(\bsbX)\}r + (p-q)\log q \gtrsim   r(\bsbX)  \wedge \{q + (p-q)\log q\}\gtrsim  r(\bsbX) \wedge p =  r(\bsbX).
 $$
 (To see the last inequality, consider two cases $q\le cp$ and $q\ge cp$.) Hence  $(q \wedge r(\bsbX)+m)r + (p-q)\log q \gtrsim m + r(\bsbX)$.

Now, using $\log q \le  \vartheta + \log q^* \le \vartheta \log q^*$, \eqref{intermediatestep} implies
\begin{align*}
\Big(\mu -\frac{2}{a}-\frac{2}{a'}\Big) \|   \bsbX \hat \bsbB    -  \bsbX \bsbB^*  \|_F^2\le        8b  \vartheta^2 L \sigma^2 \{(q^{*}\wedge r(\bsbX) +m  ) r^{*} + (p  -q^*)\log q^{*} \}
\end{align*}
The remaining details are  omitted.\\

\noindent \emph{Proofs of Lemma \ref{emprocBnd} and Lemma \ref{emprocBnd2}.}
We prove Lemma \ref{emprocBnd2} as follows. The proof of Lemma \ref{emprocBnd}  is similar.  
By definition, for any fixed $\bsbA$, $\langle \bsbE,  \bsbA\rangle$ is a   sub-Gaussian random variable with scale bounded by $\sigma\|\bsbA\|_F$. Therefore,   $\{\langle \bsbE,  \bsbA\rangle: \bsbA \in \Gamma_{ r, q_1, q_2}\}$ defines a stochastic process with sub-Gaussian increments.  The induced metric on $\Gamma_{ r,q_{1}, q_{2}}$ is Euclidean: $d(\bsbA_1, \bsbA_2) = \sigma \|\bsbA_1 - \bsbA_2\|_F$.

We will use Dudley's entropy integral to bound $\sup_{\bsbA \in \Gamma_{r,q_1, q_2}} \langle \bsbE, \bsbA \rangle$. Toward this, we need to calculate   the metric entropy $\log {\mathcal N}(\varepsilon, \Gamma_{ r, q_{1}, q_{2}}, d)$, where ${\mathcal N}(\varepsilon, \Gamma_{ r, q_{1}, q_{2}} , d)$ is the minimum cardinality of all $\varepsilon$-nets   covering $ \Gamma_{ r, q_{1}, q_{2}}$  under  $d$.

 Given $\bsbA\in  \Gamma_{ r, q_{1}, q_{2}}$,   its column space must be contained in  $cs( \Proj_{\bsbX_{\mathcal Q_1}^{\perp}} \Proj_{\bsbX_{ {\mathcal Q}_2}}   ) $ for some $\mathcal Q_i$  with $q(\mathcal Q_i) =q_i$.  When varying $\mathcal Q_i$,  the number of all $cs( \Proj_{\bsbX_{\mathcal Q_1}^{\perp}} \Proj_{\bsbX_{ {\mathcal Q}_2}}   ) $   is no more than $ {p\brace q_1} \times {p\brace q_2}$.  Let $\mathcal U$ be a set of $ \bsbU_i$  with $\bsbU_i \in \mathbb R^{n\times (r(\bsbX)\wedge q_2 )}$, $\bsbU_i^T \bsbU_i = \bsbI$, and $\bsbU_i \bsbU_i^T$ characterizing all these column spaces, $ 1\le i\le {p\brace q_1} \times {p\brace q_2}  $. Then we can represent  $\bsbA $  by
\begin{align}
\bsbA  =\bsbU \bsbSig \bsbV^{T},\label{rcdecomp}
\end{align}
where  $ \bsbU\in \mathcal U$, $\bsbV\in \mathbb O^{m\times r}$, and $\bsbSig \in \mathbb R^{(r(\bsbX)\wedge q_2 )\times r}$.

 An $\varepsilon$-net can be designed noticing that $\bsbV \bsbV^T$ corresponds to
a point on a Grassmann manifold  $G_{m, r}$ (all $r$-dimensional subspaces of $\mathbb R^m$), and   $\bsbSig=\bsbU^{T} \bsbA \bsbV$ is in a unit ball of dimensionality ${(r(\bsbX)\wedge q_2 )\times r}$, denoted by $B_{(r(\bsbX)\wedge q_2 )\times r}$. In fact, given any $\bsbA_1\in \Gamma_{r, q_1, q_2}$, we can write  $\bsbA_1 = \bsbU_1 \bsbSig_1 \bsbV_1^{T}$ according to \eqref{rcdecomp} and  find $\bsbV_2$ and $\bsbSig_2$ such that $\|\bsbV_1\bsbV_1^{T} - \bsbV_2\bsbV_2^{T}\|_2 \leq \varepsilon$ and   $\|\bsbSig_1 \bsbV_1^{T} \bsbV_2 - \bsbSig_2\|_F\leq \varepsilon$.  Then,  for $\bsbA_2=\bsbU_1 \bsbSig_2 \bsbV_2^{T}$,
\begin{align*}
\|\bsbA_1 -\bsbA_2\|_F &\leq \|\bsbA_1 - \bsbA_1 \bsbV_2 \bsbV_2^{T}\|_F + \| \bsbU_1 \bsbSig_1 \bsbV_1^{T} \bsbV_2 \bsbV_2^{T} - \bsbU_1 \bsbSig_2 \bsbV_2^{T}\|_F \\&\leq \big[\mathrm{tr}\{ \bsbA_1^{T} \bsbA_1 (\Proj_{\bsbV_1}- \Proj_{\bsbV_2})^2\}\big]^{1/2} + \|  \bsbSig_1 \bsbV_1^{T} \bsbV_2 - \bsbSig_2 \|_F\\
&\leq  \big(\|\bsbA_1\|_F^2 \|\Proj_{\bsbV_1}- \Proj_{\bsbV_2}\|_2^2\big)^{1/2}+\varepsilon \leq 2\varepsilon.
\end{align*}
By a standard volume argument, $${\mathcal N}(\varepsilon, B_{(r(\bsbX)\wedge q_2 )\times r}, \| \cdot \|_F) \leq (C_0/\varepsilon)^{(r(\bsbX)\wedge q_2 )\times r},$$ and    from \cite{szarek82},   $${\mathcal N}(\varepsilon, G_{m,r}, \|\cdot \|_2) \leq   \left({C_1}/{\varepsilon}\right)^{r(m-r)},
$$
where   $C_0, C_1$ is are universal constants. Therefore,
under  metric $d$, \begin{align}
\log  {\mathcal N}(\varepsilon, \Gamma_{ r, q_{1}, q_{2}} , d)  \leq  \log {p\brace q_1} +   \log {p\brace q_2}  + \{(r(\bsbX)\wedge q_2 ) r +r m \}\log  ({C \sigma} /{\varepsilon} ).
\end{align}

From Dudley's integral bound, we obtain
\begin{align*}
\EP \left\{\sup_{\bsbA \in \Gamma_{ r, q_1, q_2}} \langle \bsbE, \bsbA \rangle \geq t \sigma + L \int_0^{\sigma}  {\log^{1/2}  {\mathcal N}(\varepsilon, \Gamma_{J,r}, d)}\rd \varepsilon\right\} \leq C\exp(- ct^2).
\end{align*}
By   computation,
\begin{align*}
\int_0^{\sigma}  \log^{1/2} {\mathcal N}(\varepsilon, \Gamma_{ r, q_1, q_2}, d) \rd \varepsilon  & \lesssim \sigma \log^{1/2} {p\brace q_1} +   \sigma \log^{1/2} {p\brace q_2}  \\ & \quad + \sigma \{(r\wedge q_{2})\times r +r(m-r)\}^{1/2} \\ &\lesssim  \{P ( r, q_1, q_2)\}^{1/2},\end{align*} and  so the probability bound in Lemma \ref{emprocBnd2} follows.  \\

\noindent \emph{Proof of Lemma \ref{stirling2bound}.}
The Stirling number of the second kind can be shown to be  $  {p\brace q} = \frac{1}{q!}\sum_{j=0}^q (-1)^{q - j} {q \choose j} j^p$, using   the inclusion-exclusion
formula.

When $q = 1, 2,  p -1$, and $ p$, ${p\brace q}= 1, 2^{p-1}-1, {p \choose 2}$, and  $1$, respectively. So the conclusion is straightforward for these special cases. By \cite{Rennie_1969},  for $p\ge 2$ and $1\le q\le p-1$,
\begin{align}
\frac{1}{2} (q^2 + q + 2) q^{p - q -1} -1 \le   {p\brace q} \le\frac{1}{2} {p \choose q} q^{p-q}.  \label{lubndS2}
\end{align}
The lower bound in \eqref{st2ndbound} can be shown from \eqref{lubndS2} directly. To prove the upper bound, we consider two cases.  (i) $q \le p/2$, $q\ge 3$. Using Stirling's formula and  the monotone property of $t/\log t$ when $t\ge 3$, we have
$$
{p \choose q} \le (ep/q)^q \le p^q \le q^p \le q^{2(p-q)}.
$$
(ii)  $q\ge p/2$, $q\le p-2$, $p\ge 4$. Again, by Stirling's formula, $${p \choose q} =  {p \choose p-q}\le (\frac{ep}{p-q })^{p-q}.$$ Therefore,
\begin{align*}
\log  {p\brace q} & \le (p-q)\log \frac{ep}{p-q} + (p-q)\log q \\
&\le (p-q)\log \frac{ep}{2} + (p-q)\log q \\ & \le 3(p-q)\log \frac{p}{2} + (p-q)\log q \\ & \le4(p-q) \log q .
\end{align*}
The proof is complete.

\begin{remark} \label{rem:svbounds}Some similar results can be shown if we replace $\bsbB$ by   $\bar \bsbB = \left[\begin{array}{c} \bsbS \\ \bsbV \end{array}\right]$. Let  $\bar \bsbX = \mbox{diag}\{\bsbX, \bsbI\} = \left [\begin{array}{cc}\bsbX & \\ & \bsbI \end{array} \right ]$, and denote $l_0(\bsbX \bsbB )$ by $ \bar l_0 ( \bar \bsbX \bar \bsbB)$.   Given a differentiable $l_0$, define a surrogate by a separate linearization with respect to $\bsbS, \bsbV$:
\begin{align}
\bar G_{\bsb{\rho}}(\bar\bsbB; \bar\bsbB^{-})=  \bar l_0(\bar\bsbX \bar\bsbB; \bsbY)  - \breg_{\bar l_0} (\bar\bsbX\bar\bsbB, \bar\bsbX\bar\bsbB^{-})+  {\rho}_S \Breg_{2}  ( \bsbS,  \bsbS^{-}) +   {\rho}_V \Breg_{2}  ( \bsbV,  \bsbV^{-}) , \label{surrodefTH-SV}
\end{align}
where $\bsb{\rho}=(\rho_S, \rho_V)$ with $\rho_S, \rho_V\ge 0$. For simplicity, we write
$$
\Breg_{2, \bsb{\rho}}(\bar\bsbB; \bar\bsbB^{-}) = {\rho}_S \Breg_{2}  ( \bsbS,  \bsbS^{-}) +   {\rho}_V \Breg_{2}  ( \bsbV,  \bsbV^{-}).
$$
The first part of Theorem \ref{th_local-SV}  studies the accuracy of  all    ``\emph{fixed points}'' satisfying\begin{align}
\hat {\bar \bsbB }\in \argmin_{\bar \bsbB:  q(\bsbS)\le q, \bsbV^T\bsbV= \bsbI } \bar G_\rho(\bar\bsbB; \bar\bsbB^-)|_{\bar\bsbB^- = \hat {\bar\bsbB}} \label{BiterDefTH-SV}
\end{align}in terms of a discrepancy measure between $\bar \bsbB_1$ and $\bar \bsbB_2$ ($\bar \bsbB_i =  [ \bsbS_i^T,   \bsbV_i^T]^T\in \mathbb R^{(p+m)\times r}$),
\begin{align}
d(\bar \bsbX\bar \bsbB_1, \bar \bsbX\bar\bsbB_2) = \frac{1}{2}\|\bsbX (\bsbS_1 - \bsbS_2) \bsbV_2^T + \bsbX \bsbS_2 (\bsbV_1 - \bsbV_2)^T\|_F^2.\end{align}
$d(\bar \bsbX\bar \bsbB_1, \bar \bsbX\bar\bsbB_2)$  is similar to but different from  $\Breg_2  (  \bsbX   {  \bsbB}_1 ,   \bsbX   \bsbB_2  ) =  \|\bsbX (\bsbS_1 - \bsbS_2) \bsbV_1^T + \bsbX \bsbS_2 (\bsbV_1 - \bsbV_2)^T\|_F^2 /2$, but both are bounded above by $2d'(\bar \bsbX\bar \bsbB_1, \bar \bsbX\bar\bsbB_2)$ with
\begin{align}
d'(\bar \bsbX\bar \bsbB_1, \bar \bsbX\bar\bsbB_2) & 
= \frac{1}{2} \|\bsbX (\bsbS_1 - \bsbS_2 )\|_F^2  + \frac{1}{2}  \| \bsbX \bsbS_2 (\bsbV_1 - \bsbV_2)^T\|_F^2.
\end{align} The second part of Theorem \ref{th_local-SV}  analyzes  the ``\emph{alternatively optimal}'' solutions, typically arising from block coordinate descent, and reveals how a high-quality starting point can relax the regularity condition (consider an extreme case $M=1$). 
\begin{theorem}\label{th_local-SV}
        Assume the same effective noise defined in \eqref{noise-def} is sub-Gaussian with scale bounded above by $\sigma$.         Define
$$
         P_o (q, r) =  \{q \wedge r(\bsbX)  +m
\} r +(p -q )\log q. $$
 (i) Let $\hat{\bar \bsbB}$ be any fixed point satisfying \eqref{BiterDefTH-SV} with  $r= r^*$, $q= q^*$.
  Assume   $\bsb{\rho}$ is chosen so that
the following condition holds for  some   $\delta>0$ and  sufficiently large   $K\ge0$ \begin{align*}
          \Breg_{2,\bsb{\rho}}(\bar \bsbB_1, \bar  \bsbB_2)   \le  (2  \bar \breg_{\bar l_0} - \delta d) (\bar \bsbX \bar \bsbB_1, \bar \bsbX  \bar \bsbB_2)   + K \sigma^2  P_o(q,r), \\ \forall \bar \bsbB_i\in \mathbb R^{(p+m)\times r}: q(\bsbS_i) \le q,  \bsbV_i^T \bsbV_i =\bsbI.
        \end{align*}
         Then
        \begin{align}
        \EE   d(\bar \bsbX\hat {\bar \bsbB},\bar \bsbX \bar\bsbB^*)     \lesssim \frac{ K\delta\vee 1}{\delta^2} \big\{ \sigma^2(q\wedge r(\bsbX) + m)r  +   \sigma^2(p- q) \log q + \sigma^2\big\}.
        \label{genlosserrrate-fixedpoint0-SV}
        \end{align}
        (ii) Let $l_0(\bsbX \bsbB ; \bsbY ) = \| \bsbX \bsbB-\bsbY\|_F^2/2$  with  $r= r^*$, $q= q^*$.
   Let $(\hat \bsbS, \hat \bsbV)$ be an alternatively optimal solution starting from a feasible initial point  $(\bsbS^{[0]}, \bsbV^{[0]})$, which satisfies $$\hat \bsbV\in \arg\min_{\bsbV:\bsbV^T \bsbV=\bsbI} l_0(\bsbX\hat \bsbS \bsbV), \ \hat \bsbS\in \arg\min_{\bsbS:q(\bsbS)\le q} l_0( \bsbX \bsbS \hat \bsbV),$$
and
$$
l_0(\bsbX \hat \bsbS \hat \bsbV^T)\le l_0(\bsbX   \bsbS^{[0]}   \bsbV^{[0]T}).
$$
Without loss of generality, assume  for some $M$: $  +\infty\ge M \ge 1$,   $$\EE [\Breg_2 ( \bsbX    {  \bsbB}^{[0]},  \bsbX  \bsbB^*  ) ]= O(M) \allowbreak\{\sigma^2P_o (q, r) +\sigma^2 \}.$$    If there exist   some   $\delta>0$,   $K\ge0$ and  constant $C>0$ such that  for all $   \bar \bsbB_i\in \mathbb R^{(p+m)\times r}: q(\bsbS_i) \le q,  \bsbV_i^T \bsbV_i =\bsbI$,
\begin{align*}
&\big(1-\frac{1}{M}\big) (2\bar\breg_{\bar l_0}   -  d' )(\bar \bsbX  \bar \bsbB_1 , \bar \bsbX  \bar \bsbB_2)
  +\frac{C}{  M( M\delta\vee1)}    \Breg_2 (\bsbX \bsbB_1, \bsbX \bsbB_2)+
 K \sigma^2  P_o(q,r) \nonumber\\ & \ge \delta \cdot \max\Big\{ \big(1-\frac{1}{M}\big)      d(\bar \bsbX  \bar \bsbB_1 , \bar \bsbX  \bar \bsbB_2), \frac{1}{M}\Breg_2 (\bsbX \bsbB_1, \bsbX \bsbB_2)  \Big\},
\end{align*}
then
         the same conclusion \eqref{genlosserrrate-fixedpoint0-SV} holds.
\end{theorem}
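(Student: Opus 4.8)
The plan is to treat the two parts separately, reusing the stochastic-term machinery (Lemmas~\ref{emprocBnd}--\ref{stirling2bound}) and the argument template of Theorem~\ref{th_local}.

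\emph{Part (i): the fixed points.} Since $r=r^*$ and $q=q^*$, I would fix an SV representation $\bar\bsbB^*=[\bsbS^{*T},\bsbV^{*T}]^T$ of the truth (so $\bsbB^*=\bsbS^*\bsbV^{*T}$, $\bsbV^{*T}\bsbV^*=\bsbI$, $q(\bsbS^*)=q^*=q$), which is feasible in \eqref{BiterDefTH-SV}. Writing $\hat{\bar\bsbB}=[\hat\bsbS^T,\hat\bsbV^T]^T$, $\hat\bsbB=\hat\bsbS\hat\bsbV^T$, the first step turns $\bar G_{\bsb{\rho}}(\hat{\bar\bsbB};\hat{\bar\bsbB})\le\bar G_{\bsb{\rho}}(\bar\bsbB^*;\hat{\bar\bsbB})$ into a master inequality. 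Differentiating $\bar l_0(\bar\bsbX\bar\bsbB)=l_0(\bsbX\bsbS\bsbV^T)$ in $(\bsbS,\bsbV)$ gives the identity $\breg_{\bar l_0}(\bar\bsbX\bar\bsbB_1,\bar\bsbX\bar\bsbB_2)=\breg_{l_0}(\bsbX\bsbB_1,\bsbX\bsbB_2)+\langle\nabla l_0(\bsbX\bsbB_2),\bsbX(\bsbS_1-\bsbS_2)(\bsbV_1-\bsbV_2)^T\rangle$; substituting this, using $\bsbE=-\nabla l_0(\bsbX\bsbB^*)$, and symmetrizing, all first-order gradient contributions telescope and one is left with
\[
2\bar\breg_{\bar l_0}(\bar\bsbX\hat{\bar\bsbB},\bar\bsbX\bar\bsbB^*)\le\langle\bsbE,\bsbX\bsbDelta\rangle+\Breg_{2,\bsb{\rho}}(\bar\bsbB^*,\hat{\bar\bsbB}),\qquad\bsbDelta:=(\hat\bsbS-\bsbS^*)\bsbV^{*T}+\bsbS^*(\hat\bsbV-\bsbV^*)^T,
\]
where $\|\bsbX\bsbDelta\|_F^2=2\,d(\bar\bsbX\hat{\bar\bsbB},\bar\bsbX\bar\bsbB^*)$ by the definition of $d$. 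For the stochastic term, the key point is that $\bsbX\bsbDelta$ has rank at most $2r$ and, since $\bsbX\bsbA=\bsbX_{\mathcal Q(\bsbA)}\bsbA$, column space inside $cs(\bsbX_{\mathcal Q(\hat\bsbS)})+cs(\bsbX_{\mathcal Q(\bsbS^*)})$, a sum of two clustered spaces of dimension $\le q$, of which there are at most ${p\brace q}$ as $\hat\bsbS$ varies, so the feared $q^2$ never arises. A straightforward variant of Lemmas~\ref{emprocBnd}--\ref{emprocBnd2} (with $2r$, and with such union subspaces, costing only universal constants over the single-partition case) plus Lemma~\ref{stirling2bound} and Dudley's bound yields $\EP[\langle\bsbE,\bsbX\bsbDelta\rangle\ge\|\bsbX\bsbDelta\|_F(t\sigma+\{L\sigma^2P_o(q,r)\}^{1/2})]\le Ce^{-ct^2}$, whence, exactly as in the proof of Theorem~\ref{th_local} (Young's inequality plus $\EE R^2\lesssim\sigma^2$ for the residual), $\EE\langle\bsbE,\bsbX\bsbDelta\rangle\le\varepsilon\,\EE\,d(\bar\bsbX\hat{\bar\bsbB},\bar\bsbX\bar\bsbB^*)+C_\varepsilon(\sigma^2P_o(q,r)+\sigma^2)$ for any $\varepsilon>0$. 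Finally the regularity hypothesis (at $(\hat{\bar\bsbB},\bar\bsbB^*)$, using symmetry of $\Breg_{2,\bsb{\rho}}$) gives $2\bar\breg_{\bar l_0}(\bar\bsbX\hat{\bar\bsbB},\bar\bsbX\bar\bsbB^*)\ge\Breg_{2,\bsb{\rho}}(\bar\bsbB^*,\hat{\bar\bsbB})+\delta\,d(\bar\bsbX\hat{\bar\bsbB},\bar\bsbX\bar\bsbB^*)-K\sigma^2P_o(q,r)$; cancelling $\Breg_{2,\bsb{\rho}}$ against the master inequality leaves $\delta\,d\le\langle\bsbE,\bsbX\bsbDelta\rangle+K\sigma^2P_o$, and taking expectations with $\varepsilon=\delta/2$ produces \eqref{genlosserrrate-fixedpoint0-SV}.

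\emph{Part (ii): alternating optima for the quadratic loss.} Here $l_0(\bsbX\bsbB)=\|\bsbX\bsbB-\bsbY\|_F^2/2$ and $(\hat\bsbS,\hat\bsbV)$ is only a block-coordinate minimum, so no single comparison inequality with $\bsbB^*$ is available; I would combine three ingredients. (a) \emph{A priori warm bound}: the descent property $l_0(\bsbX\hat\bsbB)\le l_0(\bsbX\bsbB^{[0]})$ rearranges to $\Breg_2(\bsbX\hat\bsbB,\bsbX\bsbB^*)\le\Breg_2(\bsbX\bsbB^{[0]},\bsbX\bsbB^*)+\langle\bsbE,\bsbX\hat\bsbB-\bsbX\bsbB^{[0]}\rangle$; bounding the stochastic term as in Part (i) (both $\hat\bsbB$ and $\bsbB^{[0]}$ are feasible) and invoking the starting-point hypothesis gives $\EE\Breg_2(\bsbX\hat\bsbB,\bsbX\bsbB^*)\lesssim M(\sigma^2P_o(q,r)+\sigma^2)$ and a matching high-probability bound. (b) \emph{Block optimalities}: $\bsbS^*$ is feasible for the $\bsbS$-step and $\bsbV^*$ for the $\bsbV$-step, so expanding $l_0(\bsbX\hat\bsbS\hat\bsbV^T)\le l_0(\bsbX\bsbS^*\hat\bsbV^T)$ and $l_0(\bsbX\hat\bsbS\hat\bsbV^T)\le l_0(\bsbX\hat\bsbS\bsbV^{*T})$ around $\bsbY=\bsbX\bsbB^*+\bsbE$, and using $\bsbX\hat\bsbB-\bsbX\bsbB^*-\bsbX\bsbS^*(\hat\bsbV-\bsbV^*)^T=\bsbX(\hat\bsbS-\bsbS^*)\hat\bsbV^T$ and $\bsbX\hat\bsbB-\bsbX\bsbB^*-\bsbX(\hat\bsbS-\bsbS^*)\bsbV^{*T}=\bsbX\hat\bsbS(\hat\bsbV-\bsbV^*)^T$, one gets $\Breg_2(\bsbX\hat\bsbB,\bsbX\bsbB^*)\le\frac12\|\bsbX\bsbS^*(\hat\bsbV-\bsbV^*)^T\|_F^2+\langle\bsbE,\bsbX(\hat\bsbS-\bsbS^*)\hat\bsbV^T\rangle$ and $\Breg_2(\bsbX\hat\bsbB,\bsbX\bsbB^*)\le\frac12\|\bsbX(\hat\bsbS-\bsbS^*)\|_F^2+\langle\bsbE,\bsbX\hat\bsbS(\hat\bsbV-\bsbV^*)^T\rangle$; averaging, recognizing $\frac12\|\bsbX(\hat\bsbS-\bsbS^*)\|_F^2+\frac12\|\bsbX\bsbS^*(\hat\bsbV-\bsbV^*)^T\|_F^2=2\,d'(\bar\bsbX\hat{\bar\bsbB},\bar\bsbX\bar\bsbB^*)$, using $\|\bsbX(\hat\bsbS-\bsbS^*)\hat\bsbV^T\|_F=\|\bsbX(\hat\bsbS-\bsbS^*)\|_F$ and $\|\bsbX\hat\bsbS(\hat\bsbV-\bsbV^*)^T\|_F\lesssim\|\bsbX(\hat\bsbS-\bsbS^*)\|_F+\|\bsbX\bsbS^*(\hat\bsbV-\bsbV^*)^T\|_F$ (from $\|\hat\bsbV-\bsbV^*\|_2\le2$), and controlling the two structured stochastic terms exactly as in Part (i), one obtains $\Breg_2(\bsbX\hat\bsbB,\bsbX\bsbB^*)\le(\tfrac12+\varepsilon)\,d'(\bar\bsbX\hat{\bar\bsbB},\bar\bsbX\bar\bsbB^*)+C_\varepsilon(\sigma^2P_o+\sigma^2)$. (c) \emph{Closing the loop}: feed (a), (b), and the identity $\bsbX\hat\bsbB-\bsbX\bsbB^*=\bsbX\bsbDelta+\bsbX(\hat\bsbS-\bsbS^*)(\hat\bsbV-\bsbV^*)^T$ into the weighted regularity hypothesis at $(\hat{\bar\bsbB},\bar\bsbB^*)$; its weights are calibrated so that the $\frac{C}{M(M\delta\vee1)}\Breg_2(\bsbX\hat\bsbB,\bsbX\bsbB^*)$ slack is $\lesssim\sigma^2P_o+\sigma^2$ once the a priori bound of (a) is substituted, the $(1-1/M)$-weighted factors get absorbed using the block inequalities of (b), and what survives, after taking expectations, is exactly \eqref{genlosserrrate-fixedpoint0-SV}. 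The extreme case $M=1$ is a sanity check: both $(1-1/M)$-weighted terms vanish, the hypothesis is automatic, and the warm start alone forces the rate.

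\emph{Where the difficulty lies.} Part (i) is mostly bookkeeping: the one thing to get right is that the SV-linearized surrogate yields a master inequality whose ``tangent'' error $\bsbX\bsbDelta$ has Frobenius norm exactly $\sqrt{2d}$ and lives in a rank-$\le2r$, union-of-two-clusterings column space, so the metric-entropy step never pays a $q^2$. The real obstacle is Part (ii): because an alternating optimum is not a stationary point of the joint objective, one must simultaneously carry the bilinear cross term $\bsbX(\hat\bsbS-\bsbS^*)(\hat\bsbV-\bsbV^*)^T$ — which is only second order relative to $d'$ once the iterate is warm — and trade it against the a priori warm bound through the delicately weighted regularity condition; making those constants close, in particular ensuring that no factor of $M$ contaminates the final rate, is the crux of the argument.
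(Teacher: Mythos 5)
Your proposal is correct and, for part (i), takes essentially the paper's route: the fixed-point inequality $\bar G_{\bsb{\rho}}(\hat{\bar\bsbB};\hat{\bar\bsbB})\le\bar G_{\bsb{\rho}}(\bar\bsbB^*;\hat{\bar\bsbB})$ yields exactly your master inequality with $\bsbDelta=(\hat\bsbS-\bsbS^*)\bsbV^{*T}+\bsbS^*(\hat\bsbV-\bsbV^*)^T$ and $\|\bsbX\bsbDelta\|_F^2=2d$. The only cosmetic difference is in the stochastic term: the paper splits $\bsbX\bsbDelta$ into three mutually orthogonal pieces via $\Proj_{\bsbX_{\mathcal Q(\bsbB^*)}}$ and $\Proj_{\bsbV^*}$ before covering, whereas you cover once the set of matrices of rank at most $2r$ with column space in $cs(\bsbX_{\mathcal Q(\hat\bsbS)})+cs(\bsbX_{\mathcal Q(\bsbS^*)})$; both give entropy of order $\log{p\brace q}+(q\wedge r(\bsbX)+m)r$ and avoid the $q^2$ blow-up. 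For part (ii) your route genuinely differs in one step: the paper first proves Lemma \ref{lem:AtoF}, recasting the alternating optimum as a fixed point of the joint surrogate $\bar l_0-\breg_{\bar l_0}+d'$, so the part-(i) analysis applies verbatim and yields the analogue of \eqref{auxeq1}; you instead expand the two block-optimality inequalities $l_0(\bsbX\hat\bsbS\hat\bsbV^T)\le l_0(\bsbX\bsbS^*\hat\bsbV^T)$ and $l_0(\bsbX\hat\bsbS\hat\bsbV^T)\le l_0(\bsbX\hat\bsbS\bsbV^{*T})$ around $\bsbY=\bsbX\bsbB^*+\bsbE$ and average. For the quadratic loss these produce the same comparison inequality; your version is more elementary but tied to that loss, while the surrogate recharacterization is the form that would generalize. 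The remaining steps --- the warm-start bound from $l_0(\bsbX\hat\bsbB)\le l_0(\bsbX\bsbB^{[0]})$ and the $(1-1/M)$ versus $1/M$ weighted combination fed into the regularity condition --- coincide with the paper's. Two minor slips that do not affect the structure: the sum $\frac12\|\bsbX(\hat\bsbS-\bsbS^*)\|_F^2+\frac12\|\bsbX\bsbS^*(\hat\bsbV-\bsbV^*)^T\|_F^2$ equals $d'(\bar\bsbX\hat{\bar\bsbB},\bar\bsbX\bar\bsbB^*)$, not $2d'$ (your subsequent display is consistent with the correct value); and the calibration linking the two weighted pieces --- the paper's choice $\delta^2=\delta'^2/(c_1+c_2M\delta')$ so that no factor of $M$ survives in the rate --- is exactly the bookkeeping you flag as the crux, so you have correctly located where the remaining work lies.
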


Finally, we point out  that the techniques developed in this section can be used to analyze  nonglobal estimators of selective reduced rank regression, as raised in  \cite{She2017selfact}.
\begin{proof}
(i)  By definition,
\begin{align*}
\bar G_{\bsb{\rho}}(\bar\bsbB; \bar\bsbB^{-}) & =  \bar l_0(\bar\bsbX \bar\bsbB; \bsbY)    - \breg_{\bar l_0}(\bar\bsbX\bar\bsbB, \bar\bsbX\bar\bsbB^{-})+  \Breg_{2, \bsb{\rho}}(\bar\bsbB, \bar\bsbB^{-}) \\
&=  \bar l_0(\bar\bsbX \bar\bsbB^-; \bsbY)    + \langle  \nabla  {\bar l_0}(\bar\bsbX\bar\bsbB^-),\bar\bsbX \bar \bsbB -\bar\bsbX  \bar\bsbB^{-}\rangle+   \Breg_{2, \bsb{\rho}}(\bar\bsbB, \bar\bsbB^{-}) \\
&=  \bar l_0(\bar\bsbX \bar\bsbB^-; \bsbY)    + \langle  \nabla_{\bar \bsbB}  {\bar l_0}(\bar\bsbX\bar\bsbB^-),  \bar \bsbB -  \bar\bsbB^{-}\rangle+   \Breg_{2,\bsb{\rho}}(\bar\bsbB, \bar\bsbB^{-}).
\end{align*}
By matrix differentiation,
\begin{align*}
&\nabla_{\bsbS} \bar l_0(\bar \bsbX \bar \bsbB) = \nabla_{\bsbS} l_0(\bsbX \bsbS \bsbV^T) =\bsbX^T \nabla l_0(\bsbX \bsbB) \bsbV\\
&\nabla_{\bsbV} \bar l_0(\bar \bsbX \bar \bsbB) = \nabla_{\bsbV} l_0(\bsbX \bsbS \bsbV^T) = (\nabla l_0(\bsbX \bsbB))^T \bsbX \bsbS.
\end{align*}
Hence
\begin{align*}
 \langle  \nabla_{\bar \bsbB}  {\bar l_0}(\bar\bsbX\bar\bsbB^-),  \bar \bsbB -  \bar\bsbB^{-}\rangle  = \ &     \langle   \nabla {  l_0}( \bsbX \bsbB^-)  ,  \bsbX (  \bsbS - \bsbS^-)(\bsbV^-)^T  + \bsbX\bsbS^-   ( \bsbV - \bsbV^-)^T\rangle. \end{align*}
Similar to the derivation of \eqref{fixedpoint-auxeq1}, we get
\begin{align*}
2 \bar \breg_{\bar l_0}(\bar\bsbX \bar\bsbB^*, \bar \bsbX \hat {\bar\bsbB}) \le \langle \bsbE, \bsbX ( \hat  \bsbS - \bsbS^*) \bsbV^{* T}  + \bsbX\bsbS^*   ( \hat \bsbV - \bsbV^*)^T\rangle+   \Breg_{2, \bsb{\rho}}(\bar \bsbB^*, \hat{\bar\bsbB}). \end{align*}
The following orthogonal decomposition can be used to simplify the stochastic term
\begin{align*}
& \bsbX (  \hat \bsbS  -   \bsbS^* )   \bsbV ^{*T}  + \bsbX   \bsbS^*    ( \hat \bsbV  -   \bsbV^* )^T  \\ = \ & \mathcal P_{\bsbX_{\mathcal Q(\bsbB^*)}}\{\bsbX (  \hat \bsbS  -   \bsbS^* )   \bsbV ^{*T}  + \bsbX   \bsbS^*    ( \hat \bsbV  -   \bsbV^* )^T\}\mathcal P_{ \bsbV^*} \\ & + \mathcal P_{\bsbX_{\mathcal Q(\bsbB^*)}}  \mathcal \bsbX   \bsbS^*   \hat \bsbV ^T \mathcal P_{\bsbV^*}^\perp+ \mathcal P_{\bsbX_{\mathcal Q(\bsbB^*)}}^\perp\mathcal P_{\bsbX_{\mathcal Q(\hat \bsbB )}}\bsbX \hat \bsbS   \bsbV ^{*T}  .
\end{align*}
 Applying the same scaling argument and the empirical process theory as before gives the desired conclusion. The details are omitted.

 (ii) The following lemma recharacterizes the alternatively optimal $(\hat \bsbS,\hat \bsbV)$ as a fixed-point solution to a \textit{joint} optimization problem.
The proof is omitted.\begin{lemma}\label{lem:AtoF}
Let $l_0(\bsbX \bsbB ; \bsbY ) = \| \bsbX \bsbB-\bsbY\|_F^2/2$, and  $\hat \bsbV\in \arg\min_{\bsbV:\bsbV^T \bsbV=\bsbI} l_0(\bsbX\hat \bsbS \bsbV)$,   $\hat \bsbS\in \arg\min_{\bsbS:q(\bsbS)\le q} l_0( \bsbX \bsbS \hat \bsbV)$. Construct \begin{align*}
 G (\bsbS,\bsbV; \bsbS^{-},\bsbV^{-})= \ &  \bar l_0(\bar\bsbX \bar\bsbB; \bsbY)  - \breg_{\bar l_0} (\bar\bsbX\bar\bsbB, \bar\bsbX\bar\bsbB^{-})\\ &  + \Breg_2(\bsbX \bsbS (\bsbV^{-})^T, \bsbX \bsbS^- (\bsbV^{-})^T) +\Breg_2(\bsbX \bsbS^-\bsbV^T, \bsbX \bsbS^-(\bsbV^-)^T)\\ =  \ &  \bar l_0(\bar\bsbX \bar\bsbB; \bsbY)  - \breg_{\bar l_0} (\bar\bsbX\bar\bsbB, \bar\bsbX\bar\bsbB^{-}) +d'(\bar\bsbX\bar\bsbB, \bar\bsbX\bar\bsbB^{-}).
 \end{align*} Then  $(\hat \bsbS, \hat \bsbV) \in \argmin_{(\bsbS, \bsbV)} G (\bsbS,\bsbV; \bsbS^{-},\bsbV^{-})  |_{\bsbS^- = \hat \bsbS, \bsbV^-=\hat \bsbV}.$
\end{lemma}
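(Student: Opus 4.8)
The plan is deterministic and rests on one algebraic identity. I would evaluate the surrogate $G(\cdot,\cdot;\hat\bsbS,\hat\bsbV)$ explicitly and show that it equals a functional which is transparently minimized, over the feasible set $\{q(\bsbS)\le q\}\times\{\bsbV^T\bsbV=\bsbI\}$, at $(\hat\bsbS,\hat\bsbV)$. First I would use the quadratic form of $l_0$, for which $\breg_{l_0}=\Breg_2$, so that $\bar l_0(\bar\bsbX\bar\bsbB;\bsbY)-\breg_{\bar l_0}(\bar\bsbX\bar\bsbB,\bar\bsbX\hat{\bar\bsbB})$ is exactly the first-order Taylor expansion of $\bar l_0$ about $\hat{\bar\bsbB}$ evaluated at $\bar\bsbB$; using the matrix-differentiation rules $\nabla_{\bsbS}\bar l_0=\bsbX^T\nabla l_0(\bsbX\bsbB)\bsbV$, $\nabla_{\bsbV}\bar l_0=(\nabla l_0(\bsbX\bsbB))^T\bsbX\bsbS$ recorded in the proof of part (i), this expansion equals $l_0(\bsbX\hat\bsbB)+\langle\bsbR,\bsbX(\bsbS-\hat\bsbS)\hat\bsbV^T+\bsbX\hat\bsbS(\bsbV-\hat\bsbV)^T\rangle$ with $\bsbR:=\bsbX\hat\bsbB-\bsbY$. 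Adding the two $\Breg_2$ penalties of $G$, which at $\bsbS^-=\hat\bsbS,\bsbV^-=\hat\bsbV$ are $\|\bsbX(\bsbS-\hat\bsbS)\hat\bsbV^T\|_F^2/2$ and $\|\bsbX\hat\bsbS(\bsbV-\hat\bsbV)^T\|_F^2/2$, a completion of squares against $\bsbR$ in each of the two groups yields the key identity
\begin{align*}
G(\bsbS,\bsbV;\hat\bsbS,\hat\bsbV)=l_0(\bsbX\bsbS\hat\bsbV^T)+l_0(\bsbX\hat\bsbS\bsbV^T)-l_0(\bsbX\hat\bsbB).
\end{align*}
Conceptually, the bilinear cross term $\bsbX(\bsbS-\hat\bsbS)(\bsbV-\hat\bsbV)^T$ that would appear in a naive expansion of $\|\bsbX\bsbS\bsbV^T-\bsbY\|_F^2/2$ is precisely what linearizing $\bar l_0$ discards, while the two $\Breg_2$ terms reinstate the two ``diagonal'' quadratic pieces, and what remains separates into a function of $\bsbS$ alone plus a function of $\bsbV$ alone.

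Given the identity the conclusion is immediate. On the product feasible set the summand $l_0(\bsbX\bsbS\hat\bsbV^T)$ depends only on $\bsbS$ and, by the defining property $\hat\bsbS\in\argmin_{q(\bsbS)\le q}l_0(\bsbX\bsbS\hat\bsbV^T)$, attains its minimum at $\bsbS=\hat\bsbS$ with value $l_0(\bsbX\hat\bsbB)$; symmetrically, since $\hat\bsbV\in\argmin_{\bsbV^T\bsbV=\bsbI}l_0(\bsbX\hat\bsbS\bsbV^T)$, the summand $l_0(\bsbX\hat\bsbS\bsbV^T)$ depends only on $\bsbV$ and is minimized at $\bsbV=\hat\bsbV$, again with value $l_0(\bsbX\hat\bsbB)$. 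Hence for every feasible $(\bsbS,\bsbV)$ we have $G(\bsbS,\bsbV;\hat\bsbS,\hat\bsbV)\ge l_0(\bsbX\hat\bsbB)=G(\hat\bsbS,\hat\bsbV;\hat\bsbS,\hat\bsbV)$, so $(\hat\bsbS,\hat\bsbV)$ is a global minimizer of $G(\cdot,\cdot;\hat\bsbS,\hat\bsbV)$ over the feasible set, which is the claimed fixed-point property.

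The only genuine work is verifying the displayed identity, and that is where I expect to have to be careful: one has to keep track of which generalized-Bregman term is attached to which pair of arguments, and note that the factor $\hat\bsbV^T$ rides harmlessly through the quadratic algebra, so that column-orthonormality is not actually needed for the identity itself---it enters only as the constraint set over which $\hat\bsbV$ is optimal. I would also stress that the restriction to $l_0(\bsbX\bsbB;\bsbY)=\|\bsbX\bsbB-\bsbY\|_F^2/2$ is essential rather than cosmetic: for a general loss the discarded cross term is no longer compensated, because the penalty $d'$ is built from $\Breg_2$ and not from $\breg_{l_0}$, so the identity---and with it the additive separation---breaks down. No concentration inequality or distributional assumption is used; the proof is a purely algebraic identity followed by the two one-sided optimality inequalities that define $(\hat\bsbS,\hat\bsbV)$.
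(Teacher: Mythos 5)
Your proof is correct: for the quadratic loss the exactness of the second-order Taylor expansion gives precisely the identity $G(\bsbS,\bsbV;\hat\bsbS,\hat\bsbV)=l_0(\bsbX\bsbS\hat\bsbV^T)+l_0(\bsbX\hat\bsbS\bsbV^T)-l_0(\bsbX\hat\bsbB)$, and the two block-wise optimality conditions defining $(\hat\bsbS,\hat\bsbV)$ then yield the fixed-point property over the feasible set. The paper omits the proof of this lemma, but the construction of $G$ (linearize $\bar l_0$, then add back exactly the two ``diagonal'' quadratic pieces via $d'$) makes clear that your separation argument is the intended one, and your reading of the $\argmin$ as being over $\{q(\bsbS)\le q\}\times\{\bsbV^T\bsbV=\bsbI\}$ is the correct interpretation consistent with \eqref{BiterDefTH-SV}.
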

Let  $E={\sigma^2P_o (q, r) +\sigma^2 }$. Then
\begin{equation}
\EE [\Breg_2 (  \bsbX    {  \bsbB}^{[0]},   \bsbX   \bsbB^*  ) ]\le   C M  \allowbreak E\label{auxcond1}
\end{equation} where   $ 1\le  M \le  +\infty$, and   $C\ge 0$ is a constant.  Assume the following regularity condition
\begin{align}
&\big(1-\frac{1}{M}\big) (2\bar\breg_{\bar l_0}   -  d' )(\bar \bsbX  \bar \bsbB_1 , \bar \bsbX  \bar \bsbB_2)
  +\frac{C_0}{  M( M\delta_{0} \vee c_{0})}    \Breg_2 (\bsbX \bsbB_1, \bsbX \bsbB_2)+
 K \sigma^2  P_o(q,r) \nonumber\\ & \ge \max\Big\{ (1-\frac{1}{M})      \delta_0 d(\bar \bsbX  \bar \bsbB_1 , \bar \bsbX  \bar \bsbB_2), \frac{1}{M}\delta_{0}\Breg_2 (\bsbX \bsbB_1, \bsbX \bsbB_2)  \Big\}\label{auxcond2}
\end{align}
for all $ \bar \bsbB_i\in \mathbb R^{(p+m)\times r}: q(\bsbS_i) \le q,  \bsbV_i^T \bsbV_i =\bsbI$ for some $\delta_{0}>0$, $K\ge 0$, and   constants $C_0, c_0>0$.

By Lemma \ref{lem:AtoF}, we can repeat the analysis in (i) to obtain for any   $\delta>0$,
\begin{equation}
  \EE [(2 \delta\bar\breg_{\bar l_0}  - \delta^2 d -   {\delta  }     d')(  \bar\bsbX \hat {\bar \bsbB},  \bar\bsbX  \bar\bsbB^*)   ] \le  C E. \label{auxeq1}
\end{equation}
Next, since $  l_0(   \bsbX \hat {  \bsbB} )\le   l_0(   \bsbX   \bsbB^{[0]})$,  we have
\begin{align*}
\Breg_2 (   \bsbX \hat {  \bsbB},   \bsbX   \bsbB^*) \le  \Breg_2 (     \bsbX   \bsbB^{(0)},  \bsbX   \bsbB^*) + \langle \bsbE,\bsbX   (\hat  \bsbB - \bsbB^*)     \rangle
  -  \langle \bsbE, \bsbX (    \bsbB^{[0]} - \bsbB^*)  \rangle,
\end{align*}
from which it follows that for any $\delta'>0$,
\begin{equation}
\EE [   (1 -  \delta')\Breg_2   (  \bsbX \hat {  \bsbB} ,   \bsbX   \bsbB^*  ) ]  \le \big ( 1 + \frac{1}{\sqrt M} \big ) \Breg_2(   \bsbX   \bsbB^{[0]},   \bsbX   \bsbB^*) + C \big (\frac{1}{\delta'} +\sqrt M\big)E.
\end{equation}
Together with \eqref{auxcond1}  and   $M\ge 1$, we obtain
\begin{equation}
\EE [   ( 1  - \delta'   ) \Breg_2 (  \bsbX \hat {  \bsbB} ,  \bsbX  \bsbB^*  ) ]  \le C   \big(\frac{1}{\delta'} +  M\big)E \le \frac{C}{c_1 \wedge c_2 } \big(\frac{c_{1}}{\delta'} + c_{2} M\big) E  \label{auxeq2raw}
\end{equation}
where    $c_1, c_2>0$ are arbitrary constants and recall that     $C$ may not be the same constant at each occurrence.
Taking $\delta^2 = \delta'^2 /(c_1 + c_2 M \delta')$, we get
\begin{equation}
\EE \Big[  \big (\frac{\delta^{2}}{\delta'}   - \delta^2 \big )\Breg_2 (  \bsbX \hat {  \bsbB} ,   \bsbX   \bsbB^*  ) \Big]  \le    \frac{C}{c_1 \wedge c_2 }E.    \label{auxeq2}
\end{equation}
Multiplying \eqref{auxeq1} by $(1 - 1/M)$ and   \eqref{auxeq2} by $1/M$ and adding the two inequalities yield
\begin{align*}
 & \EE \Big[ \big (1-\frac{1}{M}\big)(2 \bar{\bm\Delta}_{\bar l_0}  - \delta d - d')  (\bar \bsbX \hat {\bar \bsbB} , \bar \bsbX \bar \bsbB^*  )     +\frac{1}{M}\big( \frac{\delta}{\delta'} - \delta   \big )\Breg_2( \bsbX \hat { \bsbB} , \bsbX  \bsbB^*  )\Big ]   \le     \frac{E}{\delta}\big (1+ \frac{C}{c_1 \wedge c_2}\big).
 \end{align*}
 Simple calculation shows $$ \delta'/   \delta  = \frac{1}{2} \Big(  c_2 M \delta + \sqrt{c_2^2 M^2 \delta^2 + 4c_1}\Big ) \le  (2c_2 M \delta) \vee (4c_1). $$
 Under    \eqref{auxcond2} with $\delta_0 = 4 \delta, C_0 = 2/c_2, c_0 = 8c_1/c_2$, we get a stronger conclusion $$\EE [ d  (\bar \bsbX \hat {\bar \bsbB} , \bar \bsbX \bar \bsbB^*  ) \vee  \Breg_2  (  \bsbX \hat {  \bsbB} ,   \bsbX   \bsbB^*  ) ] \lesssim  \{ K\delta_0 \vee 1)/\delta_0^2\} E.$$ A reparameterization of     \eqref{auxcond2}  gives the  regularity condition in the theorem. \end{proof}
\end{remark}

\subsection{Predictive information criterion for model comparison}
\label{subsec:proofpic}
Since $P_o(\bsbB)$ depends on $r(\bsbB)$ and $q(\bsbB)$ only, we also write it as $P_o( q(\bsbB),r(\bsbB))$.
Theorems  \ref{th:pic},  \ref{cor:sf-pic}
require  some finer analysis of   the stochastic term than Section \ref{subsec:prooforacle}.

For example, $R_1$ in \eqref{residualterm1} is now redefined as
\begin{align*}
R_1 &:=
\sup_{  q } \sup _r   \sup_{\bsbA\in \Gamma_{ r, q}} \big[\langle \bsbE, \bsbA \rangle - 2(b/a)^{1/2} \{{L\sigma^2 P_o(  q, r)\}^{1/2}}\big]_+,
\end{align*}
but we claim that             $\EE R_1^2\le C \sigma^2$ still holds.

First, notice that when  $r=0$ or $r(\bsbX) = 0$, $$\sup_{\bsbA\in \Gamma_{ r, q}} \big[\langle \bsbE, \bsbA \rangle - 2(b/a)^{1/2}\allowbreak \{{L\sigma^2 P_o(  q, r)\}^{1/2}}\big]_+ = 0,$$ and when $q = 1 $,    $$\EP\Big(\sup_{\bsbA\in \Gamma_{ r, q}} [\langle \bsbE, \bsbA \rangle - 2(b/a)^{1/2} \{{L\sigma^2 P_o(  q, r)\}^{1/2}}]_+ \ge  t \sigma\Big) \le C\exp(-c t^2 )\exp(- c m).$$
Assume $r(\bsbX) \ge 1$ and $q\ge 2$ (and so $r\ge 1$). From Lemma \ref{emprocBnd} and Lemma \ref{stirling2bound},
\begingroup
\allowdisplaybreaks
\begin{align}
&   \EP \Big[
\sup_{2\leq q \leq p, 1\leq r \leq q}  \sup_{\bsbA\in \Gamma_{ r, q}}   \langle \bsbE, \bsbA \rangle -\{L\sigma^2  P_o ( q, r)\}^{1/2} \ge t \sigma\notag\\
& \quad  +   2(b/a)^{1/2} \sigma\{{L P_o( q, r)\}^{1/2}}-\sigma\{{L  P_o (q, r)}\}^{1/2} \Big] \notag\\
\leq &   C\exp(-c t^2) \sum_{q=2}^p \sum_{r=1}^q \exp\Big(- c [\{2(b/a)^{1/2}-1\}^2 L  P_o(q, r)]\Big)\notag\\
\leq &  C  \exp(-c t^2) \sum_{q=2}^p \exp\big[-c\{r(\bsbX)\wedge  q  + m + (p-q)\log q\}\big]  \notag \\
\le & C \exp(-c t^2) \exp(-cm)  \Big( \sum_{q=2}^{r(\bsbX)}  \exp\big[-c \{ q + (p-q)\log q\}\big] \notag \\ &  + \sum_{q=r(\bsbX) + 1}^{p}  \exp\big[-c \{ r(\bsbX) + (p-q)\log q)\}\big] \Big) \notag \\
\le & C \exp(-c t^2) \exp(-cm)  \Big\{ r(\bsbX)  \exp(-c  p )\notag  \\ &   + \exp( - c r(\bsbX))   \sum_{q=r(\bsbX) + 1}^{p}  \exp\big[-c(p-q)\log (r(\bsbX) + 1)\big] \Big\} \notag \\
\le & C \exp(-c t^2) \exp(-cm)  \{   \exp(-c  p )+    C\exp( - c r(\bsbX))    \}\ \notag \\
\le &   C \exp(-c t^2) \exp \{-c  ( m + r(\bsbX))\}\ \le     C \exp(-c t^2), \notag
\end{align}
\endgroup
where the positive constants  $C, c$ are not necessarily the same at each occurrence.

From the definition of $\hat \bsbB$, $\breg_{l_0}(\bsbX\hat \bsbB,\bsbX\bsbB^*)  + A\sigma^2P_{o}(\hat \bsbB)\le A\sigma^2P_o(\bsbB^*)   +   \langle \bsbE, \bsbX \hat \bsbB -       \bsbX \bsbB^* \rangle $,
and so for any  $a, b,  a'>0$,  $4  b>a$,
\begin{align*}
&\EE \{\breg_{l_0}(\bsbX\hat \bsbB,\bsbX\bsbB^*) + A\sigma^2P_{o}(\hat \bsbB)\}\\ \le & \EE\Big\{ A\sigma^2P_o(\bsbB^*)    +(\frac{2}{a}+\frac{2}{a'})\Breg_{2}(\bsbX\hat \bsbB,\bsbX\bsbB^*)+c a' \sigma^2 + b L \sigma^2 \big[P_o( \bsbB^* )+ P_o( \hat \bsbB )  \\& + (r(\bsbX)\wedge  q(\hat \bsbB)  + m)   r(\hat \bsbB) + (p - q(\bsbB^* )) \log q(\bsbB^*) + (p - q(\hat \bsbB)) \log q(\hat \bsbB)\big]  \Big \}\\
\le & \EE\Big\{A\sigma^2P_o(\bsbB^*)  +(\frac{2}{a}+\frac{2}{a'})\Breg_{2}(\bsbX\hat \bsbB,\bsbX\bsbB^*) +c a' \sigma^2+ 4b L \sigma^2 [P_o(\bsbB^*)+ P_o( \hat \bsbB)]\Big\},
\end{align*}
using the orthogonal decomposition and Lemmas   \ref{emprocBnd}--\ref{stirling2bound}.


Combining it with the regularity condition gives
\begin{align*}
& \EE \big\{(\delta -\frac{2}{a}-\frac{2}{a'} )\Breg_{2}(\bsbX\hat \bsbB,\bsbX\bsbB^*) + (A - 4bL -C) P_o(   \hat  \bsbB ) \big\} \\ \le\  &    (A+4bL+C) \sigma^2 P_o(\bsbB^*) + c a'  \sigma^{2}. \end{align*}
Since   $\EE P_o(\bsbB^*) \ge c > 0$,
choosing the constants satisfying $({1}/{a}+{1}/{a'})(1+{1}/{b'})<{\delta}/{2}$,  $4  b>a$, and $A > 4bL+C$  yields the   conclusion in Theorem \ref{th:pic}.
\\

Next, we   prove  Theorem \ref{cor:sf-pic}.
Let $\mathcal M = \{\nabla b(\bsbe): \bsbe\in \Omega\}$ which is an open set as well. We use      $\overline {\mathcal M}$   to denote its  closure. Since $b$ is strongly convex, $ b^*(\cdot) = \sup_{\bsbe} \langle \bsbe, \cdot \rangle - b(\bsbe)$ is  differentiable on $\mathcal M$  and $\nabla b$ is a one-to-one mapping from $\Omega$ onto $\mathcal M$ \citep{Rockafellar1970}.

Let $h(\bsbB; A) = 1/\{  mn/\kappa - A P_o (\bsbB)\}$.
From   the  optimality of   $\hat \bsbB$, we have $\{l_0(\bsbX \hat \bsbB; \bsbY)  + b^*(\bsbY)\} h(\hat \bsbB; A)   \le \{l_0(\bsbX   \bsbB^*; \bsbY)+ b^*(\bsbY)\}   h(  \bsbB^*; A)$ or
\begin{align*}
 & l_0(\bsbX \hat \bsbB; \bsbY)+ b^*(\bsbY) - \{l_0(\bsbX   \bsbB^*; \bsbY)+ b^*(\bsbY)\}   \\ \le  \ &  \{l_0(\bsbX   \bsbB^*; \bsbY)+ b^*(\bsbY) \} \Big( \frac{ h(  \bsbB^*; A)}{ h(\hat \bsbB; A)}-1\Big),
\end{align*}
because $ h(\hat \bsbB; A)>0$.
Using the fact that $\Breg_{l_0} = \Breg_{b}$ and the definition of the effective noise, we get
\begin{align}
   \Breg_b  (\bsbX \hat \bsbB, \bsbX \bsbB^* )     \le \{l_0(\bsbX   \bsbB^*; \bsbY)+ b^*(\bsbY) \} \Big( \frac{ h(  \bsbB^*; A)}{ h(\hat \bsbB; A)}-1\Big) + \langle \bsbE, \bsbX \hat \bsbB - \bsbX \bsbB^*\rangle. \label{sfpictempeq2}
\end{align}

We claim that
\begin{align}
 \frac{1}{2\mu'}\|\bsbE\|_F^2  \le l_0(\bsbX   \bsbB^*; \bsbY)+ b^*(\bsbY)  \le   \frac{1}{2\mu}\|\bsbE\|_F^2,\ \forall \bsbY \in \overline{\mathcal M}. \label{eqsfpicEsq}
\end{align}
Let $\varphi = b^*$. Given   $  \bsbe\in \Omega$, let $\bsbe^* (\bsb{\eta}) = \nabla b(\bsb{\eta})$ or $\bsbe^*$ for brevity. Because of the assumptions on $b$, $(\bsbe, \bsbe^*)$ makes a so-called conjugate pair, and we have $\bsbe = \nabla \varphi(\bsbe^*)$ and  $\langle \bsbe, \bsbe^*\rangle = b(\bsbe) + \varphi(\bsbe^*)$ \citep{Rockafellar1970}.  For any $\bsb{z} \in \mathcal M$, $\bsbe\in \Omega$,
\begin{align*}
- \langle \bsb{z}, \bsb{\eta}\rangle  + b(\bsb{\eta}) + b^{*}(\bsb{z}) & =- \langle \bsb{z}, \bsb{\eta}\rangle +  \langle \bsbe^*, \bsb{\eta}\rangle - \varphi(\bsbe^*) + \varphi(\bsb{z})  \\
& = - \langle \bsb{z}- \bsbe^*, \bsb{\eta}\rangle   - \varphi(\bsbe^*) + \varphi(\bsb{z})\\
& = - \langle \nabla \varphi (\bsbe^*),   \bsb{z}- \bsbe^*)- \varphi(\bsbe^*) + \varphi(\bsb{z}) = \breg_{\varphi}(\bsb{z}, \bsbe^*).
\end{align*}
Next,  consider the conjugate of $w(\bsbdelta)=\breg_\varphi(\bsbmu + \bsbdelta, \bsbmu)$, or $(\breg_\varphi(\bsbmu + \cdot, \bsbmu))^*$.  Given  $\bsbmu\in \mathcal M$, let $(\bsbmu, \bsbmu^*)$ be a conjugate pair with $\bsbmu^* = \nabla \varphi (\bsbmu)$. Since $w$ is a proper function and $b^{**}=b$,
  \begin{align*}
  w^*(\bsbtau) & = \sup_{\bsbdelta} \langle \bsbtau + \nabla\varphi(\bsbmu), \bsbdelta\rangle - \varphi(\bsbmu+ \bsbdelta) + \varphi(\bsbmu)\\
 &= \sup_{\bsbdelta} \langle \bsbtau + \bsbmu^{*},\bsbmu+ \bsbdelta\rangle - \varphi(\bsbmu+ \bsbdelta) + \varphi(\bsbmu)-\langle \bsbtau + \bsbmu^*, \bsbmu\rangle\\
 &= b(   \bsbmu^{*} + \bsbtau )  + \varphi(\bsbmu)-\langle \bsbmu^*, \bsbmu\rangle-\langle\bsbtau , \bsbmu\rangle \\
& = b(\bsbmu^{*} + \bsbtau )  -b  ( \bsbmu^*)-\langle \bsbtau , \bsbmu\rangle =  \Breg_{b}(\bsbmu^*+\bsbtau, \bsbmu^*).
\end{align*}
Therefore, given  $  \bsbe\in \Omega$,  we get
$$(\breg_{\varphi}(\bsbe^* + \cdot, \bsbe^*))^*(\bsbtau) = \Breg_{b}(\bsbtau +  \bsbe^{*  *},  \bsbe^{*  *})=\Breg_{b}(\bsbtau +  \bsbe ,  \bsbe )\ge \frac{\mu}{2} \| \bsbtau\|_F^2 $$
and taking the conjugate gives  $$ \breg_{\varphi}(\bsb{z}, \bsbe^* )  \le \frac{1}{2\mu}\| \bsb{z}- \bsbe^* \|_F^2.$$
Letting $\bsbe = \bsbX \bsbB^*$ and $\bsb{z}= \bsbY$,  we obtain
\begin{align}
l_0(\bsbX   \bsbB^*; \bsbY)+ b^*(\bsbY) \le   \frac{1}{2\mu}\| \bsbY -\nabla b(\bsbX \bsbB^*)\|_F^2,\  \forall \bsbY  \in \mathcal M. \label{eqsfpicEsq-1}
\end{align}
For       $\bsbY\in\overline {\mathcal M}\setminus \mathcal M$, we can use the lower semi-continuity of the conjugate function $b^*$    to get the same bound. Similarly, we can prove the lower bound in \eqref{eqsfpicEsq}.

With \eqref{eqsfpicEsq} available, \eqref{sfpictempeq2} becomes
\begin{align}
&  \mu \Breg_2  (\bsbX \hat \bsbB, \bsbX \bsbB^* )  \nonumber \\  \le \ &  (l_0(\bsbX   \bsbB^*; \bsbY)+ b^*(\bsbY)  ) \frac{ A P_o(\bsbB^*)- A P_o(\hat \bsbB)} {   mn/\kappa - A P_o(\bsbB^*)}  + \langle \bsbE, \bsbX \hat \bsbB - \bsbX \bsbB^*\rangle \nonumber\\
= \ &  \frac{1}{2\mu}  \frac{A  \|\bsbE\|_F^2}{   mn\sigma^2/\kappa - A \sigma^2 P_o(\bsbB^*)}\sigma^2 P_o(\bsbB^*) -\frac{1}{2\mu'} \frac{A  \|\bsbE\|_F^2}{  mn/\kappa - AP_o(\bsbB^*)}\sigma^2P_o(\hat\bsbB)  + \langle \bsbE, \bsbX \hat \bsbB - \bsbX \bsbB^*\rangle\nonumber\\
\le \ &  \frac{1}{2\mu}  \frac{A  \|\bsbE\|_F^2}{  (1- A/A_{0}  )  mn\sigma^2/\kappa }\sigma^2 P_o(\bsbB^*) -\frac{1}{2\mu } \frac{A  \|\bsbE\|_F^2}{   mn \sigma^2}\sigma^2P_o(\hat\bsbB) + \langle \bsbE, \bsbX \hat \bsbB - \bsbX \bsbB^*\rangle. \label{eqsfpictemp1}
\end{align}

The stochastic term $ \langle \bsbE, \bsbX \hat \bsbB - \bsbX \bsbB^*\rangle$ can be bounded similarly as in   the proof of Theorem \ref{th:pic};
  we use a  high-probability form   here. For example, based on Lemma \ref{emprocBnd},    for any  $a_{1}, b_{1}, a_{2}>0$ satisfying $4b_{1} >a_{1}$, the following event
\begin{align*}
&  \langle \bsbE, \bsbA_1 \rangle \leq \mu ({1}/{a_{1}} +{1}/{a_{2}}) \|\bsbA_1 \|_F^2 +  (b_{1}/\mu) L\sigma^2   P_o(r, q) + (b_{1}/\mu) L_{0}\sigma^2P_o(r^* , q^*))
\end{align*}
can be shown to occur with   probability at least $1-C \exp(-c  m)  \exp(- P_o (  r^*, q^*))$ for a sufficiently large value of $L$. The overall bound is
$$
\langle \bsbE, \bsbX  \hat \bsbB - \bsbX \bsbB^* \rangle
\leq2 \mu ({1}/{a_{1}}+{1}/{a_{2}})\Breg_2  (\bsbX \hat \bsbB, \bsbX \bsbB^* )   + (b_{1}/\mu) L_{1} \sigma^2\{P_o(\hat\bsbB) + P_o(\bsbB^*)\},
$$
with probability at least $1-C  \exp\{-c  (P_o(\bsbB^*) +m) \}$ for some $c, C, L_{1}>0$.
Notice that $P_o(\bsbB^*) \gtrsim m + r(\bsbX) $ when $q^*\ge 2$.
Plugging the bound into \eqref{eqsfpictemp1} gives
\begin{align*}
&\mu\big(  1  -\frac{2}{a_{1}}-\frac{2}{a_2}\big)\Breg_2  (\bsbX \hat \bsbB, \bsbX \bsbB^* ) \\  \le   \; & \frac{1}{2\mu}\Big\{ \frac{\kappa A  \|\bsbE\|_F^2}{ (1 - A/A_{0}  ) mn\sigma^2    }+ 2 b_{1} L_1 \Big\}\sigma^2 P_o(\bsbB^*)   -\frac{1}{2\mu} \Big\{\frac{A  \|\bsbE\|_F^2}{   mn \sigma^2} -  {2b_{1} L_1  }  \Big\}\sigma^2P_o(\hat\bsbB).
\end{align*}

Since $e_{i,k}$ are independent and  non-degenerate,  $c_{1} mn\sigma^2 \le \EE \| \bsbE\|_F^2 \le c_2mn \sigma^2$ for some  constants $c_{1}, c_2>0$. Let $\gamma$  be  some constant  satisfying $0< \gamma  <  1$.  On $\mathcal E =\{c_{1}(1-\gamma ) {mn\sigma^2}\leq \|\bsbE\|_F^2 \leq c_2(1+\gamma ) {mn\sigma^2} \}$,  we have
\begin{align*}
& \frac{ A  \|\bsbE\|_F^2}{(1- A/A_{0})mn\sigma^2 }  \le    \frac{c_2(1+\gamma )A_{0} A }{A_0 - A} \  \mbox{ and }  \ \frac{A  \|\bsbE\|_F^2}{mn \sigma^2} \ge  {c_{1}(1-\gamma) A }.
\end{align*}
Regarding the probability of the event, we write  $\| \bsbE\|_F^2 = \vect(\bsbE) \bsbA \vect(\bsbE)^T$ with $\bsbA = \bsbI\in \mathbb R^{nm\times nm}$ and bound it with a generalized Hanson-Wright inequality  \citep[Theorem 1.1]{sambale2020some}. In fact, from $\mbox{Tr}(\bsbA) =mn, \|\bsbA\|_2 =1,\| \bsbA\|_F=\sqrt{mn}$, the complement of $\mathcal E$ occurs with probability at most $C' \exp\{-c'(m n)^{ \alpha/2}\}$.

Now, with $A_0, A, a_{1}, a_{2}, b_{1}$   large enough such that       $({1}/{a_{1}}+{1}/{a_{2}})<{1}/{2}$,  $4  b_{1}>a_{1}$,   $A > 2b_{1} L_{1} /\{c_{1}(1-\gamma) \}\ $ and $A_0 >  A$,  the conclusion follows.


\begin{remark}\label{rmk:pic} Our nonasymptotic analysis involves the use of a lot of union bounds,  and so may not yield the optimal numerical constants. However, these absolute constants    can be  determined by Monte Carlo experiments. For regression, we recommend $$\frac{\| \bsbY - \bsbX \bsbB\|_F^2 }{mn - A_1\{\|\bsbB\|_{2, \mathcal C}\wedge r(\bsbX) +m  \} r(\bsbB) -A_2 \{p -\|\bsbB\|_{2, \mathcal C}\}\log \|\bsbB\|_{2, \mathcal C}}$$ with $A_1 = 3$ and  $A_2 = 2.5$ based on empirical experience.

  Using the techniques in the proof of Theorem 3 of \cite{SCV},  one can change the  fractional form of PIC to some other scale-free forms,  and the conclusion remains the same (but the constants may change). For example,   for    the model segmentation problem \eqref{tracemodel} with an $\ell_2$ loss and $r=q$,  we can use the following \emph{log-form} of PIC
$$
n \log \Big\{\sum (y_i -  \langle \bsbX_i, \bsbB \rangle)^2\Big\} +A_{1} p\| \bsbB\|_{2, \mathcal C} +A_2 (n -\|\bsbB\|_{2, \mathcal C})\log \|\bsbB\|_{2, \mathcal C}
$$
with $A_1 = 1.5$ and $A_2=1.1$. Finally,  in applying the scale-free PICs,  those over-complex  models with $\delta(\bsbB)\ge 1$ should be eliminated beforehand.\end{remark}

%

\subsection{Canonical correlation analysis and whitening}
\label{app:ccatoweighted}
        \begin{lemma}\label{lem:ccatoweighted} Let $\bsbX\in \mathbb R^{n\times p}$,  $\bsbY \in \mathbb R^{n\times m}$ and $r\le r(\bsbY)$.      Then
\begin{equation}
        \label{eq:projected_CRL}
        \min_{(\bsbS,\bsbA ) \, \in \, \mathbb{R}^{p\times r} \times \mathbb{R}^{m\times r}} \frac{1}{2}\| \bsbY \bsbA  -\bsbX\bsbS \|_F^2 \mbox{   s.t. }   (\bsbY\bsbA)^T \bsbY \bsbA=n\bsbI, \| \bsbS\|_{2, \mathcal C}\le q,
\end{equation}
is equivalent to
\begin{equation}
        \label{eq:mpweighted_CRL}
        \min_{\bsbB \, \in \, \mathbb{R}^{p\times m} } \frac{1}{2} \mbox{Tr} \{(  \bsbY -\bsbX\bsbB) (\bsbY^T\bsbY/n)^{+}  ( \bsbY -\bsbX\bsbB)^T\} \mbox{   s.t. }   r(\bsbB)\le r, \| \bsbB\|_{2, \mathcal C}\le q.
\end{equation}
The same conclusion holds when the $(2, \mathcal C)$-constraint is replaced by a  $(2, 0)$-constraint.
        \end{lemma}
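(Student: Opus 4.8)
The plan is to establish the equivalence by exhibiting an explicit correspondence between feasible points of the two problems that preserves the objective value. First I would reduce to the non-degenerate case: writing the (thin) SVD $\bsbY = \bsbU_Y \bsbD_Y \bsbV_Y^T$ with $\bsbD_Y$ containing the $r(\bsbY)$ nonzero singular values, so that $\bsbSig_Y := \bsbY^T\bsbY/n = \bsbV_Y (\bsbD_Y^2/n) \bsbV_Y^T$ and $\bsbSig_Y^{+} = \bsbV_Y (n\bsbD_Y^{-2}) \bsbV_Y^T$. The constraint $(\bsbY\bsbA)^T\bsbY\bsbA = n\bsbI$ forces $\bsbA$ to have its relevant action inside the row space of $\bsbY$; I would set $\bsbW = \bsbD_Y \bsbU_Y^T \bsbY \bsbA / \sqrt n$ — more conveniently, following the notation already used in the Remark, set $\bsbW = \bsbD^{1/2}\bsbU^T\bsbA$ where $\bsbSig_Y = \bsbU\bsbD\bsbU^T$ — so the orthogonality constraint becomes $\bsbW^T\bsbW = \bsbI$, i.e. $\bsbW$ is column-orthogonal.

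Next I would carry out the Pythagorean decomposition. With $\bsbB := \bsbS\bsbW^T\bsbD^{1/2}\bsbU^T$, one has $\rank(\bsbB)\le r$ (since $\bsbW$ has $r$ columns), and $\bsbB$ and $\bsbS$ share the same row patterns because $\bsbW^T\bsbD^{1/2}\bsbU^T$ has full row rank $r$; hence $\|\bsbB\|_{2,\mathcal C} = \|\bsbS\|_{2,\mathcal C}$ (and likewise for the $(2,0)$-version). The key computation is the identity
\begin{align*}
\| \bsbY\bsbA - \bsbX\bsbS\|_F^2 = \mbox{Tr}\{(\bsbY - \bsbX\bsbB)\bsbSig_Y^{+}(\bsbY - \bsbX\bsbB)^T\} + nr - n\,\rank(\bsbY),
\end{align*}
which I would verify by expanding both sides: decompose the ambient space $\mathbb R^m$ (acting on the right) into the column span of $\bsbU$ and its orthogonal complement, note $\bsbY = \bsbY\bsbU\bsbU^T$, and use $\bsbY^T\bsbY/n = \bsbU\bsbD\bsbU^T$ together with $\bsbW^T\bsbW=\bsbI$; the cross terms match because $\bsbA = \bsbU\bsbD^{-1/2}\bsbW$ modulo the kernel of $\bsbY$, and the residual $nr - n\,\rank(\bsbY)$ comes from the component of $\bsbY\bsbA$ orthogonal to $cs(\bsbX\bsbS)$'s relevant part that is fixed by the normalization. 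Since the additive constant $nr - n\,\rank(\bsbY)$ does not depend on $(\bsbS,\bsbA)$, minimizing the left side over feasible $(\bsbS,\bsbA)$ is the same as minimizing the trace term over the corresponding $\bsbB$.

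Then I would check the correspondence is a bijection between feasible sets at the level of objective values: given any feasible $\bsbB$ with $\rank(\bsbB)\le r$, factor $\bsbB = \bsbS\bsbW^T\bsbD^{1/2}\bsbU^T$ with $\bsbW^T\bsbW = \bsbI$ (possible since $\bsbB$'s row space, after the invertible twist by $\bsbD^{1/2}\bsbU^T$ on $cs(\bsbU)$, is at most $r$-dimensional; extend or truncate $\bsbW$ to exactly $r$ columns), set $\bsbA = \bsbU\bsbD^{-1/2}\bsbW$, and verify $(\bsbY\bsbA)^T\bsbY\bsbA = \bsbW^T\bsbD^{-1/2}\bsbU^T\bsbSig_Y\bsbU\bsbD^{-1/2}\bsbW \cdot n = n\bsbW^T\bsbW = n\bsbI$, with $\|\bsbS\|_{2,\mathcal C} = \|\bsbB\|_{2,\mathcal C}\le q$. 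Combining with the identity above closes the loop. The main obstacle I anticipate is the careful bookkeeping when $r < \rank(\bsbY)$ — handling the orthogonal complement of the chosen $r$-dimensional subspace inside $cs(\bsbY)$ so that the additive constant $nr - n\,\rank(\bsbY)$ comes out correctly and the normalization constraint is exactly (not merely up to) satisfied — together with confirming that the row-pattern equality $\|\bsbB\|_{2,\mathcal C} = \|\bsbS\|_{2,\mathcal C}$ survives right-multiplication by the fixed full-row-rank matrix $\bsbW^T\bsbD^{1/2}\bsbU^T$ in both directions; this last point is where the $(2,\mathcal C)$ and $(2,0)$ cases are handled uniformly, since both depend only on which rows of $\bsbS$ coincide (resp. vanish), a property unchanged by an injective linear map applied on the right.
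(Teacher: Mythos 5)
Your proposal is correct and follows essentially the same route as the paper's proof: the same change of variables $\bsbW = \bsbD^{1/2}\bsbU^T\bsbA$ and $\bsbB = \bsbS\bsbW^T\bsbD^{1/2}\bsbU^T$ (resp.\ $\bsbA = \bsbU\bsbD^{-1/2}\bsbW$ in the converse), the same key identity with additive constant $n(r - r(\bsbY))$, and the same observation that the row patterns of $\bsbS$ and $\bsbB$ are preserved under right-multiplication by a full-row-rank matrix. The one point to tidy up is that in the converse direction a feasible $\bsbB$ need not have its row space inside $cs(\bsbU)$, so one should factor $\bsbB\bsbU\bsbD^{-1/2} = \bsbS\bsbW^T$ (as the paper does) rather than $\bsbB$ itself; this costs nothing because $\bsbSig_Y^{+}$ annihilates the component of $\bsbB$ orthogonal to $cs(\bsbU)$, so the objective value is unaffected.
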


The lemma does not require  $\bsbY$ to have full column rank. When the row-wise  constraint  is inactive (e.g., $q = p$),  canonical correlation analysis converts to reduced rank regression with the Moore-Penrose inverse of $\bsbY^T \bsbY$ as the weighting matrix.
\begin{proof}
Let     $\bsbSig_{ Y}= \bsbY^T\bsbY/n$ and suppose its  spectral decomposition is given by  $\bsbU \bsbD \bsbU^T$, where  the diagonal matrix $\bsbD$ is  of size     $r(\bsbY)\times r(\bsbY)$ and is nonsingular.

First, given any feasible pair $(\bsbA, \bsbS)$ satisfying  $(\bsbY\bsbA)^T \bsbY \bsbA=n\bsbI$ and $ \| \bsbS\|_{2, \mathcal C}\le q$, we can construct  $\bsbW = \bsbD^{1/2}\bsbU^T \bsbA$, and $\bsbB = \bsbS \bsbW^T \bsbD^{1/2} \bsbU^T$ such that $\bsbW^T \bsbW  = \bsbI$,  $r(\bsbB)\le r$ and $\| \bsbB\|_{2, \mathcal C}\le \| \bsbS\|_{2, \mathcal C}\le q$. We claim that
\begin{align}
     \|\bsbY \bsbA - \bsbX \bsbS\|_F^2 = \mbox{Tr} \{(  \bsbY -\bsbX\bsbB) \bsbSig_Y^{+}  ( \bsbY -\bsbX\bsbB)^T\} +n( r-   r(\bsbY)).\label{ccalossconn}
\end{align}
 In fact,    $\bsbY \bsbA =\bsbY \bsbU \bsbU^T \bsbA= \bsbY \bsbU \bsbD^{-1/2} \bsbW$ gives
$\langle \bsbY \bsbA, \bsbX \bsbS\rangle  =\langle \bsbY \bsbU \bsbD^{-1/2}, \bsbX \bsbS\bsbW^T\rangle,$
from which it follows that
\begin{align*}
\|\bsbY \bsbA - \bsbX \bsbS\|_F^2  & =\|  \bsbY \bsbU \bsbD^{-1/2}- \bsbX \bsbS\bsbW^T\|_F^2+ \| \bsbY \bsbA\|_F^2- \|\bsbY \bsbU \bsbD^{-1/2} \|_F^2 \\
& =\|  \bsbY \bsbU \bsbD^{-1/2}\bsbU^T- \bsbX \bsbS\bsbW^T\bsbU^T\|_F^2+n(r- r(\bsbY)) \\ & =\|(  \bsbY  - \bsbX \bsbS\bsbW^T\bsbD^{1/2} \bsbU^T)\bsbU\bsbD^{-1/2}\bsbU^T\|_F^2+n(r- r(\bsbY))\\ & =\|(  \bsbY  - \bsbX \bsbB)(\bsbSig^{+})^{1/2}\|_F^2+n(r- r(\bsbY)).
\end{align*}

Conversely, given a feasible $\bsbB: r(\bsbB)\le r, \| \bsbB\|_{2, \mathcal C}\le q$, we can write $\bsbB \bsbU \bsbD^{-1/2}$ as $\bsbS \bsbW^T$ with $\bsbW\in \mathbb R^{r(\bsbY)\times r}$: $\bsbW^T \bsbW = \bsbI$, and thus  $\|\bsbS\|_{2, \mathcal C}\le \|\bsbB\bsbU \bsbD^{-1/2} \bsbW\|_{2, \mathcal C}\le q$. Let $\bsbA = \bsbU \bsbD^{-1/2} \bsbW$. Then $(\bsbY \bsbA)^T \bsbY \bsbA=n \bsbW ^T \bsbW =n \bsbI $. It is easy to verify that \eqref{ccalossconn} still holds.
The proof applies to a $(2, 0)$-constraint as well.
\end{proof}

\subsection{Pairwise-difference penalization}
\label{app:pairwise}
Recall an alternative to enforce row-wise equisparsity in  $\bsbB = [  \bsbb_1, \ldots,  \bsbb_p]^T$ as mentioned in   Remark  \ref{rem:pairwisepen}:
\begin{align}
\sum_{1\le j< j'\le p} P(\|  \bsbb_j - \bsbb_{j'}\|_2; \lambda). \label{eq:pairwisediffpen}
\end{align}
 \eqref{eq:pairwisediffpen} involves $\mathcal O(p^2)$ many terms;  its optimization   typically requires the techniques of \textit{operator splitting} and results in high computational complexity. Although we will not report  detailed  analysis  in this paper, even using an   ideal $\ell_0$ penalty in \eqref{eq:pairwisediffpen} will result in   a much worse statistical error rate         than   CRL,  and will      discourage size-balanced clustering as well.

More specifically, let's  penalize the pairwise row-differences of $\bsbB$ via an $\ell_0$ function (arguably an  ideal choice of $P$ from a theoretical perspective):
\begin{align}\label{lowrankpairwisel0}
\min l_0(\bsbX \bsbB; \bsbY) + \frac{\lambda^2}{2} \sum_{1\le j<j'\le p} 1_{\|  \bsbb_j - \bsbb_{j'}\|_2\ne 0}.
\end{align}
Given the number of groups $q(\bsbB)$, if we use $\bsb{g}(\bsbB) = \{g_1, \ldots, g_{q(\bsbB)} \}$ to denote the group sizes, the penalty can be written as $\lambda^2/2$ times
\begin{align}
\mathcal C(\bsbB) = \frac{1}{2}\sum_{i=1}^{q(\bsbB)} g_i(p - g_i).
\end{align}
To get an error bound of the resultant estimator, the regularization parameter   $\lambda$ must be large enough to suppress the noise. When $l_0$ is $\mu$-strongly convex,  our analysis below shows that  $\lambda$ should be as large as $$  \sigma \sqrt{(p\vee m)/ \mu p}$$ up to a multiplicative constant.

\begin{theorem}\label{th:pairwisel0}
 Let   $\lambda_o=\sigma\sqrt {(p\vee m)/p}$   and assume there exist some $\delta>0$, $K\ge 0$  such that $\bm\Delta_{l_0}(\bsbX\bsbB_1,\bsbX\bsbB_2) +K\sigma^2\lambda_o^2  \mathcal C( \bsbB_1) + K\sigma^2\lambda_o^2 \mathcal C(\bsbB_2)\ge \delta\Breg_2(\bsbX\bsbB_1,\bsbX\bsbB_2)$ for all $\bsbB_1, \bsbB_2$. Let $\lambda = A  \sqrt{ K \vee (1/  \delta)} \lambda_o$ with $A$ a sufficiently large constant and  $\hat\bsbB $ be an optimal  solution to \eqref{lowrankpairwisel0}. Then
                \begin{align}
                \EE [\| \bsbX \hat \bsbB - \bsbX   \bsbB^*\|_F^2]  \lesssim \ & \frac{(K\delta\vee 1)\sigma^2}{\delta^2} \Big\{     m    + \frac{m\vee p}{p}\mathcal C(\bsbB^*) \Big\} \label{pairwisel0err1}\\ \le \ & \frac{(K\delta\vee 1)\sigma^2}{\delta^2}  \Big\{     m + (m\vee p)\big(p - \frac{p}{q^*} \big)\Big \}. \label{pairwisel0err2}\end{align}
\end{theorem}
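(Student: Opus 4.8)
The plan is to mimic closely the fixed-point/global-optimum arguments of Theorems~\ref{th_local} and \ref{th_genlosserr}, replacing the low-rank-plus-equisparsity manifold with the manifold of matrices having a prescribed pairwise-difference pattern, and then to bound the associated stochastic complexity term via a covering/union-bound argument indexed by partitions of $[p]$. Concretely, write $\hat\bsbB$ for an optimizer of \eqref{lowrankpairwisel0}. Comparing the objective at $\hat\bsbB$ with that at $\bsbB^*$ gives, after invoking the definition of the effective noise $\bsbE$ in \eqref{noise-def} and the identity $\breg_{l_0}(\bsbX\hat\bsbB,\bsbX\bsbB^*)\le l_0(\bsbX\hat\bsbB)-l_0(\bsbX\bsbB^*)+\langle\bsbE,\bsbX(\hat\bsbB-\bsbB^*)\rangle$, the basic inequality
\begin{align*}
\breg_{l_0}(\bsbX\hat\bsbB,\bsbX\bsbB^*) + \frac{\lambda^2}{2}\mathcal C(\hat\bsbB)\ \le\ \frac{\lambda^2}{2}\mathcal C(\bsbB^*) + \langle\bsbE,\bsbX(\hat\bsbB-\bsbB^*)\rangle.
\end{align*}
The key structural observation is that $\bsbX(\hat\bsbB-\bsbB^*)$ lies in a union of subspaces: once we fix the partition $\mathcal G$ induced by $\hat\bsbB$ (equivalently the membership matrix $\bsbF$) and the partition $\mathcal G^*$ induced by $\bsbB^*$, the column space of $\bsbX(\hat\bsbB-\bsbB^*)$ is contained in $cs(\bsbX_{\mathcal G}) + cs(\bsbX_{\mathcal G^*})$, of dimension at most $q(\hat\bsbB)+q^*$. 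So $\|\bsbX(\hat\bsbB-\bsbB^*)\|_F^2$ can be controlled by a Gaussian-width/metric-entropy bound over the manifold $\{\bsbA:\|\bsbA\|_F\le 1,\ cs(\bsbA)\subset cs(\bsbX_{\mathcal G}),\ \text{for some partition }\mathcal G\text{ of size }q\}$, whose log-cardinality is $\le (q\wedge\rank(\bsbX)+m)\cdot$(number of relevant columns) plus $\log$ of the number of set-partitions of $[p]$ into $q$ blocks, i.e.\ $\log{p\brace q}$ — exactly the quantity appearing in Lemma~\ref{emprocBnd} and Lemma~\ref{stirling2bound}. Here, however, the "degrees of freedom'' count is $m\cdot q$ rather than $(q+m)r$, since there is no rank reduction; this is the source of the $m$ term in \eqref{pairwisel0err1} and of the loss relative to CRL.

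The next step is to relate $\log{p\brace q}$ — or rather the per-partition penalty one can afford — to the combinatorial quantity $\mathcal C(\bsbB)=\tfrac12\sum_i g_i(p-g_i)$. This is where the $\ell_0$ pairwise penalty differs from CRL's $(2,\mathcal C)$ count: a partition with block sizes $(g_1,\dots,g_q)$ is charged $\mathcal C=\tfrac12\sum g_i(p-g_i)$, which is of order $p\cdot(p-p/q)$ in the balanced case but only of order $p$ when one block is huge (sizes $\approx(p-q+1,1,\dots,1)$). One shows, by a counting argument, that the number of partitions with $\mathcal C(\bsbB)\le t$ is at most $\exp(c\,t/p\cdot\log p)$ or so — more precisely, that $\sum_i g_i(p-g_i)$ lower-bounds (up to constants) $p\log{p\brace q(\bsbB)}$ only when the blocks are balanced, and in general the union bound over all $\bsbB$ with a given $\mathcal C$-value is summable once $\lambda^2\gtrsim (K\vee 1/\delta)\,\sigma^2(p\vee m)/p$. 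Choosing $\lambda=A\sqrt{K\vee(1/\delta)}\,\lambda_o$ with $\lambda_o=\sigma\sqrt{(p\vee m)/p}$ makes the penalty dominate the stochastic fluctuation over each level set of $\mathcal C$, so the standard peeling/residual-term device (as in the bound $\EE R_1^2\lesssim\sigma^2$ following \eqref{residualterm1}) goes through, yielding $\EE\|\bsbX\hat\bsbB-\bsbX\bsbB^*\|_F^2\lesssim(K\delta\vee1)\delta^{-2}\{\,m\sigma^2 + \lambda_o^2\mathcal C(\bsbB^*)\,\}$, which is \eqref{pairwisel0err1}. The bound \eqref{pairwisel0err2} is then immediate from $\mathcal C(\bsbB^*)\le\tfrac12\sum_i g_i^*(p-g_i^*)\le p(p-p/q^*)$ after substituting $\lambda_o^2=(p\vee m)/p\cdot\sigma^2$ and simplifying, using $\sum g_i^*=p$ and Jensen to get $\sum g_i^{*2}\ge p^2/q^*$.

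The main obstacle I expect is the combinatorial bookkeeping in the second step: making precise that the metric-entropy cost of ranging over all partitions of $[p]$ into blocks with a fixed value of $\mathcal C(\cdot)$ is genuinely controlled by $\mathcal C(\cdot)\cdot(p\vee m)/p$ (and not by the much smaller $\log{p\brace q}\asymp(p-q)\log q$ that CRL enjoys), so that the resulting rate is demonstrably \emph{worse} — this requires carefully separating the "how many columns are involved'' count (at most $p-\min_i g_i$, i.e.\ roughly $p$ unless one block dominates) from the "how many such configurations'' count, and then verifying that no choice of $\lambda$ smaller than $\lambda_o$ can suppress the noise uniformly. A secondary, more routine obstacle is checking that the $m$-term (coming from the $m$ degrees of freedom per block, with no rank truncation available) cannot be shaved, which one records but does not need to prove in detail for the upper bound. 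Everything else — the basic inequality, the orthogonal decomposition of the stochastic term, Dudley's bound, and the final algebra — is parallel to Appendix~\ref{subsec:prooforacle} and can be cited rather than repeated.
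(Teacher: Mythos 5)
Your proposal is correct in outline and follows essentially the same route as the paper: the basic inequality from optimality of $\hat\bsbB$, reuse of the orthogonal decomposition and the entropy machinery of Lemmas~\ref{emprocBnd}--\ref{stirling2bound} to control $\langle\bsbE,\bsbX(\hat\bsbB-\bsbB^*)\rangle$ by $\sigma^2 P_o(\cdot)$, absorption of that complexity into the pairwise penalty, and the final Cauchy--Schwarz bound $\mathcal C(\bsbB^*)\le\tfrac{p}{2}(p-p/q^*)$ for \eqref{pairwisel0err2}. The one step you should repair is the combinatorial claim in the middle: the number of partitions of $[p]$ with $\mathcal C(\bsbB)\le t$ is \emph{not} of order $\exp(c\,t\log p/p)$ (already for $t\asymp p$, which forces $q\le 3$, there are about $3^p$ such partitions), and the paper does not count level sets of $\mathcal C$ at all. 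Instead it establishes the deterministic domination $\lambda_o^2\,\mathcal C(\bsbB)+2m\sigma^2\gtrsim\sigma^2 P_o(\bsbB)$ for every $\bsbB$, via Lemma~\ref{lem:boundsofpairwisepen}: $\mathcal C(\bsbB)\ge (q-1)(2p-q)/2\ge p(q-1)/2$, with the minimum attained at the \emph{most unbalanced} partition $\{1,\ldots,1,p-q+1\}$, combined with $r(\bsbB)\le m\wedge q(\bsbB)$, $(p-q)\log q\le p(q-1)$, and the separate case $q=1$ (which produces the additive $m$ in \eqref{pairwisel0err1}). This is what lets $\tfrac{\lambda^2}{2}\mathcal C(\hat\bsbB)$ swallow the $\sigma^2 P_o(\hat\bsbB)$ fluctuation, after which the argument of Theorem~\ref{th:pic} applies verbatim. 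Your aside that $\mathcal C$ controls the entropy ``only when the blocks are balanced'' has the binding case backwards, but since your argument never actually relies on the erroneous counting bound, the rest of the plan is sound.
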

\begin{proof}
Since $\mathcal C(\bsbB)$ depends on $\bsbB$ through $\bsb{g}(\bsbB)$ only,   we define   $\mathcal C(\bsb{g}) = \sum_{i=1}^{q} g_i(p - g_i)$ for any  $\bsb{g} =\{g_1, \ldots, g_q\}$ with a slight abuse of notation.
\begin{lemma}\label{lem:boundsofpairwisepen}
Given $q\in [p]$, let $ \mathcal G_q= \{ \bsb{g}:   |\bsb{g}|=q,  \sum g_i = p, g_i \in \mathbb N  \}$. Then
 \begin{align*}
 &\frac{(q-1) (2p-q)}{2} = \mathcal C(\{1, \ldots,1,p-q+1\}) \le  \min_{\bsb{g}\in \mathcal G_q}\mathcal C(\bsb{g}),  \\  &  \max_{\bsb{g}\in \mathcal G_q}\mathcal C(\bsb{g})\le  \mathcal C\Big(\big\{\frac{p}{ q}, \ldots, \frac{p}{q}\big\}\Big) = \frac{p^2(q-1)}{2q}.\end{align*}
\end{lemma}
The first inequality can be proved by induction and the second inequality follows from the Cauchy-Schwartz inequality. The proof details are omitted.

Using the definition of the generalized Bregman function \eqref{genbregdef} and the bound of  the stochastic term developed  in the last subsection, we obtain for any   $a, a', b>0$ and $4b > a$
\begin{align*}
&\EE \big\{ \breg_{l_0} (\bsbX \hat \bsbB, \bsbX \bsbB^*) - (\frac{2}{a}+\frac{2}{a'})\Breg_{2} (\bsbX \hat \bsbB, \bsbX \bsbB^*)  +\frac{\lambda^2}{2}\mathcal C(\hat \bsbB)    \big \}\\
\le \ &  \EE \big\{4bL \sigma^2P_o(\hat \bsbB )+ 4bL \sigma^2P_o(\bsbB^*) +Ca'\sigma^2 + \frac{\lambda^2}{2} \mathcal C( \bsbB^*)\big\}.
\end{align*}
We claim that when $\lambda_1^2 = A \lambda_o^2 =A \sigma^2(p\vee m)/p$ with a large constant $A$,  $$ {\lambda_1^2} \mathcal C(  \bsbB) + 2 m \sigma^2 \ge  \sigma^2P_o(  \bsbB ), \forall \bsbB.$$ When $q(\bsbB) = 1$, $P_o(\bsbB )\le2 m$.  Otherwise, by Lemma \ref{lem:boundsofpairwisepen}, we need to study the size of  $$
\max_{\bsbB: q(\bsbB)\ge 2} \frac{(r(\bsbX)\wedge  q(\bsbB)    + m )r(\bsbB) + (p-q(\bsbB))\log q(\bsbB)}{ (q(\bsbB)-1) (2p-q(\bsbB))  }.
$$
Because $r(\bsbB)\le m\wedge q(\bsbB)$, it is easy to verify that the maximal value is of the order $(m+p)/p$. The remaining derivations follow the lines of the proof of Theorem \ref{th:pic}.
\end{proof}

 From Lemma \ref{lem:boundsofpairwisepen}, the CRL rate in Theorem \ref{th_local} beats  \eqref{pairwisel0err1} and \eqref{pairwisel0err2} all the time, and  in light of the minimax studies (e.g.,  Theorem \ref{th:minimax-complete}), the performance of  the type of pairwise regularization is  merely suboptimal.

One potential remedy   is to replace the uniform  $\lambda$\ by    $\lambda_{j,j'} $ or $w_{j,j'}\lambda $, with a set of      data-dependent weights $w_{j,j'}$.         But the weight construction     is notoriously difficult in   high-dimensional supervised learning,         and         to the best of our knowledge, there is no sound  scheme yet with finite-sample theoretical support. In addition, the penalty parameters $\lambda_{j,j'} $ often need much finer grids and are less intuitive than          $q$  which is  more convenient to specify practically. Therefore, sparsifying pairwise differences is not our favorable regularization.
\\

Finally, as suggested by one reviewer,   the  pairwise-difference based regularizations might appear similar to   the fused LASSO \citep{tibshirani2005sparsity}  at first glance, but there are some significant differences. Fused LASSO  considers   \textit{successive}  differences of the coefficients only to impose sparsity. Indeed, if one could  rearrange the features  using the authentic equisparse model,  so that the true coefficients are well sorted to be equal in consecutive blocks, then  the successive-difference based  regularization  would do the job as  those penalizing all pairwise differences; but even so,  with more than one response, the  preferred orderings according to different columns of the true coefficient matrix may be   incompatible.

\section{More  Experiments }
\label{app:exp}
\subsection{Implementation details}
\label{appsub:impldetails}

First, we make a discussion of the inverse stepsize $\rho$.
The algorithm derivation in Section \ref{subsec:algdesign} shows

\begin{align}   G(\bsbS^{[k]},\bsbV^{[k]}; \bsbS^{[k-1]},\bsbV^{[k-1]})-f(\bsbS^{[k]}, \bsbV^{[k]})\ge \frac{\rho}{2}  \|  \bsbB^{[k]}- \bsbB^{[k-1]} \|_F^2  - \frac{L}{2} \| \bsbX( \bsbB^{[k]}- \bsbB^{[k-1]})\|_F^2. \label{ineqformajor0}\end{align}
Indeed, \begin{align*}
& \ l(\bsbS^{[k]},\bsbV^{[k]}) -   l(\bsbS^{[k-1]},\bsbV^{[k-1]})-\langle \nabla l_0(\bsbX\bsbB^{[k-1]}), \bsbX(\bsbB^{[k]}-\bsbB^{[k-1]}) \rangle \\= & \int_{0}^{1}
\langle \nabla  {l}_0(\bsbX\bsbB^{[k-1]}+t\bsbX(\bsbB^{[k]} - \bsbB^{[k-1]})),
\bsbX\bsbB^{[k]} -\bsbX\bsbB^{[k-1]} \rangle \rd t
\\ &\quad - \int_{0}^{1}\langle \nabla  {l_{0}}(\bsbX\bsbB^{[k-1]}), \allowbreak\bsbX\bsbB^{[k]} - \bsbX\bsbB^{[k-1]}\rangle \rd t \\
=  & \int_{0}^{1} \langle \nabla {l_{0}}(\bsbX\bsbB^{[k-1]}+t\bsbX(\bsbB^{[k]}- \bsbB^{[k-1]}))
- \nabla  {l}_{0}(\bsbX\bsbB^{[k-1]}), \bsbX \bsbB^{[k]} - \bsbX\bsbB^{[k-1]} \rangle \rd t \\\leq & \int_{0}^{1} L t\rd t \, \|\bsbX\bsbB^{[k]}- \bsbX\bsbB^{[k-1]}\|_F^2  =\ \frac{L}{2} \| \bsbX( \bsbB^{[k]}- \bsbB^{[k-1]})\|_F^2.
\end{align*}
Hence  $\rho  = L\|\bsbX \|_2^2$ suffices to secure the numerical convergence. But a smaller   $\rho$, such as $\rho =L    \overline{\kappa}_2(q, r) $ seen from \eqref{ineqformajor0}, is favored by our statistical analysis in  Theorem \ref{th_local}. Therefore,  we strongly recommend performing a  line search  in implementation with          \eqref{surrogate_ineqs0}  as the  search criterion.  

RRR can be used to initialize the algorithm.  Specifically, let $\bsbB_{\mbox{\tiny rrr}} = \bsbB_{\mbox{\tiny ols}} \bsbV_r\bsbV_r^T$, where $\bsbB_{\mbox{\tiny ols}}= (\bsbX^T\bsbX)^{+}\bsbX^T\bsbY$   and $\bsbV_r$ is formed by the leading $r$ eigenvectors of $\bsbY^T\Proj_{\bsbX}\bsbY$.
Then one can set $\bsbV^{[0]}=\bsbV_r$ and perform K-means on $\bsbB_{\mbox{\tiny rrr}}\bsbV^{[0]}=\bsbB_{\mbox{\tiny ols}}\bsbV^{[0]}$ to obtain $\bsbF^0$ and $\bsbmu^0$. Other initialization schemes are possible; in particular, the multi-start  strategy of \cite{rousseeuw1999fast} is quite effective in some hard cases in our experience.

 Some  experiments for  unsupervised learning in later subsections  involve the application of kernel CRL on  similarity data (cf. Remark \ref{rmk:unsup}).  Given a symmetric normalized (or unnormalized) graph Laplacian $\bsbL = \bsbI - \bsbD^{-1/2}\bsbW \bsbD^{-1/2}$ (or $\bsbL=\bsbD -\bsbW$) with $\bsbW$ representing the symmetric data similarity matrix and  $\bsbD=\mbox{diag}\{\bsbW\bold{1}\}$   \citep{von2007tutorial}, it can be shown that setting   $\bsbK=\rho \bsbI -\bsbL$ for any  $\rho \geq \sigma_{\max}(\bsbL)$ in kernel CRL and dropping its equisparsity regularization    gives an equivalent characterization of    spectral clustering.  But the complete  CRL criterion   enforces equisparsity and low rank simultaneously, resulting in an iterative pursuit of the optimal subspace and clusters. Concretely,  when only  a positive semi-definite similarity matrix $\bsbK$ is given, we can perform spectral decomposition  $\bsbK=\bsbU\bsbD\bsbU^T$ with $\bsbD = \mbox{diag}\{ d_1, \cdots , d_n\}$ and $d_1\ge \cdots \ge d_n\ge 0$, and  set $\bsbY=\bsbU\bsbD^{{1}/{2}}$  to run the CRL algorithm. With a rank-$\bar m$ SVD truncation   and  whitening,   $\bsbY$ becomes $ \bsbU[:,1:\bar m]$. One can set $\bar m=m$ if $m$ is known, or simply $\bar m =\alpha q$ with say $\alpha=2$,   which shows good performance in general.

\subsection{Unsupervised data} 
\label{subsec:simu1}

This part performs synthetic data experiments for clustering.
The   simulation datasets are generated according to   two settings. 1) Pick    $10$ points   at random in a two-dimensional square with side length $500$ as cluster centers, then generate $100$ observations for each cluster  in $\mathbb R^{50}$  by adding   noise $N(0,\sigma^2)$ with $\sigma^2 = 1,\mbox{10000}$. This  resembles a common  scenario where the cluster centroids    lie in a lower  dimensional subspace ($r^*<q^*$). 2) Generate $20$ cluster centers by sampling in a hypercube with side length $500$ in $\mathbb R^{50}$, then add noise as in  first setting.

Apart from CRL, we tested    K-means,
K-means++  and AFK-MC$^2$ \citep{arthur2007k,bachem2016fast}. {As a matter of fact, we  tested many other methods,  including, say, \cite{chi2015splitting},  in the two settings, the computational and clustering performances of which are however  much worse than K-means++ or AFK-MC$^2$, and so their results are not included.}
To remove the influence of    tuning   and to make a fair comparison,    we   set $q=q^*$ in all experiments.  Other  parameters were taken to be their default values.
In each setup, we repeated the experiment   100 times and evaluated the performance of an algorithm
using  the  metrics of \emph{clustering accuracy} (CA) \citep{cai2005document} and   \emph{mean squared error} (MSE). Specifically, $\mbox{CA}= |\{ l_i= \mbox{map}(f_i) \}|/n$, where $n$ is the total number of data samples, $l_i$ and $f_i$ denote the true cluster and the assigned cluster of the $i$-th observation, respectively, and $\mbox{map}(f_i)$ is a permutation mapping   by the Kuhn-Munkres algorithm \citep{lovasz2009matching}.  In experience,   CA  is    more sensitive  than the Rand Index \citep{rand1971objective}.  MSE is given by $\|\bsbY^* - \hat \bsbB\|_F^2/mn$, to measure the error in   data approximation, where $\hat \bsbB$ is the estimated approximation matrix in $\mathbb R^{n \times m}$.
\begin{table}
    \caption{Performance comparison in terms of the clustering accuracy (CA) and mean square error (MSE) ($n=500, m=50$)\label{sim_uic}}
        \setlength\tabcolsep{1.5pt}
        \renewcommand{\arraystretch}{1}
        \centering
        \footnotesize

        \begin{tabular}{l c c c c c c c c c c c}
                \hline
                &  \multicolumn{2}{c}{\textbf{Setting 1}}& &  \multicolumn{2}{c}{\textbf{Setting 1}}&  &\multicolumn{2}{c}{\textbf{Setting 2}}&  &\multicolumn{2}{c}{\textbf{Setting 2}} \\
                &  \multicolumn{2}{c}{{\scriptsize$q^*=10,\sigma^{2}=1$}}& &\multicolumn{2}{c}{{\scriptsize$q^*=10,\sigma^{2}=$1e+4}} & &\multicolumn{2}{c}{{\scriptsize$q^*=20,\sigma^{2}=1$}}& &\multicolumn{2}{c}{{\scriptsize$q^*=20,\sigma^{2}=$1e+4}}\\
                &  \multicolumn{2}{c}{{\scriptsize$r^* = 2$}}& &  \multicolumn{2}{c}{{\scriptsize$r^* = 2$}}&  &\multicolumn{2}{c}{{\scriptsize$r^* = q^*$}}&  &\multicolumn{2}{c}{{\scriptsize$r^* = q^*$}} \\
                \cmidrule(lr){2-3} \cmidrule(lr){5-6} \cmidrule(lr){8-9}\cmidrule(lr){11-12}

                &\mbox{CA} &  \mbox{MSE}& &\mbox{CA} &  \mbox{MSE}& &\mbox{CA}  &\mbox{MSE}& &\mbox{CA}&  \mbox{MSE}\\
                \hline
                K-means & 0.72&   447&  &0.70 &626  &  &0.67 & 3.9e+3 & &0.81  & 2.5e+3 \\
                K-means++ & 0.94&  277 &  &0.92&  375&  &0.97 & 305 & &0.83& 2.4e+3 \\
                AFK-MC$^2$ &  0.88&  329&  &0.87&  491&  &0.82 & 1.4e+3 & &0.87 & 2.2e+3\\
                CRL ($r=q$) & 1&  0.01 &  &0.94 & 195 &  &1& 0.01 & &0.98 & 323 \\
                CRL ($r=0.5q$) & 1&  0.01 &  &0.96 & 143& & 1& 4.1e+3 & &0.98 & 4.6e+3 \\
                \hline
        \end{tabular}

        %

\end{table}

Table \ref{sim_uic} shows the results 
averaged over $100$ independent realizations, where
CRL gives excellent    CA and  MSE rates.   In the presence of a large number of clusters and/or large noise, CRL's improvement over K-means, K-means++ and AFK-MC$^2$ is particularly impressive.
Moreover, according to the  last row, it is possible  to enforce a lower rank on these datasets.  The finding  is    quite surprising in Setting $2)$, where the original cluster centroids do \textit{not} lie in a lower-dimensional subspace. Indeed, setting $r=0.5q$ resulted in poor  data approximation, but still succeeded  in discovering the clustering structure. (This is appealing in computation since many modern clustering  algorithms   have excellent performance in low dimensions.) In Setting $1)$, the simultaneous dimension reduction even brought some improvement when the noise is large,  owing to its power of removing some nuisance dimensions.
\\

We also tested kernel-based CRL on twelve two-dimensional benchmark datasets that have nonconvex clusters \citep{jain2005data,chang2008robust}.   Figure \ref{kernel_res} plots the data  and CRL clusters. Although these are considered to be challenging    tasks in the literature, CRL handled all of them with ease.

\begin{figure}[htbp]
        \begin{minipage}[t]{0.325\linewidth}
                \centering
                \includegraphics[width=\textwidth]{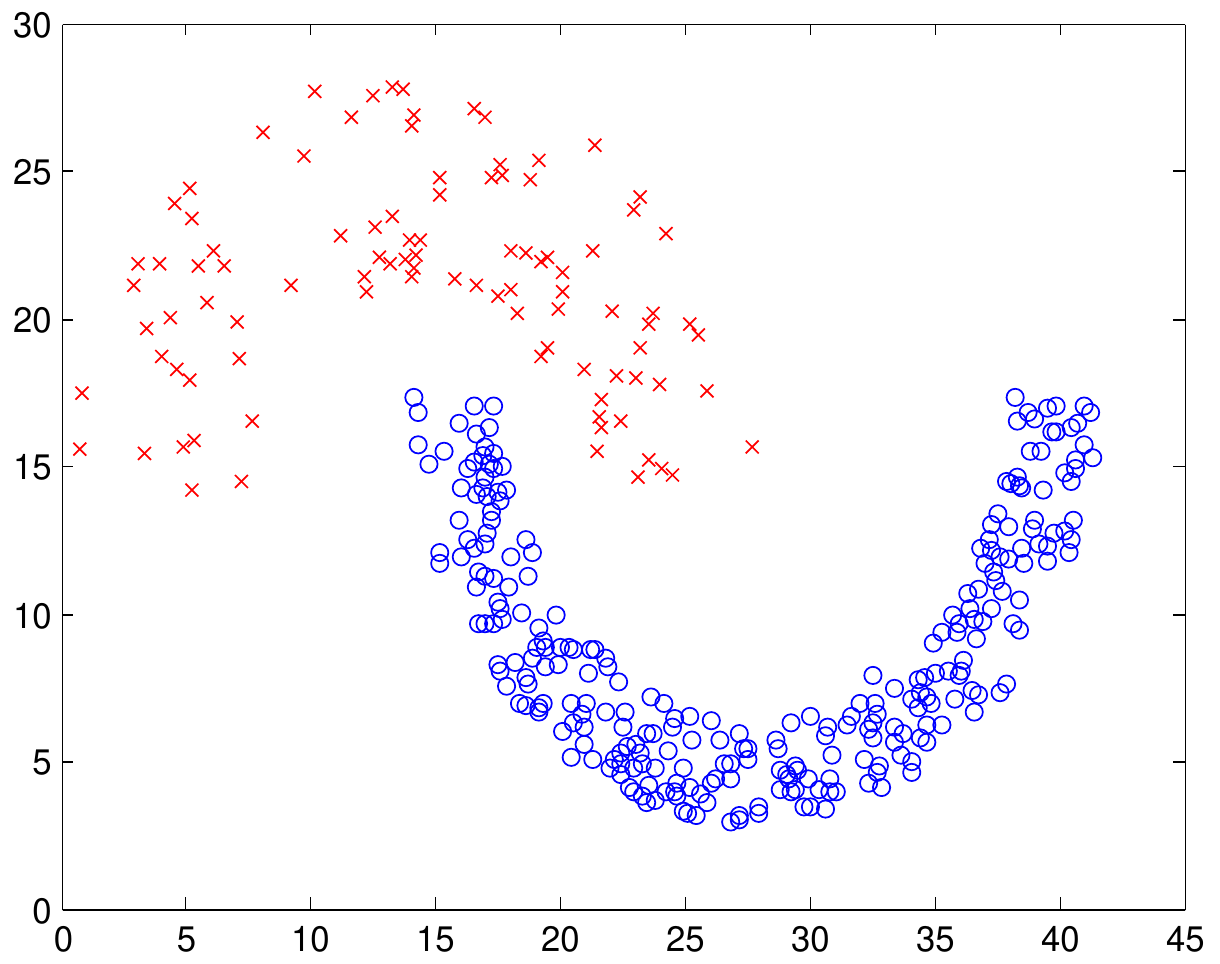}\\
                a. Double moon
        \end{minipage}
        \begin{minipage}[t]{0.325\linewidth}
                \centering
                \includegraphics[width=\textwidth]{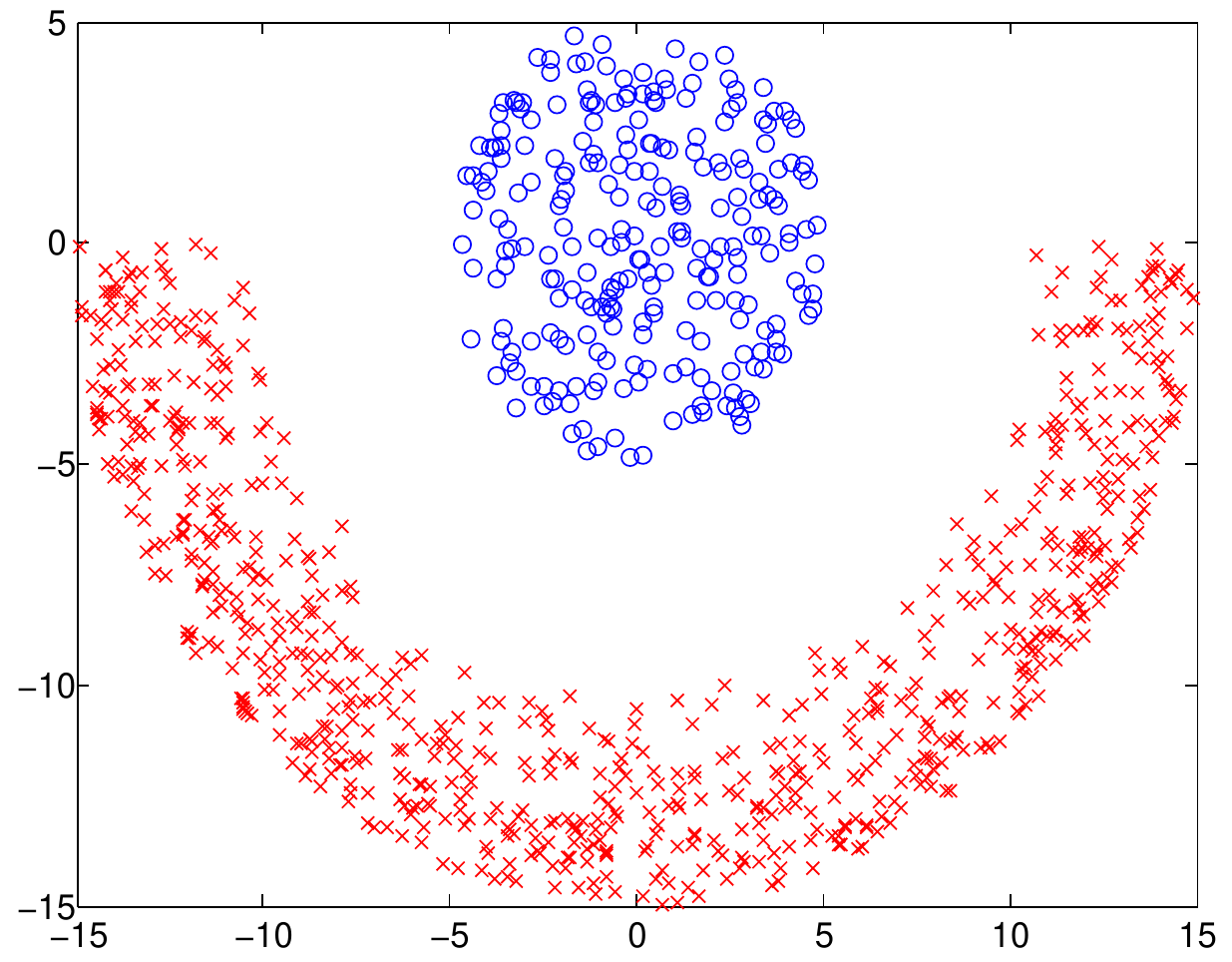}\\
                b. Full moon
        \end{minipage}
        \begin{minipage}[t]{0.325\linewidth}
                \centering
                \includegraphics[width=\textwidth]{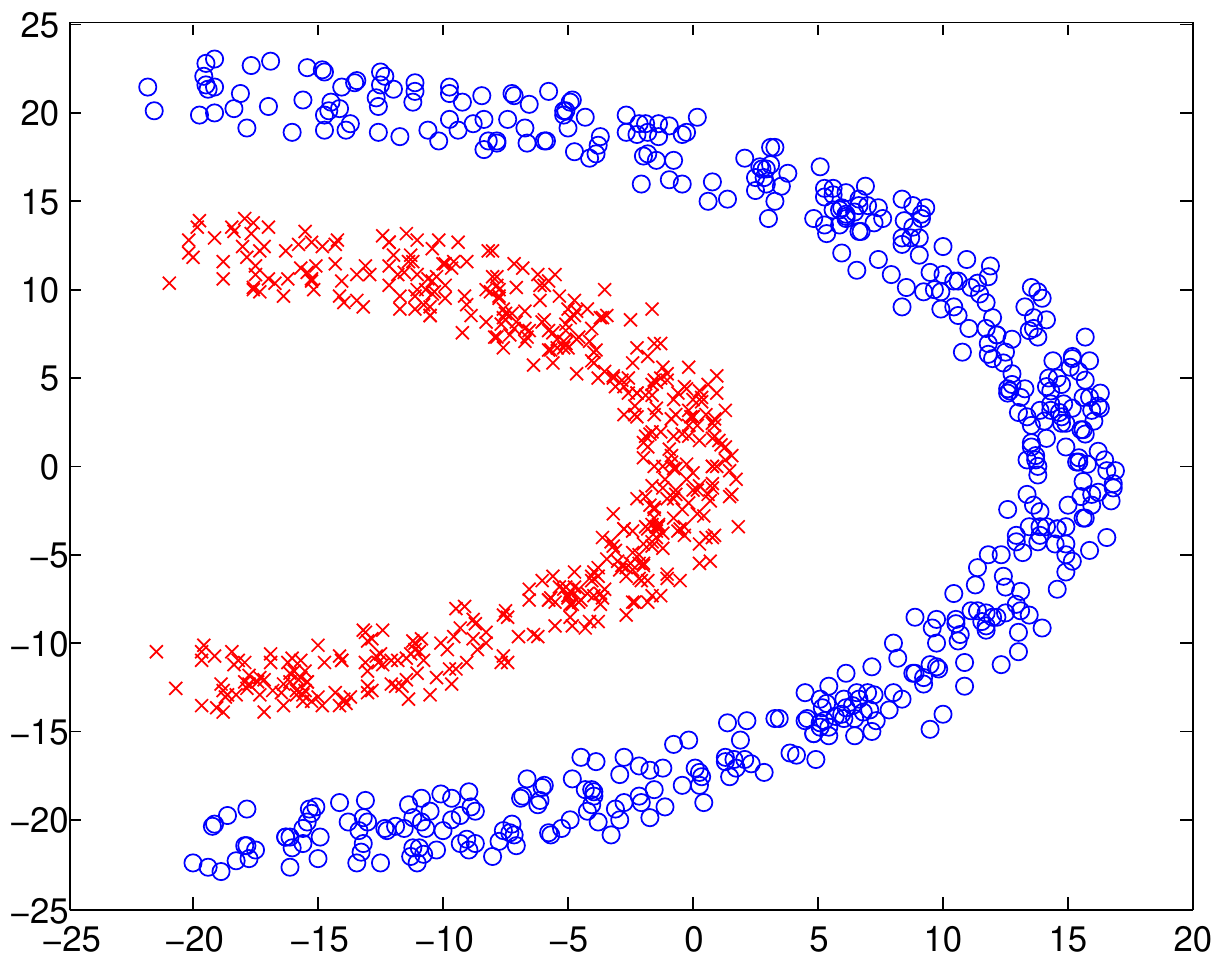}\\
                c. Half kernel
        \end{minipage}
        \begin{minipage}[t]{0.325\linewidth}
                \centering
                \includegraphics[width=\textwidth]{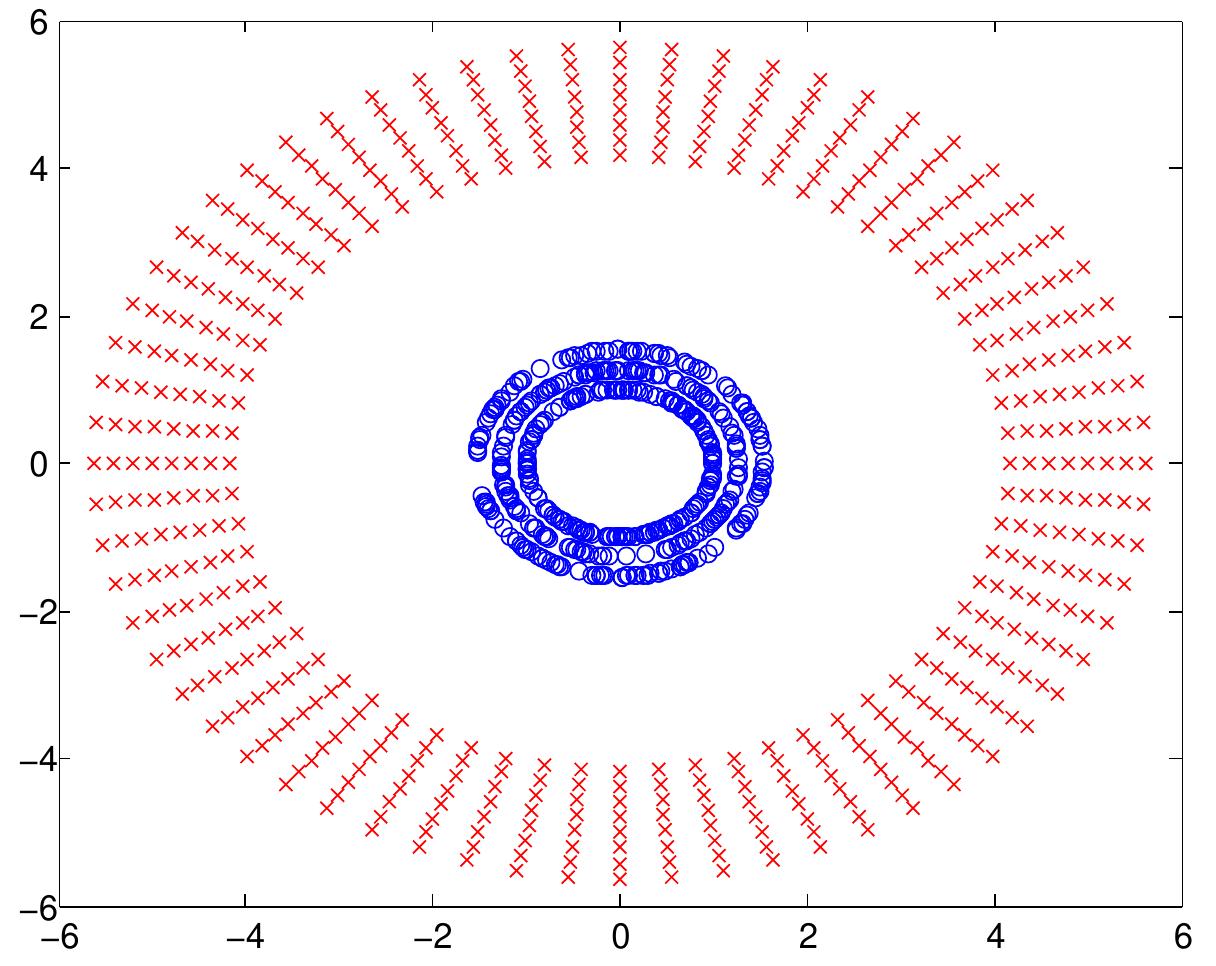}\\
                d. Cluster in cluster
        \end{minipage}
        \begin{minipage}[t]{0.325\linewidth}
                \centering
                \includegraphics[width=\textwidth]{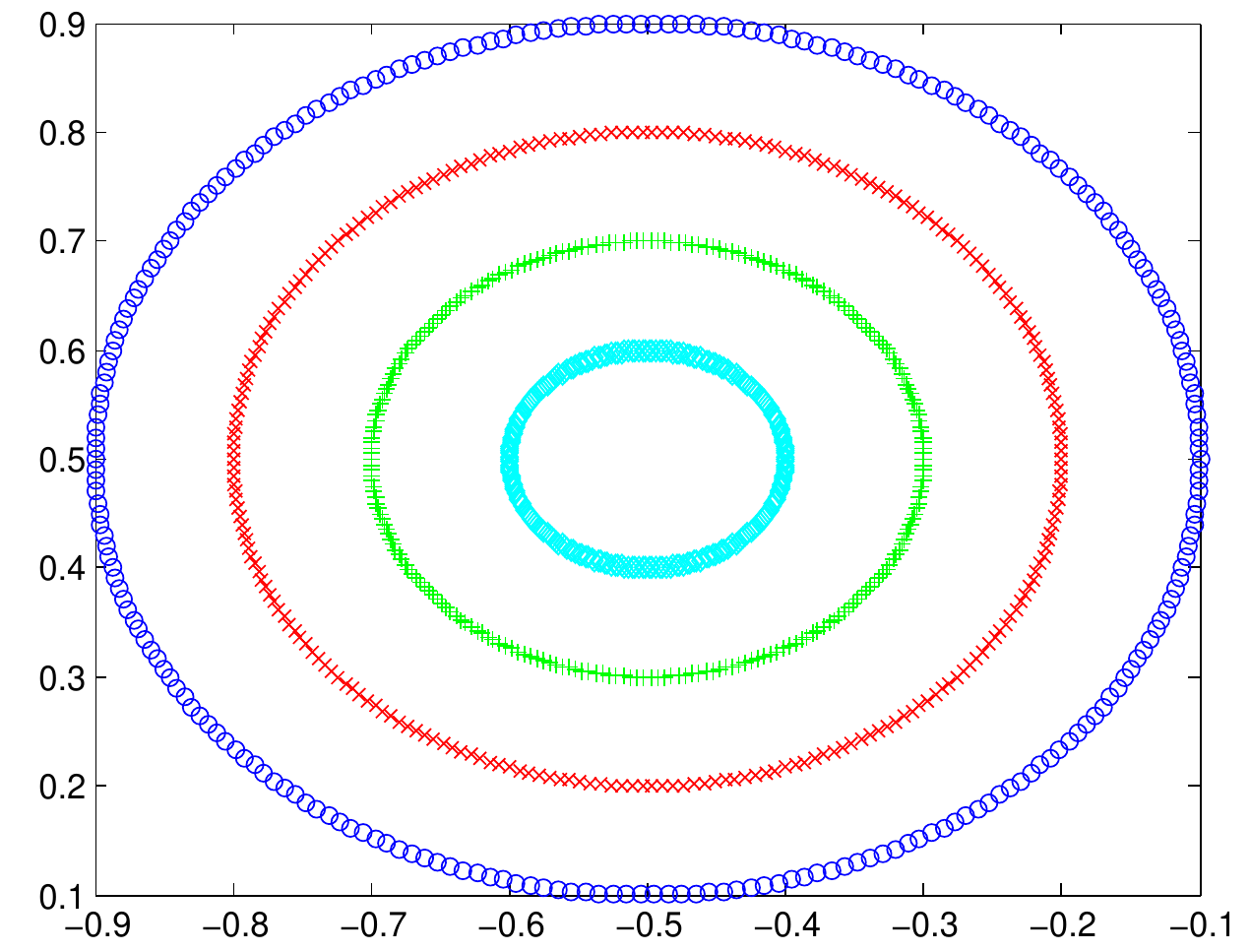}\\
                e. Dartboard
        \end{minipage}
        \begin{minipage}[t]{0.325\linewidth}
                \centering
                \includegraphics[width=\textwidth]{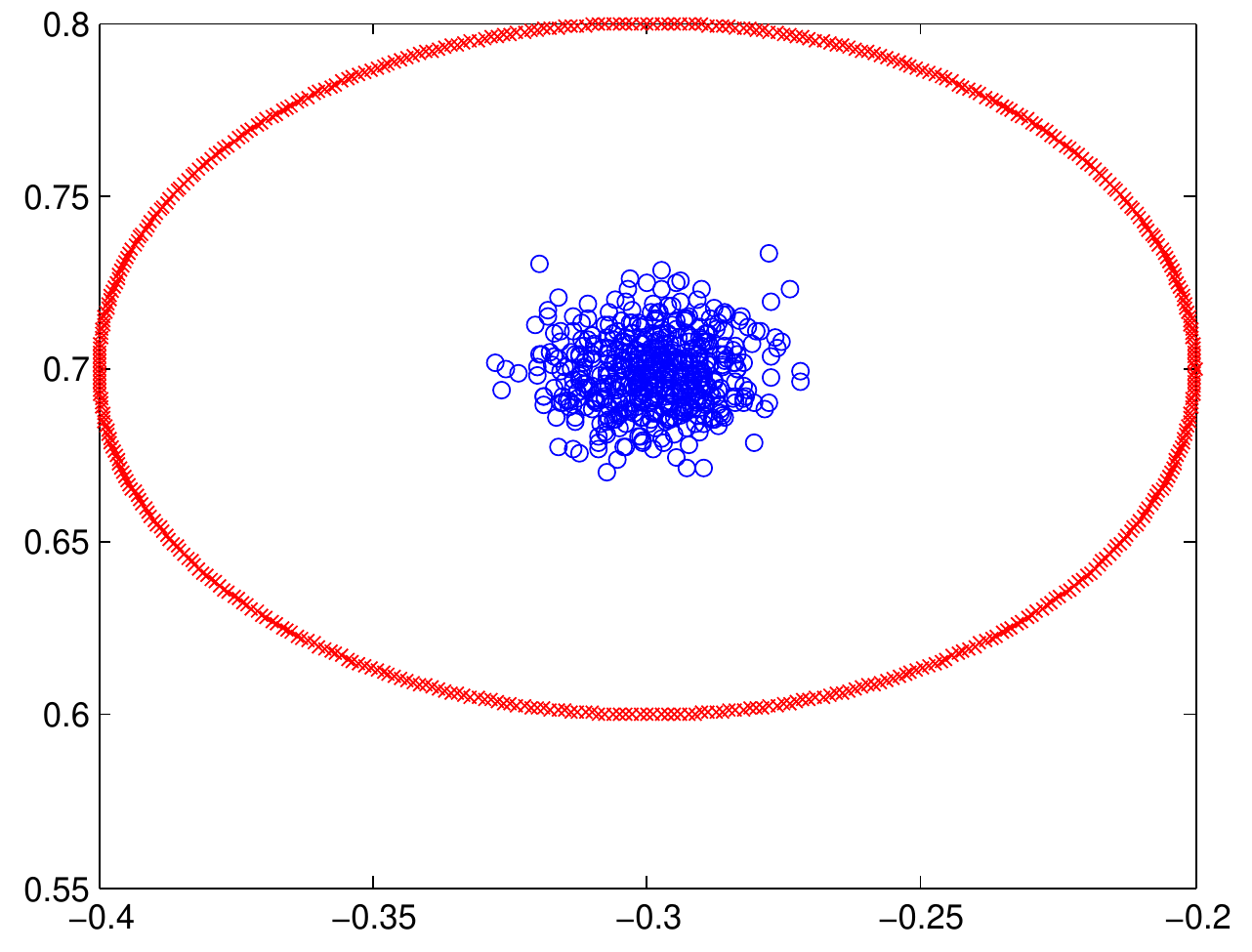}\\
                f. Donut
        \end{minipage}
        \begin{minipage}[t]{0.325\linewidth}
                \centering
                \includegraphics[width=\textwidth]{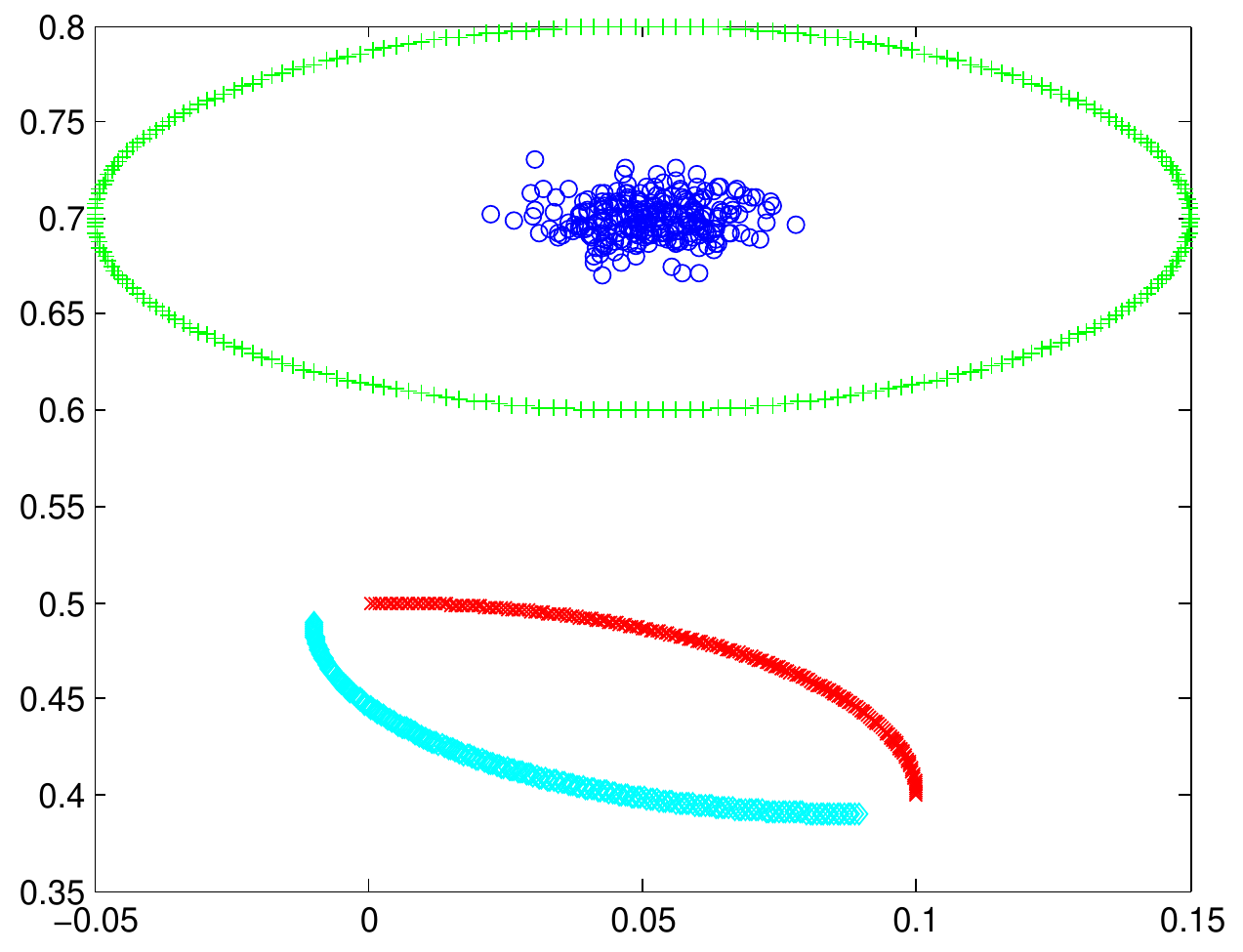}\\
                g. Donutcurves
        \end{minipage}
        \begin{minipage}[t]{0.325\linewidth}
                \centering
                \includegraphics[width=\textwidth]{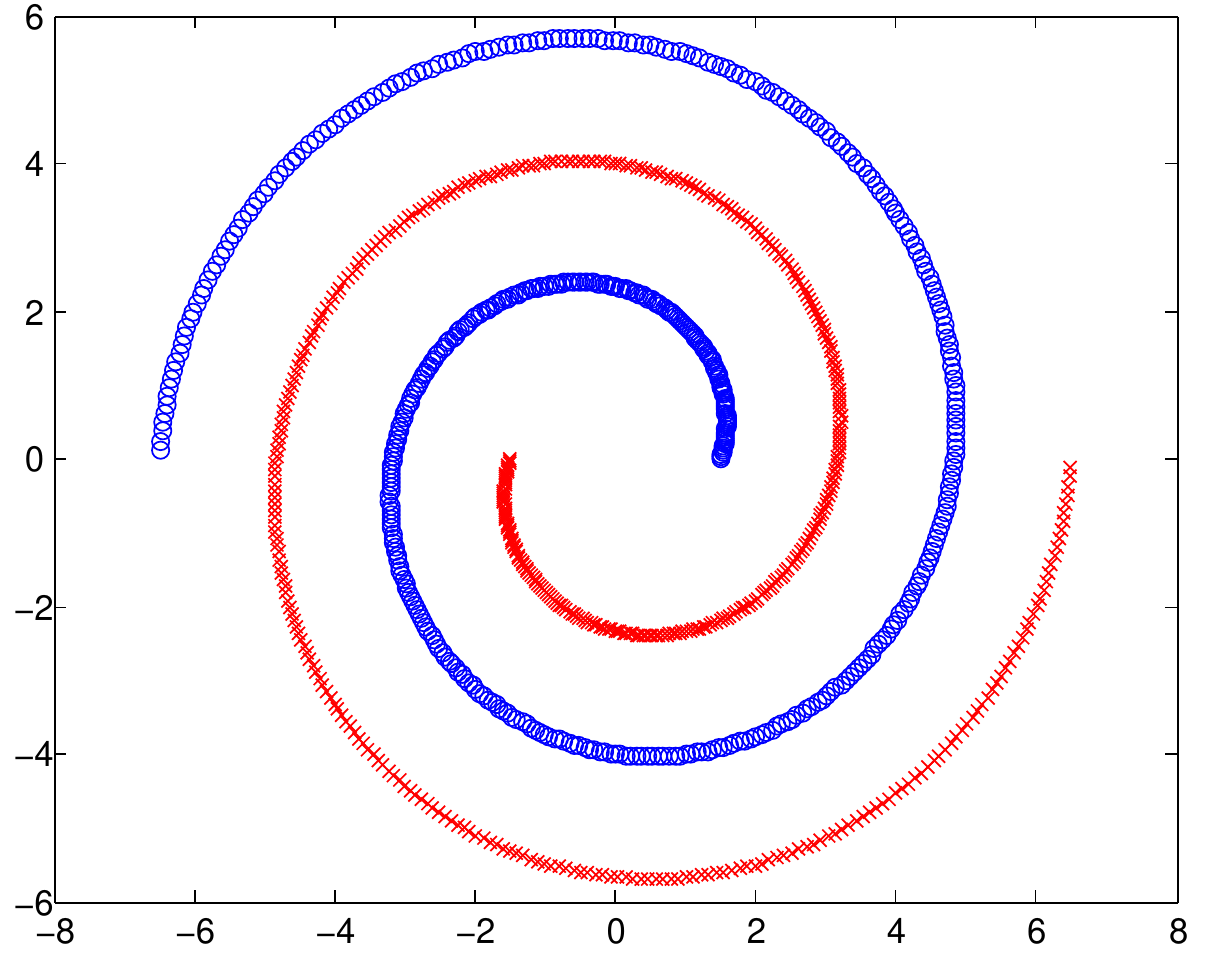}\\
                h. Spiral1
        \end{minipage}
        \begin{minipage}[t]{0.325\linewidth}
                \centering
                \includegraphics[width=\textwidth]{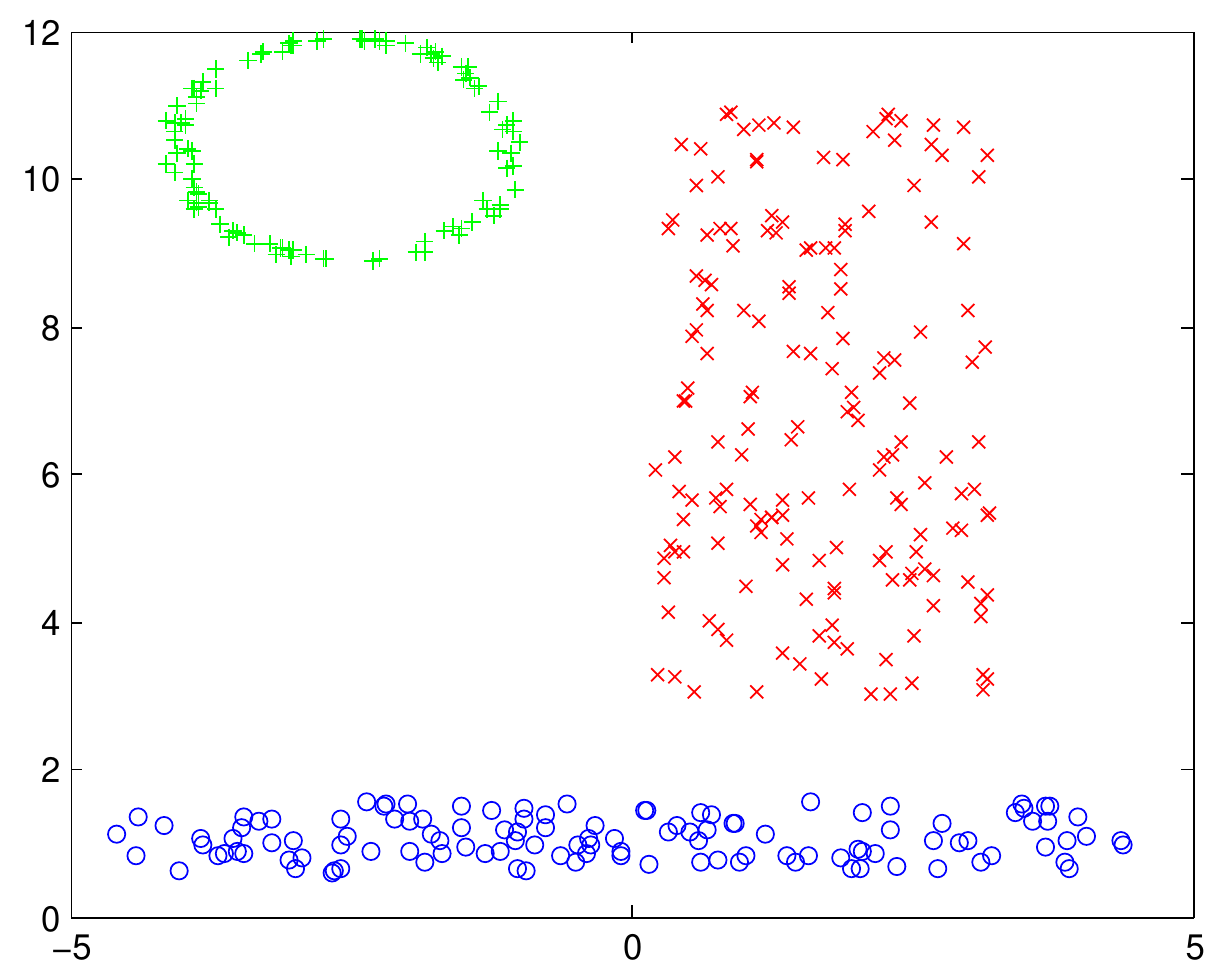}\\
                i. 3MC
        \end{minipage}
        \begin{minipage}[t]{0.325\linewidth}
                \centering
                \includegraphics[width=\textwidth]{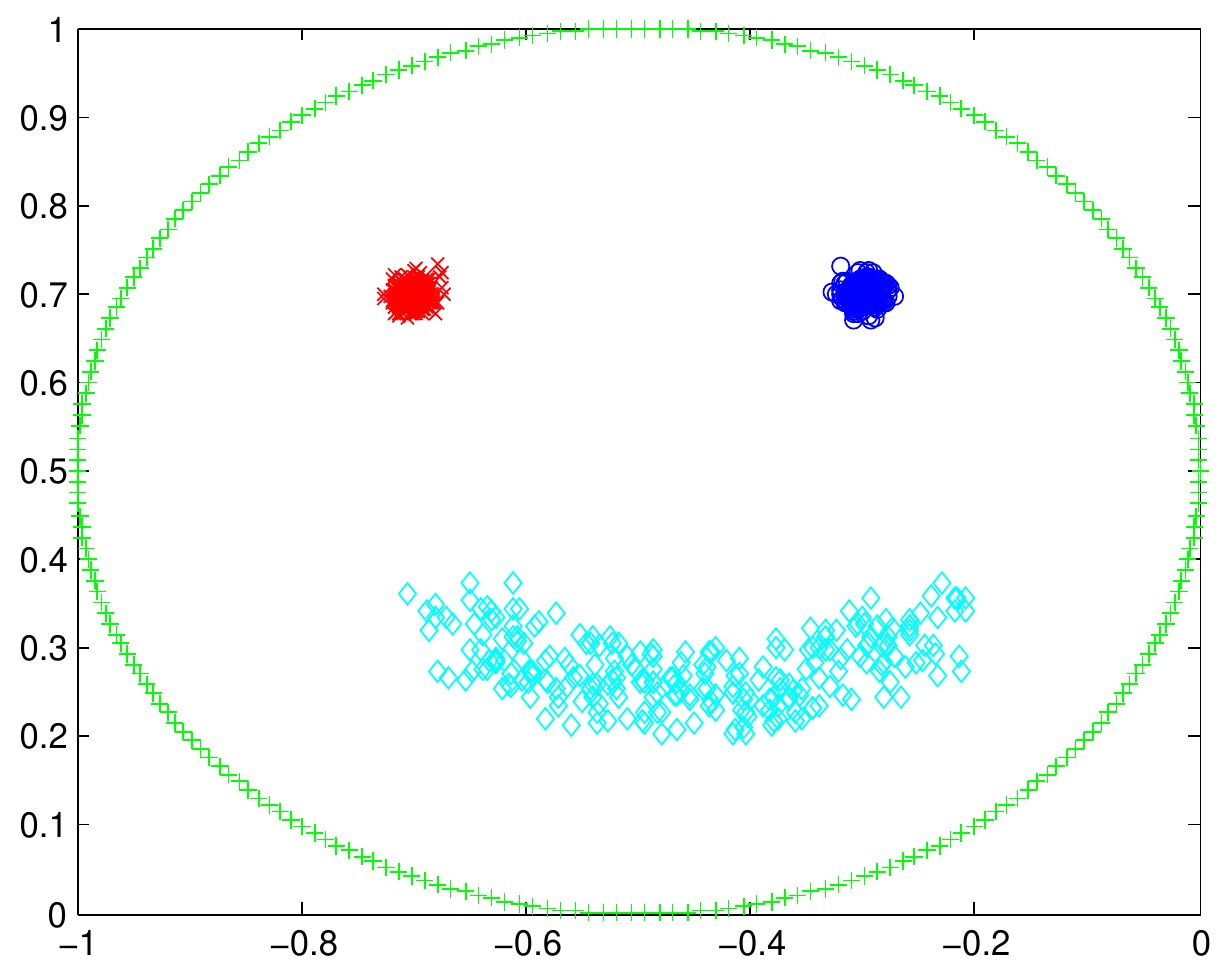}\\
                j. Smile
        \end{minipage}
        \begin{minipage}[t]{0.325\linewidth}
                \centering
                \includegraphics[width=\textwidth]{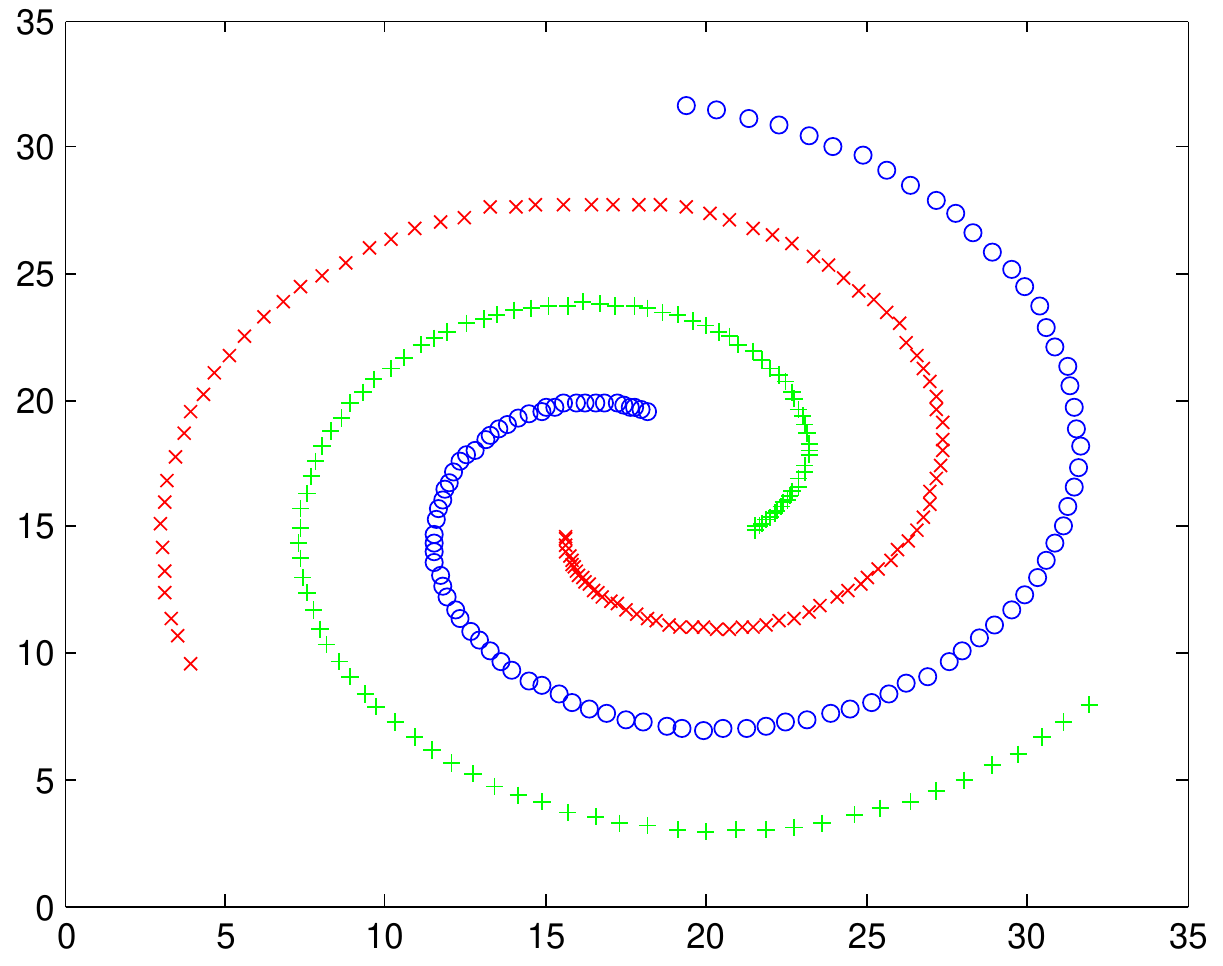}\\
                k. Spiral2
        \end{minipage}
        \begin{minipage}[t]{0.325\linewidth}
                \centering
                \includegraphics[width=\textwidth]{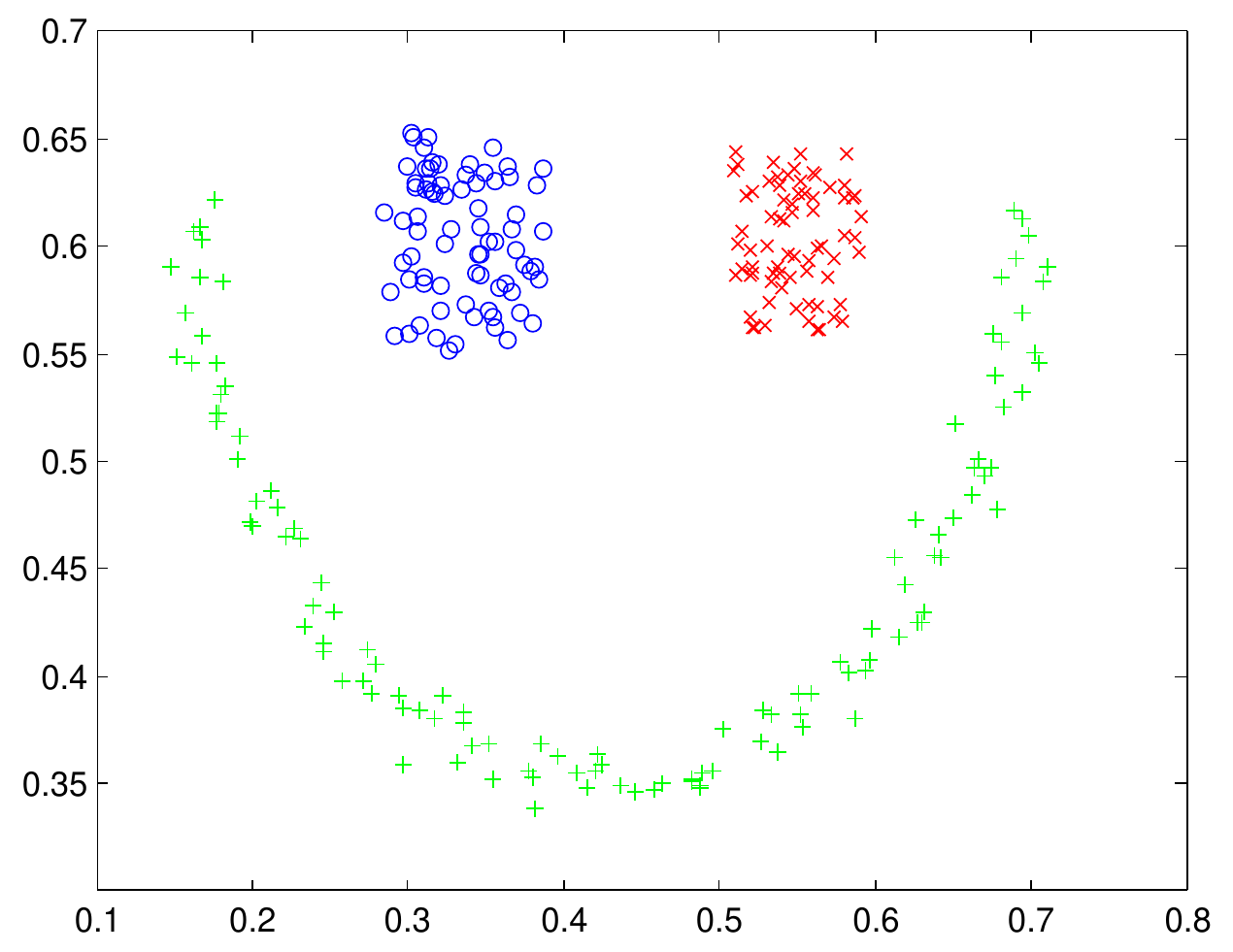}\\
                l. Zelnik
        \end{minipage}
        \caption{CRL on some standard    clustering benchmarks. The obtained clusters are labeled in difference colors.}
        \label{kernel_res}
\end{figure}

We  then  created more clusters and increased    dimensions to make   the problems more challenging. Specifically,  a new dataset Zelnik$^*$ with $q^*=12, m=50$   was created by shifting the data points of Zelnik   four times followed by   (right)multiplying the   data matrix
by  a        random Gaussian matrix. Similarly, we constructed a new dataset Donutcurves$^*$ with $q^*=12$ and $m=100$.
In comparing     spectral clustering and kernel CRL, we considered   two standard types of   the similarity  matrix $\bsbW$, one based on   Gaussian similarities, the other based on    mutual   $k$-nearest neighbor graph with  $k=20$; see   \cite{von2007tutorial} for details.
The
normalized graph Laplacian is then    $\bsbL =\bsbI - \bsbD^{-1/2} \bsbW \bsbD^{-1/2}$.
Table \ref{sim_mbar} and Table \ref{sim_mbar2} summarize the results with  each experiment   repeated $50$ times. In these tough situations,   no algorithm gives perfect clustering, but     CRL    clearly outperforms    spectral clustering and again, performing a further dimension reduction  is still possible and helpful.

\begin{table}
        \caption{\label{sim_mbar}Clustering accuracy  on two enlarged benchmarks   using \textit{Gaussian} similarity matrices.}
        \setlength\tabcolsep{2pt}
        \renewcommand{\arraystretch}{1}
        \centering
        \footnotesize

        \begin{tabular}{l c c  }

                \hline
                Dataset  & Zelnik$^*$  & Donutcurves$^*$ \\
                $(n,m,q^*)$& $(1064,50,12)$  &$(3000,100,12)$ \\
                \hline
                spectral clustering $\qquad$ & 0.70 & 0.71  \\
                CRL ($r = q$)  &0.74& 0.92 \\
                CRL ($r = 0.75q$)  &0.76& 0.94 \\
                \hline
        \end{tabular}

\end{table}

\begin{table}
        \caption{\label{sim_mbar2}Clustering accuracy  on two enlarged benchmarks  using   \textit{mutual $20$-nearest neighbor}   based   similarity matrices.}
        \centering
        \setlength\tabcolsep{2pt}
        \renewcommand{\arraystretch}{1}
        \footnotesize

        \begin{tabular}{l c c  }

                \hline
                Dataset  & Zelnik$^*$  & Donutcurves$^*$ \\
                $(n,m,q^*)$& $(1064,50,12)$  &$(3000,100,12)$ \\
                \hline
                Spectral clustering $\qquad$ & 0.68 & 0.68  \\
                CRL ($r = q$)  &0.72 & 0.89 \\
                CRL ($r = 0.75q$) &0.73& 0.91 \\
                \hline
        \end{tabular}

\end{table}


\subsection{Community detection}
We   performed systematic experiments on some community detection benchmarks.   Although CRL is not specially designed to solve this kind of problem,
it has impressive performance even   compared with some state-of-the-art methods. 

This parts uses two popular network community detection benchmarks,  the GN benchmark \citep{girvan2002community,newman2004finding} and the LFR benchmark \citep{lancichinetti2008benchmark}  to show    the performance of CRL in comparison with some  widely used community detection methods. In the GN benchmark,   $n$ nodes   are divided into $q$ homogeneous clusters of size $s$. The parameters   $z_{in}$ and $z_{out}$   control the internal degree and the external degree of each node, respectively. We set $n=1000$, $q=20$, $s=50$, $z_{in}=15$ and $z_{out} =30$, where    $z_{out}$ doubles the size of  $z_{in}$ to increase the  difficulty in clustering.
In the LFR benchmark, both the degree distribution and  community size distribution follow power laws to generate a network with more heterogeneities.  The  parameters are: $n$, the number of nodes, $d$, the average degree, $d_{\max}$, the maximum node degree, $\mu$, the mixing ratio, $\beta$, the power index of community size, $\gamma$, the power index of node degree and $c_{\min}$ ($c_{\max}$), the minimum (maximum) community size.  In our experiment, we used   $n =1000$, $d=15$, $d_{\max}=50$, $c_{\min} = 20$, $c_{\max} =50$, $\gamma=2$, $\beta=1$, and   varied the crucial parameter  $\mu$ from $0.1$ to $0.9$.
The metrics include,  in addition to  CA and RI, 
  the normalized mutual information (NMI) \citep{danon2005comparing}.  Both RI and NMI have range  $[0, 1]$, and the larger the value is, the more accurate the tested algorithm.
The comparison community detection methods include the Newman-Girvan method \citep{newman2004finding}, Hespanha's algorithm \citep{hespanha2004efficient}, the method by Dannon et al \citep{danon2006effect} and AMOS \citep{chen2016amos}.  In all algorithms, we set $q$ to be $q^*$ to remove the influence of   ad-hoc tuning schemes.
The results   on the GN benchmark are reported   in Table \ref{tab_GN}. For the LFR benchmark, we plot   the performance  of each algorithm
as a function of
the mixing parameter $\mu$ in Figure \ref{lfr}. In summary, although CRL is not particularly designed for network community detection, it does a decent job on the standard benchmarks.

\begin{table}
        \caption{\label{tab_GN}GN benchmark: clustering performance  in terms of CA, RI, NMI }
        \centering

        \begin{tabular}{l c c  c}
                \hline
                & CA  & RI & NMI \\
                \hline
                Newman-Girvan $\quad$   & 0.34 & 0.77 & 0.39\\
                Hespanha  & 0.30 & 0.91& 0.29 \\
                Danon et al.   & 0.36& 0.89 & 0.37 \\
                AMOS & 0.67 & 0.92& 0.65 \\
                CRL 
                & 0.96 & 0.98&  0.94\\
                \hline
        \end{tabular}

\end{table}

\begin{figure}[htbp]
        \centering\includegraphics[width=4in]{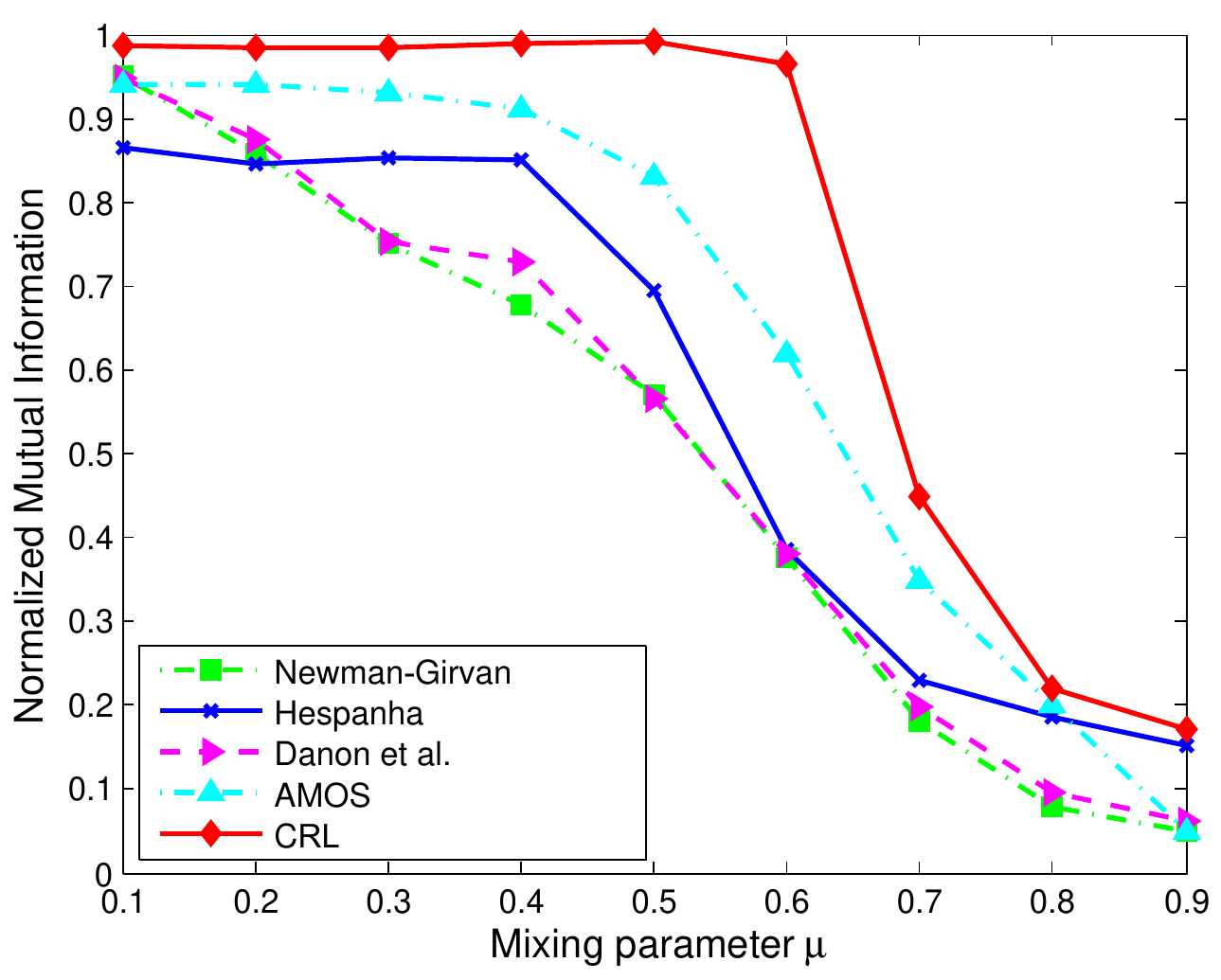}
        \caption{LFR benchmark: clustering performance of   different algorithms evaluated by NMI, as  the mixing parameter $\mu$ varies from 0.1 to 0.9. }
        \label{lfr}
\end{figure}

\subsection{Model misspecification }

This part performs simulations to test the performance of CRL and some other methods when model misspecification occurs. The $n$ observations of the responses and predictors are independently  generated according to $\bsby_i = \bsbB^{*T}   \tilde \bsbx_i + \bsb{e}_i$,  where $\bsby_i\in \mathbb R^m$, $\tilde \bsbx_i\in \mathbb R^p$, and  $\bsb{e}_i$ has standard normal entries ($\mu=0,\sigma=1$).
In the experiment,  $n=100, p = 50, m =25$, and all rows of the design   follow a multivariate normal distribution with a covariance matrix $\bsbSig=[\tau^{|i-j|}]$ where $\tau=0.2$.

To study  the effect of model misspecification, we  generate the true coefficient matrix  $\bsbB^* $ through $\bsbB^\circ$ and $\sigma_B$  as follows. First,     $\bsbB^\circ  = \bsbB_1 \bsbB_2^T$, where $\bsbB_1\in \mathbb R^{p\times r}, \bsbB_2\in \mathbb R^{m\times r}$. The entries of $ \bsbB_2$ are standard Gaussian, and each row $\bsbb_j$ ($1\le j \le p$) of $\bsbB_1$ is drawn from an equally weighted mixture distribution:  $\bsbb_j$ is zero or has an elementwise distribution    $N(k, 1)$,  $1\le k \le q-1$.     We set $q=10, r=5$, and so  $\bsbB^\circ$ has about 10\% rows being zero, and satisfies $\| \bsbB^\circ\|_{2, \mathcal C}=10$, $\mbox{rank}(\bsbB^\circ) = 5$.     Next, we obtain  $\bsbB^*\in \mathbb R^{p\times m}$, consisting of  1,250 entries,  by   adding additional noise $N(0, \sigma_B^2) $ to $\bsbB^\circ $ in an  \textit{elementwise} fashion. Clearly, when $\sigma_B=0$, the true coefficient matrix  has sparsity, row-wise equisparsity, and low rank, but the structural parsimony is easily destroyed by a nonzero       $\sigma_B$.

The methods for comparison include LASSO, group LASSO (G-LASSO), reduced rank regression (RRR), fused LASSO (F-LASSO), and CRL. All the  algorithmic parameters are set to their default values.
Our goal  in this experiment is to make a fair comparison of all methods and understand the true potential of each;  to remove the influence of various schemes for regularization parameter tuning,  we pick the estimate along the solution path that gives  the smallest validation error, evaluated  on a  large independent validation dataset with 10,000 observations.   We varied the misspecification level  $\sigma_B$  from $0$ to $0.12$ and  repeated the experiment in each setup for 20 times. Table \ref{sim_table_misspec}  reports the median  estimation error $\|\hat \bsbB - \bsbB^*\|_F^2$ (denoted by $\mbox{Err}^{(e)}$), as well as the   predictor error  $\| \bsbY_{tst}   -   \bsbX_{tst}  \hat \bsbB \|_F^2     /n_{tst} - m $ over the test dataset $(\bsbY_{tst},\bsbX_{tst} )$ with  $n_{tst}=10\mbox{,}000$   observations (denoted by $\mbox{Err}^{(p)}$).

\begin{table}
    \caption{\label{sim_table_misspec}  Performance comparison between LASSO, group LASSO (G-LASSO), reduced rank regression (RRR), fused LASSO (F-LASSO), and CRL,  based on estimation error $\mbox{(Err}^{(e)}$)  and prediction error ($\mbox{Err}^{(p)} $), when varying the model misspecification level $\sigma_B$ }
    \centering
    \setlength\tabcolsep{2pt}
    \renewcommand{\arraystretch}{1}
    \footnotesize 

    \begin{tabular}{lc  cc cc cc cc cc cc cc cc cc cc cc}
        \hline
        &&\multicolumn{2}{c}{$\sigma_{\scriptscriptstyle B}\!=\!0$}   && \multicolumn{2}{c}{$\sigma_{\scriptscriptstyle B}\!=\!0.04$}   &&
          \multicolumn{2}{c}{$\sigma_{\scriptscriptstyle B}\!=\!0.08$}&&
          \multicolumn{2}{c}{$\sigma_{\scriptscriptstyle B}\!=\!0.12$} && \multicolumn{2}{c}{$\sigma_{\scriptscriptstyle B}\!=\!0.16$} \\
        \cmidrule(lr){3-4}  \cmidrule(lr){6-7} \cmidrule(lr){9-10} \cmidrule(lr){12-13} \cmidrule(lr){15-16}
        &&{\scriptsize $\mbox{Err}^{(e)}$ } & {\scriptsize$\mbox{Err}^{(p)}$} &&{\scriptsize$\mbox{Err}^{(e)}$} & {\scriptsize$\mbox{Err}^{(p)}$}
        &&{\scriptsize$\mbox{Err}^{(e)}$} & {\scriptsize$\mbox{Err}^{(p)}$}&&{\scriptsize$\mbox{Err}^{(e)}$} & {\scriptsize$\mbox{Err}^{(p)}$}
        &&{\scriptsize$\mbox{Err}^{(e)}$} & {\scriptsize$\mbox{Err}^{(p)}$} \\
        \hline\hline
        LASSO       &&22.6&  20.9 &&22.8&  21.0 && 23.0& 21.4 && 23.6& 21.9 && 23.9& 22.1 \\
        G-LASSO     &&22.7&  21.4 &&23.0&  21.7 && 23.8& 22.4 && 25.1& 23.1 && 26.0& 24.3 \\
        \hdashline[1pt/2pt]
        RRR         &&6.58&  6.21 &&8.17&  7.79 && 13.1& 12.6 && 21.0& 20.1 && 25.3& 23.8 \\
        \hdashline[1pt/2pt]
        F-LASSO     &&22.7&  21.6 &&23.0&  21.7 && 23.1& 21.9 && 23.6& 22.3 && 24.1& 22.5 \\
        \hdashline[1pt/2pt]
        CRL         &&2.12&  2.14 &&4.15&  4.14 && 10.2& 10.1 && 20.1& 19.5 && 25.1& 23.7 \\
        \hline

        \hline
    \end{tabular}
\end{table}

First, as $\sigma_B=0$, although a total of  {10\%} of  authentic coefficients  are zero, the sparsity (or group-wise sparsity) is still inadequate  for LASSO or group LASSO to show a clear advantage over the other methods.  Fused LASSO, which imposes sparsity on \textit{successive} coefficient differences $|b_{j+1, k} - b_{j, k}|$ ($1\le  j< p, 1\le k\le m$) in addition to performing variable selection,    failed to  capture all the equi-sparsity, largely  because  it assumes that     the features  are already  grouped and ordered   to have the associated   coefficients   equal in consecutive blocks (which is however impractical in most real life applications).

According to the table, when $\sigma_B$ is large enough (say, $\sigma_B\ge 0.16$) to break the   parsimony assumptions,   all methods (unsurprisingly) yield  very  complex models with large errors,   the  saturated model being the extreme. But the low-rank based RRR, and notably CRL,  outperformed the other methods by a large margin when the large coefficient matrix  only has  \textit{approximate} low rank and equisparsity. This was    evidenced  as long as  $\sigma_B \le 0.1$ by more systematic experiments (not all shown in the table). The excellent performance of CRL  in these cases verifies the oracle-inequality type analysis   mentioned at the end of Section \ref{subsec:upperrCRL}. 

{
       \bibliography{crl}
       \bibliographystyle{apalike}
}

\end{document}